\documentclass{article}

\usepackage{microtype}
\usepackage{graphicx}
\usepackage{subcaption}
\usepackage{booktabs} 

\usepackage[normalem]{ulem}

\usepackage{hyperref}

\newcommand{\state}{\mathcal{S}}
\newcommand{\action}{\mathcal{A}}

\usepackage[accepted]{icml2026}

\usepackage{amsmath}
\usepackage{amssymb}
\usepackage{mathtools}
\usepackage{amsthm}
\usepackage{thmtools}
\usepackage{thm-restate}
\usepackage{comment}

\usepackage[capitalize,noabbrev]{cleveref}

\theoremstyle{plain}
\newtheorem{theorem}{Theorem}[section]
\newtheorem{proposition}[theorem]{Proposition}
\newtheorem{lemma}[theorem]{Lemma}
\newtheorem{corollary}[theorem]{Corollary}
\theoremstyle{definition}

\newtheorem{assumption}[theorem]{Assumption}
\theoremstyle{remark}

\newcommand{\de}{\mathrm{d}}
\newcommand{\WuL}{\widetilde{L}}
\newcommand{\WuD}{\widetilde{D}}
\newcommand{\WuH}{\widetilde{\mathcal{H}}}

\usepackage[textsize=tiny]{todonotes}

\icmltitlerunning{Impact of Connectivity on Laplacian Representations in Reinforcement Learning}

\begin{document}

\twocolumn[
  \icmltitle{Impact of Connectivity on Laplacian Representations in Reinforcement Learning}



  \icmlsetsymbol{equal}{*}

  \begin{icmlauthorlist}
    \icmlauthor{Tommaso Giorgi}{bh}
    \icmlauthor{Pierriccardo Olivieri}{polimi}
    \icmlauthor{Keyue Jiang}{ucl}
    \icmlauthor{Laura Toni}{ucl}
    \icmlauthor{Matteo Papini}{unimi}
  \end{icmlauthorlist}
  \icmlaffiliation{bh}{Baker Hughes, Florence, Italy}
  \icmlaffiliation{polimi}{Politecnico di Milano}
  \icmlaffiliation{ucl}{University College London, AI Center (UCL)}
  \icmlaffiliation{unimi}{Università degli Studi di Milano}

  \icmlcorrespondingauthor{Pierriccardo Olivieri}{pierriccardo.olivieri@polimi.it}

  \icmlkeywords{Machine Learning, Representation Learning, Value Function Approximation, Graph Laplacian}

  \vskip 0.3in
]



\printAffiliationsAndNotice{}  

\begin{abstract}
Learning compact state representations in Markov Decision Processes (MDPs) has proven crucial for addressing the curse of dimensionality in large-scale reinforcement learning (RL) problems. Existing principled approaches leverage structural priors on the MDP by constructing state representations as linear combinations of the state-graph Laplacian eigenvectors. When the transition graph is unknown or the state space is prohibitively large, the graph spectral features can be estimated directly via sample trajectories. In this work, we prove an upper bound on the approximation error of linear value function approximation under the learned spectral features. We show how this error scales with the algebraic connectivity of the state-graph, grounding the approximation quality in the topological structure of the MDP. We further bound the error introduced by the eigenvector estimation itself, leading to an end-to-end error decomposition across the representation learning pipeline. 
Additionally, we show how the common expression for the symmetrized MDP Laplacian is easy to misinterpret, and propose a more straightforward reformulation.
Our results hold for general (non-uniform) policies without any assumptions on the symmetry of the induced transition kernel.
We validate our theoretical findings with numerical simulations on gridworld environments.
\end{abstract}

\section{Introduction}
\label{sec:introduction}
At the core of reinforcement learning~\citep[RL,][]{sutton2018reinforcement}, an agent aims to incrementally learn an optimal behavior through experience by interacting with an environment.
Evaluating the agent's behavior, or policy, in terms of the expected cumulative reward is a fundamental problem in RL literature, known as policy evaluation. The objective of policy evaluation is to compute or estimate the value function induced by the agent's policy and the environment dynamics.
Standard model-based methods for policy evaluation \citep{bertsekas2025neuro} are often constrained by the size of the problem or by the specific task given to the agent, making exact solutions computationally intractable in high-dimensional or continuous state spaces.
Value function approximation addresses this issue by mapping the value function to a low-dimensional feature space defined by a representation map $\psi$, trading bias for computational tractability. 
End-to-end approaches based on nonlinear function approximation, such as deep neural networks, implicitly learn this low-dimensional embedding of the state space~%
\citep{mnihHumanlevelControlDeep2015}. In contrast, linear function approximation methods construct the value function as a linear combination of explicit state features given by a fixed map $\psi$. 
While end-to-end deep models are abundantly investigated from an empirical perspective, linear models often offer a better understanding of the foundations of RL with function approximation~\citep{agarwal2019reinforcement}. Moreover, deep learning can still play a prominent role in the preliminary learning of the feature map.

Mapping the state transitions into a graph (nodes being the states and edges
reflecting state transitions induced by a behavior policy) allows us to construct basis functions as the eigenvectors of the state-transition graph Laplacian~\citep{mahadevanProtovalueFunctionsLaplacian2007}.
The Laplacian basis is expressive, interpretable and inherits desirable properties: (i) it naturally captures the underlying topology of the environment as the extracted set of features explains hidden geometrical structures and symmetries of the MDP and (ii) it is not tied to any particular reward function. This latter property is of particular interest in the context of multi-task or unsupervised reinforcement learning, where an agent encounters different reward specifications within the same environment. 

Under this representation, the problem boils down to (i) constructing
the graph, and (ii) learning the node representation. In model-based approaches, the graph is constructed explicitly. Although this approach scales poorly with the size of the state space, most existing theoretical analyses focus on this. In this work we turn our attention to the model-free approach, where the representation is learned directly from interaction data without an explicit model of the transition graph. As proposed by \citet{wuLaplacianRLLearning2018}, this can be done via stochastic optimization, namely by optimizing the Graph Drawing Objective (GDO) introduced by~\citet{korenSpectralGraphDrawing2003}.
Subsequent works refined this objective in order to address some of its shortcomings \citep{wangBetterLaplacianRepresentation2021}, with the most recent~\citep{gomezProperLaplacianRepresentation2024} proposing the Augmented Lagrangian Laplacian Objective (ALLO). Interested readers can refer to \cref{sec:related_works} for a comprehensive review of related works.


Understanding the error of the learned representation, i.e., the quality of the approximated value function for any given downstream task and policy, is fundamental for its application to policy evaluation (and possibly, optimization) with linear function approximation. 
Both the spectral decomposition and GDO optimization phases contribute to the approximation error of the Laplacian representation. To our knowledge, there are no characterizations of the overall approximation error, which is crucial to understand the limitations of the Laplacian representation and which factors contribute the most to the error. This may help practitioners selecting the number of features, the behavior policy used to collect the data, or anticipating failure modes in poorly connected MDPs. 
In the study of Laplacian representations for RL, it is customary to assume the symmetry of the induced transition matrix for simplicity~\citep{mahadevanProtovalueFunctionsLaplacian2007}. This assumption is satisfied, for example, by deterministic decision processes with uniform policies. However, the symmetry assumption excludes most cases of practical interest. In this work, we make no assumptions on the symmetry of the dynamics, admitting general stochastic transitions and non-uniform behavior policies. In fact, while symmetry certainly helps, our analysis suggests that the quality of the approximation is fundamentally governed by the transition graph's \emph{connectivity}.

\paragraph{Our contribution.}
In this work, we consider the average reward setting~\citep{puterman2014markov} and decompose the approximation error of a learned Laplacian representation into two components: 
\begin{enumerate}
    \item The approximation error conditioned on the exact knowledge of the
Laplacian. 
 In particular, we provide an upper bound on the error introduced by the truncated spectral decomposition. We show that it scales with $\lambda_2$, the \emph{spectral gap}, corresponding to the smallest nonzero eigenvalue of the Laplacian. We highlight the link between this error and the connectivity of the underlying graph.
    \item The error induced by learning the graph features from data. In particular, we provide an upper bound on the additional approximation error originating from the estimation of the Laplacian eigenvectors by means of the Graph Drawing Objective.
\end{enumerate}
  Additionally, we propose a reformulation of the Laplacian for general (non-symmetric) MDPs that prevents some possible misunderstandings, of which we show some examples from the literature. 
  Finally, we empirically validate the impact of the MDP connectivity on the value function approximation error in simulated gridworld environments.
\section{Preliminaries}
\paragraph{Notation.} For a vector $v$ and matrices $A,D$, $\|v\|_D=\sqrt{v^\top D v}$ denotes the weighted (by $D$) $\ell_2$ (Euclidean) norm and $\|A\|_D=\max_{v\neq 0}\|Av\|_D/\|v\|_D$ the corresponding weighted operator norm; $\delta_{i,j}$ is the Kronecker delta and is equal to $1$ if $i=j$, $0$ if $i\neq j$.

\paragraph{Infinite-Horizon Average Reward MDP.}
An infinite-horizon average-reward Markov Decision Process (MDP) is defined by the tuple $\langle \state, \action, \mathcal{P}, r\rangle$, where $\state$ is a finite set of states, $\action$ is a finite set of actions, $\mathcal{P}( \cdot |s,a)$ is a transition kernel over $\state$, $r : \state \times \action \rightarrow \mathbb{R}$ is the reward function, where $r(s,a)$ is the expected reward for taking action $a$ in state $s$.
A policy $\pi$ is a mapping from states to probability distributions over actions, i.e. $\pi(a | s)$ denotes the probability of selecting action $a$ in state $s$. Given a policy $\pi$, actions are sampled as $a_t \sim \pi(\cdot|s_t)$, and the environment transitions according to $s_{t+1} \sim \mathcal{P}(\cdot | s_t, a_t)$. A policy $\pi$ induces a time-homogeneous Markov chain over the state space $\state$ with transition kernel $\mathcal{P}^{\pi}(s^{\prime} | s) := \sum_{\action} \pi(a|s)\mathcal{P}(s^{\prime}|s,a)$. We assume that, for any policy $\pi$, the induced Markov chain over $\mathcal{S}$ is irreducible and aperiodic, and thus admits a unique stationary distribution $\phi^{\pi}$. For a fixed policy $\pi$, define the long-run average reward from an initial state $s$ as
\[
\rho_{\pi}(s) \;:=\; \lim_{T\to\infty}\frac{1}{T}\,\mathbb{E}_{s_0 =s}^{\pi}\!\left[\sum_{t=0}^{T-1} r(s_t,a_t)\right].
\]
Under the irreducibility and aperiodicity assumptions, $\rho_{\pi}(s)$ is independent of the initial state. We therefore denote this state-independent quantity by~$\rho_{\pi}$, which can be written as
\[
\rho_{\pi}
\;=\;
\sum_{s\in\mathcal{S}}\phi^{\pi}(s)\sum_{a\in\mathcal{A}}\pi(a| s)\,r(s,a),
\]
where $\phi^{\pi}$ denotes the stationary distribution induced by~$\pi$.
We define the expected per-state reward $r_{\pi}(s)$ and the centered reward vector $\bar{r}_{\pi}$ as follows:
\[
r_{\pi}(s) := \sum_{a\in\mathcal{A}}\pi(a| s)\,r(s,a), \quad
\bar{r}_{\pi} \;:=\; r_{\pi} - \rho_{\pi}\mathbf{1},
\]
where $\mathbf{1}$ is the all-ones vector. The (differential) value function $v_{\pi}\in\mathbb{R}^{|\mathcal{S}|}$ is defined as any solution to the Poisson equation
\[
v_{\pi} \;=\; \bar{r}_{\pi} + \mathcal{P}^{\pi} v_{\pi}.
\]
Since $v_{\pi}$ is unique only up to an additive constant, we adopt the normalization \citep{sutton1999policy}:
\[
\langle v_{\pi},\phi\rangle \;=\; v_{\pi}^{\top}\phi \;=\; 0,
\]
which yields a unique $v_{\pi}$.
Throughout the paper we fix an arbitrary policy $\pi$. For notational simplicity, we write $P$, $\phi$, $v$, $r$ and $\bar{r}$ in place of $\mathcal{P}^{\pi}$, $\phi^{\pi}$, $v_{\pi}$, $r_{\pi}$ and $\bar{r}_{\pi}$ respectively.
\paragraph{The Graph Laplacian.}
Given a weighted and undirected graph $G = (\mathcal{V}, \mathcal{E})$ with symmetric weight matrix $W$, the weighted combinatorial graph Laplacian of $\mathcal{G}$ is defined as:
\[
L = D-W,
\]where $D$ is a diagonal matrix where the $i$-th diagonal entry is $D_{ii} = \sum_{j=1}^{|\mathcal{V}|}W_{ij}$.
The spectrum of the Laplacian matrix encodes the geometrical properties of the graph, in particular its connectivity and the presence of bottlenecks. The smallest eigenvalue of $L$ is always zero, and its multiplicity equals the number of connected components of the graph. When the graph is strongly connected, the second smallest eigenvalue $\lambda_2$, often called \emph{algebraic connectivity}, is linked to the Cheeger constant $h(G)$ of the graph \citep{chung1997spectral}, which provides a quantitative measure of how well connected the graph is. In particular, small values of $h(G)$ indicate the presence of sparse cuts that separate the graph, while larger values imply strong global connectivity. A relation between $h(G)$ and $\lambda_2$ is the Cheeger inequality~\citep{chung1997spectral}:
\begin{equation}
\label{cheeger}
 \frac{h(G)^2}{2}\leq
\lambda_2(L)\leq h(G).   
\end{equation}
\citet{chung2005laplacians} defines the Laplacian matrix of the graph induced by an ergodic Markov chain with transition probability matrix $P$ which is directed in general, as $P$ is not symmetric.
Since the definition of the Laplacian requires the weight matrix to be symmetric, \citet{chung2005laplacians} defines $W$ as follows:
\[
W = \frac{\Phi^{\frac{1}{2}} P \Phi^{-\frac{1}{2}}+ \Phi^{-\frac{1}{2}}P^{\top}\Phi^{\frac{1}{2}}}{2}.
\] 
The corresponding normalized Laplacian is defined as:
\begin{equation}
\label{def:laplacian_directed}
    \mathcal{L} = D - W =  I - \frac{\Phi^{\frac{1}{2}} P \Phi^{-\frac{1}{2}}+ \Phi^{-\frac{1}{2}}P^{\top}\Phi^{\frac{1}{2}}}{2},
\end{equation}
where $\Phi$ is the diagonal matrix where the $i$-th diagonal entry is $\Phi_{ii} = \phi_i$ and $\phi$ is the stationary distribution under $P$. Under this formulation, \citet{chung2005laplacians} shows an analog of the Cheeger inequality that we omit for brevity.\footnote{For directed graphs, $h(G)$ is defined using the notion of circulation \citep{chung2005laplacians}.}
\paragraph{The Laplacian Representation in RL.}
Among the different approaches to compute the feature map $\psi$ for a given MDP, \citet{mahadevanProtovalueFunctionsLaplacian2007} propose a spectral framework, where $\psi : \mathcal{S} \rightarrow \mathbb{R}^k$ is obtained by taking the eigenvectors $(\psi_1, \ldots,\psi_k)$ associated with the lowest $k$ eigenvalues of the Laplacian of a graph $(\mathcal{S}, \mathcal{E})$ constructed as follows: the nodes are the states of the MDP, and the edges are the state transitions induced by the MDP dynamics $\mathcal{P}$ and the agent's behavior policy $\pi$, weighted by their probability. Namely, the weight $W_{ij}$ associated with edge $(s_i, s_j)$ is $\mathcal{P}^\pi(s_j|s_i)=\sum_{a\in\mathcal{A}}\pi(a|s_i)\mathcal{P}(s_j|s_i,a)$ or just $P(s_j|s_i)$ in our compact notation.
The transition probability matrix $P$ is often assumed to be symmetric, and the Laplacian matrix reduces to:
\[
L = I-P.
\]
By the Courant-Fischer theorem~\citep[][theorem 4.2.11]{horn2012matrix} the eigenbasis $(\psi_1, \ldots,\psi_k)$ minimizes:
\begin{equation}
\begin{cases}
\label{eq:GDO}
\displaystyle \min_{\{x_i\}_{i=1}^k} \;\; \sum_{i=1}^k x_i^{\top} L x_i, \\[6pt]
x_i^{\top}x_j = \delta_{ij}, \quad 1 \le i \le j \le k.
\end{cases}
\end{equation}
For a Laplacian matrix, for any $x \in \mathbb{R}^{|\mathcal{V}|}$, its quadratic form takes the following convenient form:
\begin{equation}
    \label{ep:quadratic_form}
    x^{\top}L x = \sum_{i \in \mathcal{V}}\sum_{j \in \mathcal{V}}W_{ij}\big(x(i)-x(j)\big)^2
\end{equation}
which is related to the smoothness in the sense of $2$-Dirichlet form (the smaller, the smoother $x$ is on the graph).\footnote{The notion of smoothness of a function $f$ on a graph $\mathcal{G}$ can be expressed by the discrete $p$-Dirichlet form of $f$ on $\mathcal{G}$ \citep{shuman2013emerging}: $S_p(f) := \frac{1}{p} \sum_{i \in \mathcal{V}} \|\nabla_i f\|^p_2 = \frac{1}{p} \sum_{i \in \mathcal{V}} \left[ \sum_{j \in \mathcal{V}} W_{ij} \left( f(j) - f(i) \right)^2 \right]^{\frac{p}{2}}$.}
Given the recursive nature of the value function, the common assumption is that $v$ is smooth so that $\psi$ provides a suitable representation.
\paragraph{Graph Drawing Objective (GDO).}
The minimization problem from \cref{eq:GDO} was introduced by \citet{korenSpectralGraphDrawing2003} and is known as Graph Drawing Objective (GDO).
By leveraging \cref{ep:quadratic_form}, the GDO can be expressed as an expectation, making the optimization suitable for model-free reinforcement learning. In particular, the weights $W_{ij}$ can be interpreted as probabilities, according to which data (state pairs $i, j$) are sampled to minimize the mean squared error $\sum_{s_i,s_j\in\mathcal{S}}(x(s_i)-x(s_j))^2$.

Using this intuition, \citet{wuLaplacianRLLearning2018}, \citet{wangBetterLaplacianRepresentation2021} and \citet{gomezProperLaplacianRepresentation2024} propose different
optimization techniques. These approaches make use of deep learning and parameterize the spectral representation
$\psi^{\mathcal{\theta}} : \state \rightarrow \mathbb{R}^k$.
Namely, given a state $s$, the network is trained to output a $k$-dimensional vector representing the coordinates
of $s$ in the space spanned by the $k$ eigenvectors corresponding to the smallest eigenvalues. The objective function to be minimized is:
\[
\begin{aligned}
l(\theta) = 
&\mathbb{E}_{s \sim \phi,\; s^{\prime} \sim P(\cdot \mid s)}
\left[
\sum_{i=1}^k
\left(\psi_i^{\theta}(s) - \psi_i^{\theta}(s^{\prime})\right)^2
\right] + \mathcal{R}(\theta),
\end{aligned}
\]
where $\mathcal{R}$ is a regularizer intended to induce orthogonality, typically a relaxation of $\mathbb{E}_{s \sim \phi}\left[\psi_i^{\theta}(s)\psi_j^{\theta}(s)-\delta_{ij}\right]=0$.
For example, \citet{wuLaplacianRLLearning2018} propose:\label{eq:gdo_reg}
\begin{equation}
\begin{aligned}
        \mathcal{R}(\theta) = \beta\mathbb{E}_{s,s'\sim\rho}&\bigg[\sum_{i=1}^k\sum_{j=1}^k(\psi_i^\theta(s)\psi_j^\theta(s)-\delta_{ij}) \\&\qquad\cdot(\psi_i^\theta(s')\psi_j^\theta(s')-\delta_{ij})\bigg],
\end{aligned}
\end{equation}
where $\beta>0$ is a regularization hyperparameter.

\section{Approximation Error}
\label{sec:approx_err}
To the best of our knowledge, the graph induced by the state transitions is usually assumed to be undirected, implying P to be
symmetric. This is a limiting assumption compared to empirical RL
problems in which the graph induced by the MDP and the behavior
policy is often far from symmetric. In contrast, we adopt the following definition of the Laplacian matrix:
\begin{equation}
    \label{def:LaplacianWu}
    L = I -\frac{P + \Phi^{-1}P^{\top}\Phi}{2},
\end{equation}
which is $\Phi$-self-adjoint even when $P$ is not symmetric. This allows us to employ the usual tools of graph spectral analysis without making strong assumptions on the transition dynamics. Moreover, our definition of $L$ reflects what is actually estimated in practice with GDO-based minimization procedures, as explained thoroughly in \cref{sec:unifying}. This makes our study of the approximation error of the representation based on $L$ practically relevant.
Additionally, a similarity relation holds between $L$ and the Laplacian matrix for directed graphs from \cref{def:laplacian_directed}, allowing us to apply some of the topological intuition developed by~\citet{chung2005laplacians}.

Given the above definition of the Laplacian, in this section we derive an upper bound on the approximation error of the value function  approximated as a linear function of the first $k$ eigenvectors of $L$.
To be able to focus on the approximation error of the learned representation, we make the following assumptions, where we fix a policy and a dimension $k\le |\state|$.

\begin{assumption}[$\epsilon$-optimal GDO]\label{asm:gdo}
    We have access to an algorithm (hereafter called GDO) that outputs features $\widehat{u}_1,\dots \widehat{u}_k\in\mathbb{R}^{|\mathcal{S}|}$ such that $\widehat u_i^\top \Phi\widehat u_j=\delta_{ij}$ for all $i,j=1,\dots,k$ and
    \begin{equation*}
        \sum_{i = 1}^k \widehat{u}_i^{\top}\Phi L\widehat{u}_i - \sum_{i=1}^{k}\lambda_i < \epsilon,
    \end{equation*}
    where $\lambda_1,\dots,\lambda_k$ are the $k$ smallest eigenvalues of the Laplacian $L$ from Equation \ref{def:LaplacianWu}.
\end{assumption}
We will discuss in Section \ref{sec:unifying} why this is a reasonable characterization of the residual error $\epsilon$ of common Laplacian representation learning algorithms.

\begin{assumption}[Linear least-squares oracle]\label{asm:wls}
    We have access to a $\Phi$-weighted linear least-squares oracle that, given features $x_1,\dots x_k\in\mathbb{R}^{|\mathcal{S}|}$ (not necessarily orthonormal), returns $\widehat{v}_k\coloneqq\Pi_X^{\Phi}v$, where $X$ is the matrix having $x_1,\dots x_k$ as columns and $\Pi_X^\Phi = X(X^\top\Phi X)^{-1}X^\top\Phi$ is the $\Phi$-weighted projection onto its column space.
\end{assumption}

This latter assumption allows us to ignore the estimation error of the coefficients of the linear value function and focus on the approximation error. The weighting by $\Phi$ reflects the realistic scenario where the coefficients are learned from on-policy data.

\subsection{Main Theorem}
 We consider the learned $k$-dimensional representation $\widehat{\psi}(s) = (\widehat{u}_1(s), \ldots, \widehat{u}_k(s))$ obtained by minimizing GDO, assuming perfect orthonormality and an $\epsilon$-optimality gap. 
\begin{restatable}{theorem}{MainTheorem}
\label{thm:main}
        Let $u_1, \ldots, u_{|\mathcal{S}|}$ be the eigenvectors of $L$ associated with eigenvalues $\lambda_1 \leq \ldots \leq \lambda_{|\mathcal{S}|}$. Denote as $\widehat{u}_1, \ldots, \widehat{u}_k$ the 
        features produced by $\epsilon$-optimal GDO (Assumption \ref{asm:gdo}) and $\widehat{v}_k$ the approximation of $v$ produced by the $\Phi$-weighted linear least-squares oracle (Assumption \ref{asm:wls}) using these features. Then:
\[
\|v- \widehat{v}_k\|_{\Phi} \leq \|\bar{r}\|_{\Phi} \sqrt{\frac{1}{\lambda_2\lambda_{k+1}}} + \|v\|_{\Phi}\sqrt{\frac{2\epsilon}{\lambda_{k+1}- \lambda_k}}.
\]
\end{restatable}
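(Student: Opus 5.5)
We work in the Hilbert space $(\mathbb{R}^{|\mathcal{S}|},\langle\cdot,\cdot\rangle_\Phi)$. In this space $L$ is self-adjoint, so its eigenvectors $u_1,\dots,u_{|\mathcal{S}|}$ can be chosen $\Phi$-orthonormal, with $u_1=\mathbf{1}$ and $\lambda_1=0$. Let $U_k=\mathrm{span}(u_1,\dots,u_k)$ and $\widehat U_k=\mathrm{span}(\widehat u_1,\dots,\widehat u_k)$, and write $\Pi_k,\widehat\Pi_k$ for the corresponding $\Phi$-orthogonal projections. By Assumption~\ref{asm:wls}, $\widehat v_k=\widehat\Pi_k v$. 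The triangle inequality gives
\[
\|v-\widehat v_k\|_\Phi\le \|v-\Pi_k v\|_\Phi+\|(\Pi_k-\widehat\Pi_k)v\|_\Phi .
\]
We bound the first term by the exact spectral truncation error and the second by a Davis--Kahan type argument.

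\textbf{Truncation term.} Expand $v=\sum_i c_i u_i$ with $c_i=\langle v,u_i\rangle_\Phi$. The normalization $v^\top\phi=0$ gives $c_1=0$. Then
\[
\|v-\Pi_kv\|_\Phi^2=\sum_{i>k}c_i^2\le \frac{1}{\lambda_{k+1}}\sum_{i>k}\lambda_i c_i^2\le \frac{1}{\lambda_{k+1}}\langle v,Lv\rangle_\Phi .
\]
Next we bound $\langle v,Lv\rangle_\Phi=v^\top\Phi Lv$ in terms of $\bar r$. Since $\Phi L=\Phi-\tfrac12(\Phi P+P^\top\Phi)$, we have $v^\top\Phi Lv=v^\top\Phi(I-P)v=v^\top\Phi\bar r$ by the Poisson equation. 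Write $A=I-P$. Then $v^\top\Phi A v=\langle v,\tfrac{A+A^*}{2}v\rangle_\Phi=\langle v,Lv\rangle_\Phi$, where $A^*$ is the $\Phi$-adjoint. Because $v\perp_\Phi\mathbf{1}$, this is at least $\lambda_2\|v\|_\Phi^2$. Cauchy--Schwarz gives $\lambda_2\|v\|_\Phi^2\le\|v\|_\Phi\|\bar r\|_\Phi$, so $\|v\|_\Phi\le\|\bar r\|_\Phi/\lambda_2$. Therefore
\[
\langle v,Lv\rangle_\Phi=\langle v,\bar r\rangle_\Phi\le \|v\|_\Phi\|\bar r\|_\Phi\le \|\bar r\|_\Phi^2/\lambda_2 ,
\]
and the first term is at most $\|\bar r\|_\Phi/\sqrt{\lambda_2\lambda_{k+1}}$, as claimed.

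\textbf{Estimation term.} This is the main obstacle: we must convert the eigenvalue-sum gap $\epsilon$ into a subspace angle. Let $\widehat U$ be the matrix whose columns are the $\widehat u_i$. By $\Phi$-orthonormality, $\widehat U^\top\Phi\widehat U=I$. Write $\widehat u_i=\Pi_k\widehat u_i+(I-\Pi_k)\widehat u_i$ and set $s=\sum_i\|(I-\Pi_k)\widehat u_i\|_\Phi^2=\|(I-\Pi_k)\widehat\Pi_k\|_{F,\Phi}^2$. By Ky Fan / Courant--Fischer applied to the compressed operators, we have
\[
\sum_i\langle\widehat u_i,L\widehat u_i\rangle_\Phi\ \ge\ \mathrm{tr}(L\,\widehat\Pi_k)\ \ge\ \sum_{i\le k}\lambda_i+(\lambda_{k+1}-\lambda_k)\,s .
\]
To see this, split $\mathrm{tr}(L\widehat\Pi_k)=\mathrm{tr}(L\Pi_k\widehat\Pi_k)+\mathrm{tr}(L(I-\Pi_k)\widehat\Pi_k)$. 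The first trace is at least $\sum_{i\le k}\lambda_i-\lambda_k s$, and the second is at least $\lambda_{k+1}s$. This uses $\mathrm{tr}(\Pi_k\widehat\Pi_k)=k-s$ and $0\preceq\Pi_k\widehat\Pi_k\Pi_k\preceq\Pi_k$. It follows that $s\le\epsilon/(\lambda_{k+1}-\lambda_k)$.

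Since both projections have rank $k$, $\|\Pi_k-\widehat\Pi_k\|_{F,\Phi}^2=2s$. Hence
\[
\|(\Pi_k-\widehat\Pi_k)v\|_\Phi\le\|\Pi_k-\widehat\Pi_k\|_\Phi\|v\|_\Phi\le\sqrt{2s}\,\|v\|_\Phi\le \|v\|_\Phi\sqrt{\frac{2\epsilon}{\lambda_{k+1}-\lambda_k}} .
\]
Adding the two bounds yields the theorem. The delicate points are getting the constant $2$ through the operator-to-Frobenius step and carrying out the trace inequality carefully in the $\Phi$-inner product.
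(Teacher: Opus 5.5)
Your proof is correct and follows the paper's skeleton. It uses the same triangle inequality and the same Rayleigh-quotient step $\|v-\Pi_k v\|_\Phi^2\le v^\top\Phi L v/\lambda_{k+1}$. For the estimation term it runs the same Davis--Kahan computation as the Graph Drawing Lemma: your trace bookkeeping ($\mathrm{tr}(\Pi_k\widehat\Pi_k)=k-s$, $0\preceq\Pi_k\widehat\Pi_k\Pi_k\preceq\Pi_k$) is what the paper does in the proof of Corollary~\ref{corollary:main_thm}, carried out directly in the $\Phi$-geometry instead of passing to $A=\Phi^{1/2}L\Phi^{-1/2}$ and back. (Note that $\sum_i\langle\widehat u_i,L\widehat u_i\rangle_\Phi=\mathrm{tr}(L\widehat\Pi_k)$ is an equality, not just $\ge$.)

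The genuine difference is how you bound the Dirichlet energy $v^\top\Phi L v$. The paper proves that $I-P$ and $I-R$ are invertible on $\mathcal V=\{x:\phi^\top x=0\}$ and writes $v^\top\Phi Lv=\langle (I-P)^{-1}\bar r,\bar r\rangle_\Phi$. It then imports the matrix inequality $\mathrm{sym}(A^{-1})\preceq(\mathrm{sym}\,A)^{-1}$ to reach $\bar r^\top\Phi L^{-1}\bar r\le\|\bar r\|_\Phi^2/\lambda_2$. You instead combine the Poincar\'e inequality $\lambda_2\|v\|_\Phi^2\le\langle v,Lv\rangle_\Phi$ (valid since $v\perp_\Phi\mathbf 1$) with plain Cauchy--Schwarz on $\langle v,Lv\rangle_\Phi=\langle v,\bar r\rangle_\Phi$. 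This avoids restricted inverses and the external lemma, and gives the same final constant.

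The paper's route also exhibits the sharper intermediate $\bar r^\top\Phi L^{-1}\bar r$. This can be much smaller than $\|\bar r\|_\Phi^2/\lambda_2$ when $\bar r$ puts little weight on low-frequency eigenvectors. Your framework recovers it at no cost by applying Cauchy--Schwarz in the $L$-weighted inner product on $\mathcal V$: $\langle v,\bar r\rangle_\Phi\le\langle v,Lv\rangle_\Phi^{1/2}\,\langle \bar r,L^{-1}\bar r\rangle_\Phi^{1/2}$, which yields $\langle v,Lv\rangle_\Phi\le\bar r^\top\Phi L^{-1}\bar r$.

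Finally, the factor $2$ you call delicate is actually slack, in your proof and the paper's alike. For equal-rank projections, Lemma~\ref{lemma:projector_distance} gives $\|\Pi_k-\widehat\Pi_k\|_\Phi=\|(I-\Pi_k)\widehat\Pi_k\|_\Phi\le\|(I-\Pi_k)\widehat\Pi_k\|_{F,\Phi}=\sqrt{s}$. So the $2$ under the square root can be dropped.
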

The bound consists of two terms: (i) an error arising from the truncation up to $k < |\state|$ eigenvectors, denoted as \emph{truncation error} and (ii) an error accounting for the representation learning procedure, denoted as \emph{feature estimation error} in the following.
The former:
\[
\|\bar{r}\|_{\Phi} \sqrt{\frac{1}{\lambda_2\lambda_{k+1}}},
\]
highlights an explicit dependence on the spectral gap $\lambda_2$. We note that given the irreducibility assumption, the resulting graph is connected thus $\lambda_2$ is strictly greater than $0$. \\
The latter:
\[
\|v\|_{\Phi}\sqrt{\frac{2\epsilon}{\lambda_{k+1}- \lambda_k}}
\]
scales with the residual error $\epsilon$ of GDO and depends on the gap between the largest included and the smallest discarded eigenvalues of $L$. The two terms are derived separately in the following subsections.

\subsection{Truncation Error}
Using a truncated Laplacian eigenbasis yields a low-dimensional representation that makes policy estimation feasible in large-scale environments. However, this dimensionality reduction introduces a source of error, the truncation error.

We provide a bound on this quantity described by the following lemma:
\begin{restatable}{lemma}{ApproximationError}
\label{lemma:approx_error}
Let $v_k$ be the approximation of $v$ produced by the $\Phi$-weighted linear least-squares oracle (Assumption \ref{asm:wls})
using the (exact) $k$ eigenvectors of $L$ associated with the $k$ smallest eigenvalues. Then:
\[
\|v - v_k\|_{\Phi}^2 \leq \frac{\|\bar{r}\|^2_{\Phi}}{\lambda_2\lambda_{k+1}}
\]
where $0=\lambda_1 \leq \lambda_2 \leq \ldots \leq \lambda_{|\state|}$.
\end{restatable}
We note that, in contrast with \cref{thm:main}, here we bound the error with respect to $v_k$ which is the approximation using the true Laplacian eigenvectors.
The bounding factor depends on the spectral gap $\lambda_2$, also known as Fiedler eigenvalue or algebraic connectivity, and becomes smaller as we increase the dimensionality of the representation. The dependence on $\lambda_2$ shows how well-connected MDPs provide representations which yield lower error and can be interpreted in light of the Cheeger inequality (Eq. \ref{cheeger}).
The numerator captures the magnitude of the centered reward vector under the stationary distribution. In particular, policies that concentrate probability mass on states where the reward exhibits large deviations from its mean lead to larger error.
\subsection{Estimation Error}
GDO optimization provides a model-free way to estimate the Laplacian eigenvectors associated with the $k$ lowest eigenvalues directly from interaction data. This estimation introduces a source of error, the estimation error. Linear function approximation relies on orthogonal projection operators onto a certain basis. We prove a bound on the spectral norm of the difference between two linear operators: $\Pi_{\Psi}$ the projector onto the exact eigenbasis and (ii) $\Pi_{\Tilde{\psi}}$ the projector onto the approximated eigenbasis. We first derive a general result that can be applied to any Laplacian operator:

\begin{restatable}{lemma}{ProjectionError}[Graph Drawing Lemma]
\label{prop:gdo_vs_projection_error}
Let $u_1,\ldots,u_{|\mathcal{S}|}$ be the eigenvectors of a p.s.d. symmetric matrix $A$ associated with eigenvalues
$0=\lambda_1(A) < \lambda_2(A) \leq \cdots \leq \lambda_{|\mathcal{S}|}(A)$.
Let $\tilde u_1,\ldots,\tilde u_k \in \mathbb{R}^{|\mathcal{S}|}$ be orthonormal vectors satisfying
$\tilde u_i^\top \tilde u_j = \delta_{ij}$ for all $i,j \in \{1,\ldots,k\}$.

Define $\Psi := (u_1,\ldots,u_k) \in \mathbb{R}^{|\mathcal{S}|\times k}$ and
$\tilde \Psi := (\tilde u_1,\ldots,\tilde u_k) \in \mathbb{R}^{|\mathcal{S}|\times k}$,
and let $\Pi_\Psi$ and $\Pi_{\tilde \Psi}$ denote the orthogonal projection matrices onto
$\mathrm{span}(\Psi)$ and $\mathrm{span}(\tilde \Psi)$, respectively.

Suppose there exists $\epsilon > 0$ such that
\[
\sum_{i=1}^k \tilde u_i^\top A \tilde u_i - \sum_{i=1}^k \lambda_i \;<\; \epsilon.
\]
Then,
\[
\|\Pi_\Psi - \Pi_{\tilde \Psi}\|_2
\;<\;
\sqrt{\frac{2\epsilon}{\lambda_{k+1}(A)-\lambda_k(A)}}.
\]
\end{restatable}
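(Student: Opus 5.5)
We write the GDO suboptimality as a trace and compare it with the exact minimum. Concretely, we show that the excess trace is bounded below by the eigengap times $\|\Pi_\Psi-\Pi_{\tilde\Psi}\|_F^2/2$. Since the Frobenius norm dominates the spectral norm, this gives the claim.

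Set $\Pi:=\Pi_{\tilde\Psi}=\tilde\Psi\tilde\Psi^\top$, so that $\sum_{i=1}^k\tilde u_i^\top A\tilde u_i=\mathrm{tr}(A\Pi)$. Expand in the eigenbasis of $A$ and define weights $w_j:=u_j^\top\Pi u_j=\|\tilde\Psi^\top u_j\|^2$. These satisfy $0\le w_j\le 1$ and $\sum_{j=1}^{|\mathcal S|} w_j=\mathrm{tr}(\Pi)=k$. Then
\[
\mathrm{tr}(A\Pi)-\sum_{i=1}^k\lambda_i=\sum_{j\le k}\lambda_j(w_j-1)+\sum_{j>k}\lambda_j w_j .
\]
The next step is the standard Ky Fan-type rearrangement. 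Bound $\lambda_j\le\lambda_k$ for $j\le k$, using $w_j-1\le 0$, and $\lambda_j\ge\lambda_{k+1}$ for $j>k$. The constraint $\sum_j w_j=k$ gives $\sum_{j>k}w_j=\sum_{j\le k}(1-w_j)=:m$. Together these yield
\[
\epsilon>\mathrm{tr}(A\Pi)-\sum_{i\le k}\lambda_i\ge(\lambda_{k+1}-\lambda_k)\,m .
\]

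It remains to relate $m$ to the projector distance. We have $m=k-\mathrm{tr}(\Pi_\Psi\Pi)$, because $\sum_{j\le k}w_j=\mathrm{tr}(\Pi_\Psi\Pi)$. For two rank-$k$ orthogonal projectors,
\[
\|\Pi_\Psi-\Pi\|_F^2=\mathrm{tr}(\Pi_\Psi)+\mathrm{tr}(\Pi)-2\,\mathrm{tr}(\Pi_\Psi\Pi)=2m .
\]
Hence
\[
\|\Pi_\Psi-\Pi\|_2^2\le\|\Pi_\Psi-\Pi\|_F^2=2m<\frac{2\epsilon}{\lambda_{k+1}-\lambda_k},
\]
and taking square roots gives the statement.

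The main obstacle is the case $\lambda_{k+1}=\lambda_k$: the right-hand side is then infinite or undefined, so we implicitly assume $\lambda_{k+1}>\lambda_k$ (otherwise the statement is vacuous). The strict inequality survives because $\epsilon$ strictly exceeds the excess. Two points need care. First, $\Pi_\Psi$ is well defined only when the eigengap is positive. Second, the weight bookkeeping requires $w_j\le1$, which holds because $\Pi$ is an orthogonal projector. Sharpening the constant, for example using the $\sin\Theta$ spectral norm, $\|\Pi_\Psi-\Pi\|_2=\sin\theta_{\max}$, with $\sin^2\theta_{\max}\le m$, would remove the factor $2$, but it is not needed.
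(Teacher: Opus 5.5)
Your proof is correct and is essentially the paper's argument. Your $m=\sum_{j>k}w_j=k-\|\Psi^\top\tilde\Psi\|_F^2$ is exactly the paper's $\sum_i\|\tilde u_i^{\bot}\|^2$, your bound $w_j\le 1$ for $j\le k$ is the paper's row-norm bound on $M=\Psi^\top\tilde\Psi$, and your identity $\|\Pi_\Psi-\Pi\|_F^2=2m$ is the projector/principal-angle relation the paper cites before passing to the spectral norm. The only difference is bookkeeping: your diagonal weights $w_j$ replace the paper's parallel/perpendicular split and proof by contradiction with a direct rearrangement (the same one the paper uses in proving Corollary~\ref{corollary:main_thm}), and your remark that $\|\Pi_\Psi-\Pi\|_2^2=\sin^2\theta_{\max}\le m$ would indeed remove the factor $2$.
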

The result follows from a Davis--Kahan-type sin$\Theta$ argument \citep{davis1970rotation} and shows a dependence on the gap between $\lambda_{k+1}$ (the first discarded eigenvalue) and $\lambda_k$. Although our result does not provide theoretical guarantees that this difference is strictly positive, a recent work \citet{christoffersen2024eigenvalue} shows that with high probability this gap is nonzero for the Laplacian matrix of random Erdős-Rényi graphs. The case where this difference is zero signals the presence of an eigenvalue with multiplicity strictly greater than one which makes the spectral description structurally incomplete and causes the bound to diverge. Nevertheless, we provide theoretical guarantees (\cref{corollary:main_thm}) for this scenario together with an extended discussion on the matter in Appendix \ref{app:approx_error}.
Additionally $\lambda_k$ is related to the $k$-way expansion constant of the graph which measures the cost of the best partition of the nodes into $k$ pieces via a Cheeger-like inequality \citep{lee2014multiway}. Small values of $\lambda_k$ are associated with a graph that can be cheaply cut into $k$ separated groups, therefore approximately $k$ disconnected clusters. The eigengap $\lambda_{k+1}-\lambda_k$ then quantifies how well the graph decomposes into $k$ clusters as opposed to $k+1$.

We can apply the Graph Drawing Lemma to our case as follows.

\begin{restatable}{lemma}{gdlphi}
\label{lemma:gdl2}
    Let $u_1,\dots,u_{|\mathcal{S}|}$ be the eigenvectors of $L$ associated with eigenvalues $0=\lambda_1\le\lambda_2\le\cdots\le\lambda_{|\mathcal{S}|}$. Let $\widehat{u}_1,\dots,\widehat{u}_k\in\mathbb{R}^{|\mathcal{S}|}$
    be their estimates produced by $\epsilon$-optimal GDO (Assumption \ref{asm:gdo}).
    Define $\Psi := (u_1,\ldots,u_k) \in \mathbb{R}^{|\mathcal{S}|\times k}$ and
    $\widehat\Psi := (\widehat u_1,\ldots,\widehat u_k) \in \mathbb{R}^{|\mathcal{S}|\times k}$.
Then,
    \begin{equation*}
        \|\Pi_{\Psi}^\Phi - \Pi_{\widehat\Psi}^\Phi\|_{\Phi} < \sqrt{\frac{2\epsilon}{\lambda_{k+1}-\lambda_k}}.
    \end{equation*}
\end{restatable}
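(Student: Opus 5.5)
The plan is to reduce Lemma~\ref{lemma:gdl2} to the Graph Drawing Lemma (Lemma~\ref{prop:gdo_vs_projection_error}) by a change of variables that turns the $\Phi$-inner product into the Euclidean one. Set $A := \Phi^{1/2} L \Phi^{-1/2}$. From \cref{def:LaplacianWu},
\[
A = I - \tfrac{1}{2}\bigl(\Phi^{1/2}P\Phi^{-1/2} + \Phi^{-1/2}P^\top\Phi^{1/2}\bigr),
\]
which is exactly the Chung Laplacian $\mathcal{L}$ of \cref{def:laplacian_directed}. So $A$ is symmetric and p.s.d. Its eigenvalues are those of $L$, and $\lambda_1=0$ with eigenvector $\Phi^{1/2}\mathbf{1}$. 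Irreducibility gives $\lambda_2>0$.

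Next, I would transfer the features. Define $\tilde u_i := \Phi^{1/2}\widehat u_i$ and $w_i := \Phi^{1/2}u_i$. Then $\tilde u_i^\top\tilde u_j = \widehat u_i^\top\Phi\widehat u_j = \delta_{ij}$. The $w_i$ are eigenvectors of $A$, orthonormal after taking $u_i$ $\Phi$-orthonormal, which is possible since $L$ is $\Phi$-self-adjoint. The objective transfers as $\tilde u_i^\top A\tilde u_i = \widehat u_i^\top \Phi L \widehat u_i$. Hence Assumption~\ref{asm:gdo} is exactly the hypothesis of the Graph Drawing Lemma for $A$ and the $\tilde u_i$. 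The lemma then gives
\[
\|\Pi_{\Phi^{1/2}\Psi} - \Pi_{\Phi^{1/2}\widehat\Psi}\|_2 < \sqrt{2\epsilon/(\lambda_{k+1}-\lambda_k)}.
\]

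Finally, I would relate these Euclidean projectors back to the weighted ones. For any full-rank $X$,
\[
\Phi^{1/2}\Pi^\Phi_X\Phi^{-1/2} = \Phi^{1/2}X(X^\top\Phi X)^{-1}X^\top\Phi^{1/2} = \Pi_{\Phi^{1/2}X}.
\]
For any matrix $M$, $\|M\|_\Phi = \|\Phi^{1/2}M\Phi^{-1/2}\|_2$, which follows by substituting $v=\Phi^{-1/2}w$ in the definition. Applying this to $M=\Pi^\Phi_\Psi-\Pi^\Phi_{\widehat\Psi}$ yields the claim.

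The main obstacle is bookkeeping rather than substance. One point is that the Graph Drawing Lemma assumes $\lambda_1<\lambda_2$ strictly, which holds by irreducibility. The other is checking that the $\Phi$-orthonormal normalization of the $u_i$ matches the orthonormal eigenvectors required for $A$. Both are routine, and the conjugation identity is the key step.
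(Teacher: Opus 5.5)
Your proposal is correct and follows essentially the same route as the paper. Both arguments conjugate by $\Phi^{1/2}$ to obtain the symmetric matrix $A=\Phi^{1/2}L\Phi^{-1/2}$, apply the Graph Drawing Lemma to the vectors $\Phi^{1/2}\widehat u_i$, and pull back via $\|\Delta\|_\Phi=\|\Phi^{1/2}\Delta\Phi^{-1/2}\|_2$. The only cosmetic difference is that you use the general identity $\Phi^{1/2}\Pi_X^\Phi\Phi^{-1/2}=\Pi_{\Phi^{1/2}X}$, whereas the paper relies on orthonormal columns to write $\Pi_\Psi^\Phi=\Psi\Psi^\top\Phi$; your version makes the normalization of the $u_i$ irrelevant for this step.
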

The upper bound captures the error introduced by using $\widehat{u}_1, \ldots \widehat{u}_k$ instead of $u_1, \ldots, u_k$ in representing the value function, in terms of the residual error $\epsilon$ of GDO and the gap between the smallest discarded and the largest included eigenvalues of $L$, which is typically nonzero for the reasons discussed above. 

To the best of our knowledge, the results presented in this section are the first theoretical guarantees on the accuracy of GDO.

\section{Remarks on the Definition of the Laplacian}\label{sec:unifying}

\citet{wuLaplacianRLLearning2018} provided a very general definition of the Laplacian operator for RL. They only assumed the state space $\mathcal{S}$ and the stationary distribution of interest $\phi$ to form a measure space, a setting that accommodates discrete and continuous MDPs. They developed their theory in the Hilbert space $\WuH$ of $\phi$-square-integrable functions, and defined the Laplacian as a Hilbert-Schmidt integral operator. When $\mathcal{S}$ is finite, say $|\mathcal{S}|=d$, $\WuH$ is $\mathbb{R}^d$ equipped with the inner product:
\begin{equation}
    \langle f, g \rangle_{\WuH} = \sum_{s\in\mathcal{S}} f(s)g(s)\phi(s),
\end{equation}
and any linear operator $\widetilde{K}:\WuH\to\WuH$ acts on $f\in\WuH$ as follows:
\begin{equation}
    \widetilde{K}f (s) = \sum_{s'\in\mathcal{S}}\widetilde{K}(s,s')f(s')\phi(s'),
\end{equation}
    where $\widetilde{K}:\mathcal{S}\times\mathcal{S}\to\mathbb{R}$ (overloading the notation) is their kernel function. We mark these operators with a tilde to emphasize they are \emph{not} linear operators on $\mathbb{R}^d$.\footnote{They could be interpreted as such by incorporating $\phi$ in their kernel function, but this would be misleading.} Indeed, the incautious reader of \cite{wuLaplacianRLLearning2018} may overlook this subtle but important aspect and interpret inner products as Euclidean dot products and linear operators as matrices---that is, essentially missing the weighting by $\phi$ in dot products. For example, ~\citet{TouatiRO23} and~\citet{gomezProperLaplacianRepresentation2024} report expressions for the general Laplacian that are incorrect unless one makes strong assumptions on the transition matrix (see App. \ref{app:unifying} for further details). Fortunately, these oversights do not affect their main results (these works mainly focus on the symmetric case anyway, and~\citet{gomezProperLaplacianRepresentation2024} include a correct general formulation in the Appendix), but they serve as examples of how the abstract formulation can be misleading.

To avoid any further confusion, let us go back to the Laplacian operator as originally defined by \citet{wuLaplacianRLLearning2018}:\footnote{Note that the identity $\widetilde{I}$ of $\WuH$ is also not the same as the identity $I$ of $\mathbb{R}^d$.}
\begin{equation}
    \WuL f = (\widetilde{I} - \WuD)f,
\end{equation}
where
\begin{equation}
    \WuD(s,s') = \frac{P(s'|s)}{2\phi(s')}+\frac{P(s|s')}{2\phi(s)}.
\end{equation}

We argue that our expression for the Laplacian (Eq. \ref{def:LaplacianWu}), although less general, is much more usable, as it is designed for Euclidean spaces with the canonical notions of inner product and linear operators.

The equivalence between our definition and that of \citet{wuLaplacianRLLearning2018} is established by the following:\footnote{The results of this section are just a revisitation of~\citep{wuLaplacianRLLearning2018} under our reformulation of the Laplacian.}

\begin{proposition}\label{prop:equivalence}
    When $\mathcal{S}$ is finite, for any pair of functions $f,g:\mathcal{S}\to\mathbb{R}$,
    \begin{align*}
       \langle f, g\rangle_{\WuH} = f^\top\Phi g &&\text{and} &&\langle f, \WuL g\rangle_{\WuH} = f^\top\Phi L g,
    \end{align*}
    where $L$ is defined in Equation \eqref{def:LaplacianWu}.
\end{proposition}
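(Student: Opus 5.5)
The proof is a direct computation in coordinates. I would first expand the definitions of $\WuH$ and of the integral operators, and then compare the result entrywise with the matrix $\Phi L$ built from Equation \eqref{def:LaplacianWu}.

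\textbf{First identity.} This one is immediate: $\langle f,g\rangle_{\WuH}=\sum_s f(s)g(s)\phi(s)=f^\top\Phi g$, since $\Phi$ is diagonal with entries $\phi(s)$.

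\textbf{Identify the kernel of $\widetilde I$.} The identity of $\WuH$ must satisfy $\widetilde I f(s)=\sum_{s'}\widetilde I(s,s')f(s')\phi(s')=f(s)$ for every $f$. This forces the kernel $\widetilde I(s,s')=\delta_{s,s'}/\phi(s)$. As a matrix acting on $f$, $\widetilde I$ is therefore the ordinary identity $I$; what differs from $I$ is its kernel.

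\textbf{Compute the action of $\WuD$.} Plugging the kernel into the definition gives
\begin{equation*}
\WuD g(s)=\sum_{s'}\Big(\frac{P(s'|s)}{2\phi(s')}+\frac{P(s|s')}{2\phi(s)}\Big)g(s')\phi(s').
\end{equation*}
In the first term the factor $\phi(s')$ cancels, leaving $\tfrac12\sum_{s'}P(s'|s)g(s')=\tfrac12(Pg)(s)$. The second term equals $\tfrac{1}{2\phi(s)}\sum_{s'}P(s|s')\phi(s')g(s')$. Since $P(s|s')=(P^\top)_{s,s'}$, this is $\tfrac12(\Phi^{-1}P^\top\Phi g)(s)$. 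Hence, as a map on $\mathbb{R}^d$,
\begin{equation*}
\WuD g=\frac{P+\Phi^{-1}P^\top\Phi}{2}\,g,
\qquad
\WuL g=(\widetilde I-\WuD)g=Lg.
\end{equation*}

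\textbf{Conclude.} Applying the first identity with $g$ replaced by $\WuL g=Lg$ gives $\langle f,\WuL g\rangle_{\WuH}=f^\top\Phi Lg$.

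The only step that needs care is the bookkeeping in the third step: tracking where $\phi(s')$ cancels, and recognizing $P(s|s')$ as the transpose entry (rows of $P$ index the current state). Getting either wrong would produce the erroneous expressions the section warns about. Everything else is routine.
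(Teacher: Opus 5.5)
Your proof is correct, but it takes a more direct route than the paper's. The paper proves a general measure-theoretic version (Lemma~\ref{lem:equivalence}) with a reference measure $\mu$. There, $L$ is defined only implicitly through $\Phi L=\Phi-K$ on $\mathcal{L}^2(\mathcal{S},\mu)$. The identity is established in weak form: $\langle f,\WuL[g]\rangle_{\WuH}$ is expanded with Fubini and the Radon--Nikodym chain rule until it matches $\langle f,(\Phi-K)g\rangle$. Only then does the paper specialize to the counting measure, where $K=(\Phi P+P^\top\Phi)/2$ and hence $L=I-\Phi^{-1}K$.

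You instead compute $\WuD g$ as a vector and obtain the stronger operator-level identity $\WuL g=Lg$. The bilinear statement then follows from the first identity. The bookkeeping is the same in both. Your cancellation of $\phi(s')$ in the first term is the paper's term $a$. The factor $1/\phi(s)$ that survives in your second term is exactly what the outer weight $\phi(s)$ of the $\WuH$ inner product absorbs in the paper's term $b$. That is why the paper sees $P^\top\Phi$ inside $K$ rather than $\Phi^{-1}P^\top\Phi$.

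Your version is shorter and needs no measure theory. It also makes explicit that in the finite case $\WuL$ and $L$ are the same map on functions and differ only in how their kernels are written, which is precisely the pitfall the section describes. The paper's weak formulation is what carries over to countable or continuous state spaces. There the two operators live on different function spaces, and comparing bilinear forms on $\mathcal{H}\cap\WuH$ is the natural statement. Your tacit use of $\phi(s)>0$ is justified by irreducibility and is already required for $\WuD$ to be defined.
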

In Appendix~\ref{app:unifying} we prove a more general version of Proposition~\ref{prop:equivalence} that holds under a generic reference measure, accommodating both discrete (not necessarily finite) and continuous state spaces. 

We have established an equivalence of sort between $\WuL$ and $L$. For our purposes, the scope of this equivalence lies entirely in the computation and estimation of Laplacian eigenvectors. Continuing to follow \citet{wuLaplacianRLLearning2018}, we lay down the following optimization problem
\begin{equation}\label{prog:1}
    \begin{aligned}
        &\min\limits_{x_1,\dots,x_k} &&\sum_{i=1}^k \langle x_i, \widetilde{L}x_i\rangle_{\WuH} \\
        &\mathrm{s. t.} &&\langle x_i, x_j \rangle_{\WuH} = \delta_{ij}, &&i,j\in[k],
    \end{aligned}
\end{equation}
which Proposition \ref{prop:equivalence} allows us to rewrite as
\begin{equation}\label{prog:2}
    \begin{aligned}
        &\min\limits_{x_1,\dots,x_k} &&\sum_{i=1}^k x_i^\top \Phi L x_i \\
        &\mathrm{s. t.} &&x_i^\top\Phi x_j= \delta_{ij}, &&i,j\in[k],
    \end{aligned}
\end{equation}
Applying a change of variables and the notion of matrix similarity (see App. \ref{app:unifying} for details), one obtains the following:
\begin{restatable}{proposition}{equivalence}\label{prop:eigvec}
    The optimal value of Program \eqref{prog:2} is $\sum_{i=1}^k\lambda_i$, where $\lambda_1,\dots,\lambda_k$ are the $k$ smallest eigenvalues of $L$. Moreover, the solutions $x_1^*,\dots,x_k^*$ are the first $k$ eigenvectors of $L$ according to the same ordering.  
\end{restatable}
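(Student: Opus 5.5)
The plan is to reduce Program \eqref{prog:2} to the standard Euclidean trace-minimization problem \eqref{eq:GDO} for a symmetric matrix similar to $L$, and then invoke Courant--Fischer (Ky Fan).

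\textbf{Change of variables.} Set $y_i := \Phi^{1/2} x_i$, which is well defined and invertible because $\phi>0$ under the irreducibility assumption. The constraint becomes
\[
x_i^\top \Phi x_j = y_i^\top y_j = \delta_{ij}.
\]
The objective becomes
\[
x_i^\top \Phi L x_i = y_i^\top \Phi^{1/2} L \Phi^{-1/2} y_i =: y_i^\top S y_i .
\]
Expanding \eqref{def:LaplacianWu} gives
\[
S = I - \tfrac12\bigl(\Phi^{1/2}P\Phi^{-1/2} + \Phi^{-1/2}P^\top\Phi^{1/2}\bigr),
\]
which is exactly the directed normalized Laplacian $\mathcal{L}$ of \eqref{def:laplacian_directed}. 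This matrix is manifestly symmetric. Program \eqref{prog:2} is therefore equivalent, via the bijection $x\mapsto \Phi^{1/2}x$, to minimizing $\sum_i y_i^\top S y_i$ over Euclidean-orthonormal $y_1,\dots,y_k$.

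\textbf{Spectrum transfer.} Since $S=\Phi^{1/2}L\Phi^{-1/2}$, the matrices $S$ and $L$ are similar and so have the same eigenvalues. Moreover, $y$ is an eigenvector of $S$ with eigenvalue $\lambda$ if and only if $x=\Phi^{-1/2}y$ is an eigenvector of $L$ with eigenvalue $\lambda$. Because $S$ is symmetric, $L$ is diagonalizable with real eigenvalues, and its eigenvectors can be chosen $\Phi$-orthonormal. This also justifies the ordering $\lambda_1\le\dots\le\lambda_{|\mathcal S|}$ used throughout the paper.

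\textbf{Optimization.} By the Courant--Fischer / Ky Fan trace theorem, already invoked for \eqref{eq:GDO}, the minimum of $\sum_{i=1}^k y_i^\top S y_i$ over orthonormal $k$-frames equals $\sum_{i=1}^k\lambda_i$. The minimum is attained by the eigenvectors of $S$ for its $k$ smallest eigenvalues. Mapping back through $x_i^*=\Phi^{-1/2}y_i^*$ yields the optimal value $\sum_{i=1}^k\lambda_i$ for Program \eqref{prog:2}, with optimizers equal to the first $k$ eigenvectors of $L$.

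\textbf{Main obstacle.} The only delicate point is the precise meaning of ``the solutions are the eigenvectors''. Because the objective is invariant under orthogonal rotations within the optimal subspace, minimizers are unique only up to such rotations. They are also non-unique whenever $\lambda_k=\lambda_{k+1}$. I would state the claim as follows: $(x_1^*,\dots,x_k^*)$ given by the first $k$ $\Phi$-orthonormal eigenvectors is a minimizer. Conversely, any minimizer spans an invariant subspace associated with the $k$ smallest eigenvalues, which is unique when $\lambda_k<\lambda_{k+1}$. This matches the interpretation of the statement.
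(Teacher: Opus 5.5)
Your proposal is correct and follows the paper's proof essentially verbatim. The change of variables $y_i=\Phi^{1/2}x_i$ turns Program~\eqref{prog:2} into Euclidean trace minimization for the symmetric matrix $\Phi^{1/2}L\Phi^{-1/2}$, which is similar to $L$; Courant--Fischer/Ky Fan and the inverse change of variables then give the result. Your caveat is also correct and is left implicit by the paper: minimizers are determined only up to rotation within the optimal subspace, and that subspace is itself not unique when $\lambda_k=\lambda_{k+1}$.
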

This means that one can forget about the eigenfunctions of $\WuL$ in $\WuH$ and just find the eigenvectors of the matrix $L$ from Equation \eqref{def:LaplacianWu}. The obtained representation is the same.
However, Program \eqref{prog:2} also allows to re-derive the GDO. From Equation \eqref{ep:quadratic_form} and the properties of $\phi$, we obtain the following program, equivalent to the previous two:
\begin{equation*}
    \begin{aligned}
        &\min\limits_{x_1,\dots,x_k} &&\mathbb{E}_{s\sim\phi, s'\sim P(\cdot|s)}\left[\sum_{i=1}^k(x_i(s)-x_i(s'))^2\right] \\
        &\mathrm{s. t.} &&\mathbb{E}_{s\sim\phi}\left[x_i(s)x_j(s)-\delta_{ij}\right] = 0, &&i,j\in[k],
    \end{aligned}
\end{equation*}
which is indeed the starting point of the GDO by \citet{wuLaplacianRLLearning2018} and its subsequent developments. This, together with Proposition \ref{prop:eigvec}, also confirms that the error $\epsilon$ from Theorem \ref{thm:main} is the correct characterization of the residual error of GDO.

\section{Related Works}
\label{sec:related_works}
Motivated by challenges such as generalization, sample efficiency, and scalability, representation learning has been widely used in reinforcement learning (RL), as auxiliary representations can compress large state spaces and mitigate the curse of dimensionality~\citep{agarwalFLAMBEStructuralComplexity2020} to effectively scale RL methods to real-world applications. This facilitates the generality of RL in various downstream tasks. Along this line, many existing techniques lean on self-supervised or supervised learning. For instance, \citet{oord2019representationlearningcontrastivepredictive} have adapted contrastive learning techniques to RL settings \citep{srinivasCURLContrastiveUnsupervised2020, eysenbachContrastiveLearningGoalConditioned2023}, where they learn the representation by distinguishing positive and negative samples to guide learning. Other methods may instead use an implicit representation through end-to-end methods to directly estimate the value function~\citep{mnihHumanlevelControlDeep2015, chung2018twotimescale}.
\begin{figure}
    \centering
    \includegraphics[width=\linewidth]{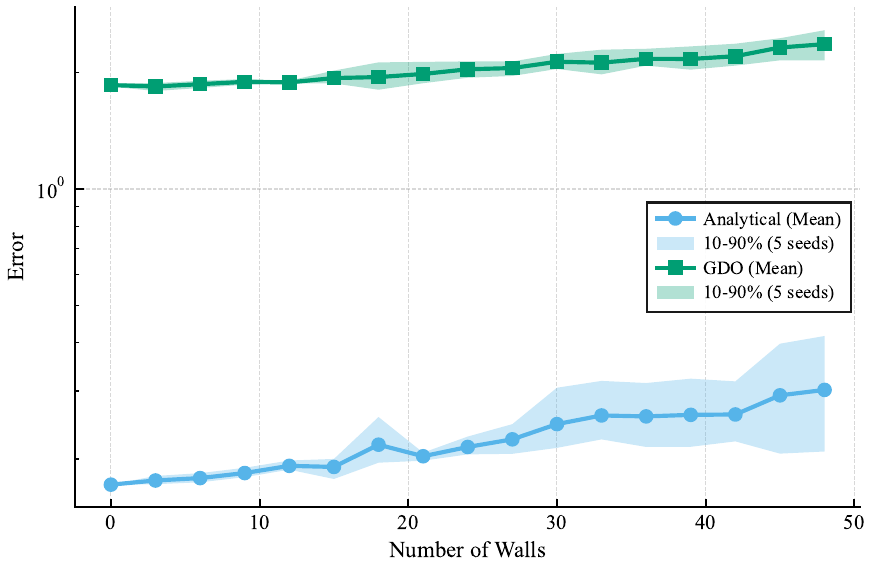}
    \caption{Shows the error between the true value function $v$ and approximate one. Two baselines are considered: the analytical case where the representation is obtained truncating the exact eigenvectors and its approximation obtained by optimizing GDO.}
    \label{fig:errors}
\end{figure}

    To further improve efficiency and scalability, incorporating prior structure into RL to obtain low-dimensional representations has attracted huge research efforts~\citep {mohan2024structure}. One of the most promising directions is to utilize the spectral properties of the Markov Decision Process (MDP) for dimension reduction, particularly through Laplacian or the successor representation. Early work~\citep{mahadevanProtovalueFunctionsLaplacian2007, osentoskiLearningStateactionBasis2007} developed the spectral framework in a model-based context by extracting representations from the leading eigenvectors of the MDP's Laplacian matrix. Recent work \citep{wuLaplacianRLLearning2018} further extends the framework to the scenarios when structure is not available by formulating a graph drawing problem~\citep{korenSpectralGraphDrawing2003} for minimizing the Dirichlet energy, enabling model-free learning through stochastic optimization techniques. Subsequent works have refined the original Graph Drawing Objective (GDO) algorithm to address issues such as rotational ambiguity~\citep {wangBetterLaplacianRepresentation2021} and hyperparameter tuning of the GDO barrier~\citep{gomezProperLaplacianRepresentation2024}. A simplified objective was recently proposed by~\citet{AhmedBG25} for online representation learning.

The spectral framework is appealing in part because it captures the environment topology while remaining independent of the reward function, which supports transfer across tasks. Its approximation properties have also been studied in~\citet{petrik2007analysis} who analyzed approximation error in the original proto-value function framework, and in~\citet{lanGeneralizationRepresentationsReinforcement2022} who studied the statistical error of these methods in policy evaluation.
\begin{figure}
    \centering
    \includegraphics[width=\linewidth]{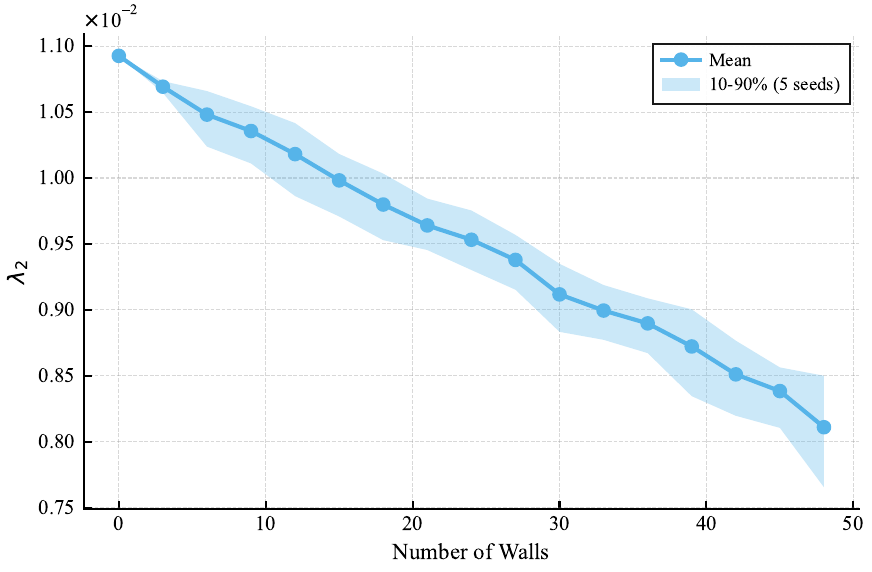}
    \caption{On the y-axis shows the log scale value assumed by $\lambda_2$, when the number of obstacles (walls) increases, or equivalently decreases the connectivity of the graph.}
    \label{fig:lambda2}
\end{figure}
~\citet{wang2022reachability} highlighted a mismatch between Laplacian distances and state reachability and introduced a reachability‑aware Laplacian representation rescaling with the eigenvalues.
Spectral structure has also been extensively exploited in the bandit literature. For instance, spectral bandits~\citep{DBLP:conf/icml/ValkoMKK14} define arms as nodes in a graph and the reward function is assumed to be smooth w.r.t the graph Laplacian (e.g., small Dirichlet energy). Algorithms leverage lower-frequency Laplacian eigenvectors to effectively guide exploration and obtain regret bounds that depend on spectral quantities such as the Laplacian eigenvalue decay and effective dimension~\citep{DBLP:conf/icml/SrinivasKKS10, DBLP:journals/siamcomp/AlonCGMMS17}. This perspective is closely aligned with Laplacian representation learning in RL, in that low-frequency eigencomponents capture the intrinsic geometry and bottlenecks of the underlying state graph. A complementary line concerns spectrally low-rank bandits, where the expected reward depends on an unknown low-dimensional subspace that must be learned online~\citep{DBLP:conf/icml/JunWWN19}. The algorithm aims to recover a principal subspace and then run bandit methods in the learned representation~\citep{DBLP:conf/nips/FlynnRKP23, DBLP:journals/ml/AuerCF02}. While these settings differ from MDP policy evaluation, they share the same idea of our analysis.

\begin{figure}
    \centering
    \includegraphics[width=\linewidth]{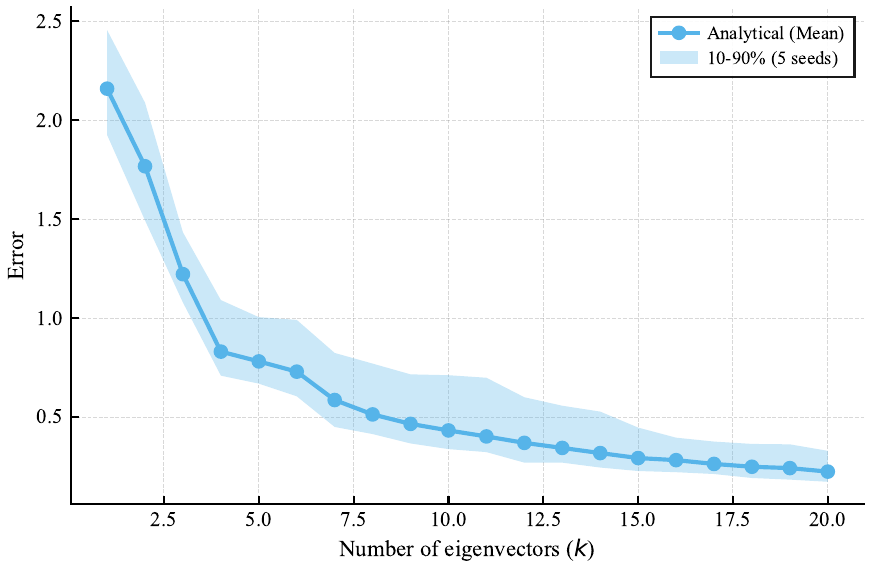}
    \caption{Shows the error of the analytical representation varying the number of eigenvectors $k$. On the y-axis we display the log scale value of the error, while on the x-axis the cut index $k$.}
    \label{fig:err_vs_n_eigenvec}
\end{figure}

Other related work includes Proto-Value Networks, which adapt the proto-value function framework to deep architectures \citep{farebrotherProtoValueNetworksScaling2023}, and methods that learn spectral representations from dynamics without relying on the data collection policy \citep{renSpectralDecompositionRepresentation2023}. Finally, Laplacian-induced similarities derived from random walks have been explored for measuring node similarity in recommendation tasks \citep{fouss2007random}.


\section{Numerical Simulations}
\label{sec:numerical_simulations}
The scope of this numerical evaluation is to validate and visualize the implications of the theoretical results derived in previous sections.\footnote{Code available at: \href{https://github.com/pierriccardo/laprep}{https://github.com/pierriccardo/laprep}} In particular, we want to highlight the role of connectivity, given by $\lambda_2$, in the truncation error. Visualizations of the impact of the connectivity of the MDP on $\lambda_2$ are in Appendix \ref{app:visualizations}, as well as additional simulations on stochastic environments and different tasks in Appendix \ref{app:additional_simulations}.
\paragraph{Environment.}
We consider a family of deterministic grid-world environments parameterized by the grid size, shaped by $n$ rows and $m$ columns and the number of walls $w$ with $n, m, w \in \mathbb{N}$. Each grid contains exactly a single goal located at the bottom-right corner for simplicity. The state space consists of all non-wall cells in the grid. The agent has access to four deterministic actions corresponding to the cardinal points. Transitions are deterministic, and if an action would lead the agent into a wall or out of bounds, the agent remains in its current state. The agent is rewarded $+1$ for transitioning to the goal state and receives $-1$ reward for all the other transitions. At reset, after reaching the maximum number of steps or the goal, the agent is teleported to a random state uniformly. 

\begin{figure}
    \centering
    \includegraphics[width=\linewidth]{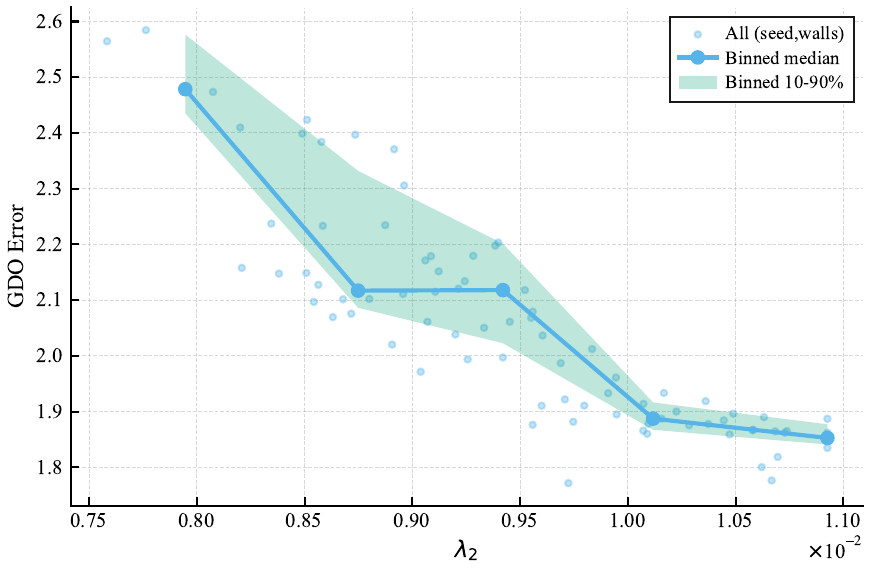}
    \caption{Shows the relationship between the second eigenvalue $\lambda_2$ and the error using the approximated representation via GDO. On the y-axis we display the value of the error, while on the x-axis the values assumed by $\lambda_2$, both metrics vary with respect to the number of walls considered.}
    \label{fig:err_approx_vs_lambda2}
\end{figure}

\paragraph{Experimental Setup.}
For each $w$-walls environment, we compute the true value function $v$ under a uniform policy using the average-reward Bellman equation with normalization $\phi^T v = 0$. Then, we obtain the representation considering the first $k$ eigenvectors of the Laplacian. This is obtained both via the analytical solution, and approximated by optimizing GDO. For both cases, we derive the linear approximation of the value function and compare it with the exact one. The weights are computed with the exact projectors to filter out the estimation error of least squares and focus on the approximation error of the representation. Finally, we test the ability of Laplacian-based representations to approximate value functions when connectivity varies. The aim is to empirically show that lower connectivity (higher $w$, lower $\lambda_2$) leads to more challenging approximation problems.
To highlight the effects of connectivity, we simulate a series of environments with decreasing connections between states. This is achieved by progressively adding more walls to partially obstruct the path in the graph. Starting from an obstacle-free graph, we increase the number of walls $w$, by randomly removing an edge of the transition graph. We use a random spanning tree procedure to ensure that the graph remains connected.

\begin{figure}
    \centering
    \includegraphics[width=\linewidth]{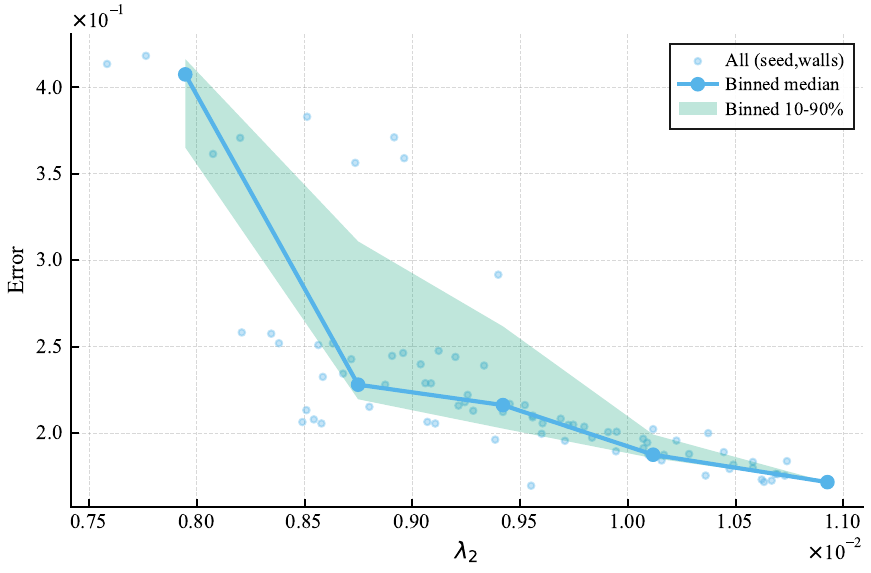}
    \caption{Shows the relationship between the second eigenvalue $\lambda_2$ and the error using the analytical eigenvector representation. On the y-axis we display the value of the error, while on the x-axis the values assumed by $\lambda_2$, both metrics vary with respect to the number of walls considered.}
    \label{fig:err_truevs_lambda2}
\end{figure}

\paragraph{Results}
The experimental evaluation aligns with our theoretical results. For this experiment, we fixed $n=m=15$ and let the number of walls $w=\{1, \ldots, 50\}$ vary over $5$ seeds. We cut the Laplacian eigenvectors at $k=20$ for both the analytical solution and GDO. The plot shown in Figure \ref{fig:lambda2} is used as a reference to visualize the decrease in the connectivity of the environment as the number of walls increases. The error comparison between the exact analytical derivation of the Laplacian eigenbasis and the one approximated by GDO is reported in Figure \ref{fig:errors}. The error is computed by comparing the $\Phi$-norm between the true value function $v$ and the one obtained using the representation. We denote as $v_k$ the approximation of $v$ using $k$ eigenvectors computed analytically. In the case of the analytical derivation of the basis, we use $\|v - v_k\|_{\Phi}$ where $k$ is the number of truncated eigenvectors. Similarly for GDO, we compute $\|v - \widehat{v}_k\|_{\Phi}$. Here, the two curves differ by the approximation error gap. However, as expected, they show the same increasing trend as the number of obstacles grows. Figure \ref{fig:err_approx_vs_lambda2} and Figure \ref{fig:err_truevs_lambda2} capture the influence of the connectivity $\lambda_2$ on the error using GDO-approximated and the true Laplacian eigenbasis, respectively, empirically confirming that reducing the connectivity increases the error. Finally, Figure \ref{fig:err_vs_n_eigenvec} reports the error of the analytical representation when varying the cutoff dimension $k$, showing the expected decreasing behavior.

\section{Conclusion}
In this work, we provided quantifiable theoretical guarantees for the approximation error in the average reward setting using a Laplacian-based representation. We link this error to topological quantities related to the connectivity of the environment making the result interpretable. Finally we proposed a more straightforward reformulation of the Laplacian for non-symmetric transition graphs and validated the results in empirical settings.
In future work, our theoretical analysis could be extended in two directions. Removing Assumption~\ref{asm:gdo}, one could study the sample complexity of a specific version of model-free GDO, that is, how the number of interaction steps required to achieve $\epsilon$-optimality scales with $\epsilon$. This is a question of (reward-free) representation learning. On the other end, removing Assumption~\ref{asm:wls}, one could study the estimation error of linear least-squares for the learned representation. Results from~\cite{lanGeneralizationRepresentationsReinforcement2022} may prove useful. On the practical side, our upper bounds on the approximation error may be used to guide the choice of exploration policy to be used for representation learning and the dimension $k\ll |\mathcal{S}|$ of the representation. They may also guide the design of new representation algorithms based on the graph Laplacian.

\section*{Impact Statement} This paper presents theoretical results and has the goal of advancing the field of Machine Learning. There are no potential societal consequences of our work which we feel must be highlighted here.

\bibliography{references}
\bibliographystyle{icml2026}

\newpage
\appendix
\onecolumn

\section{Proofs and Further Discussion of \cref{sec:approx_err}}\label{app:approx_error}
\MainTheorem*
\begin{proof}

    Let $\Psi = (u_1, \ldots, u_k)$, the matrix whose columns are the eigenvectors of $L$ and $\widehat{\Psi} = (\widehat{u}_1, \ldots, \widehat{u}_k)$, define the $\Phi$-weighted projector onto $\Psi$, $\Pi^{\Phi}_{\Psi}$ as:
    \[
    \Pi^{\Phi}_{\Psi} = \Psi(\Psi^\top\Phi\Psi)^{-1}\Psi^\top\Phi,
    \]
    and the weighted projector onto $\widehat\Psi$, $\Pi_{\widehat\Psi}^\Phi = \widehat\Psi \widehat\Psi^{\top}\Phi$.
        Let $v_k$ be the linear function approximation of $v$ using $(u_1, \ldots, u_k)$, then by definition, $v_k = \Pi_{\Psi}^\Phi v$ and $\widehat{v}_k = \Pi^{\Phi}_{\widehat\Psi}v$, hence:
    \[
    \|v - \widehat{v}_k\|_{\Phi} \leq \|v - v_k\|_{\Phi} + \| v_k - \widehat{v}_k  \|_{\Phi} = \|v - v_k\|_{\Phi} +  \|(\Pi_{\Psi}^\Phi - \Pi^{\Phi}_{\widehat\Psi})v\|_{\Phi}
    \]
    \[
    \|v - \widehat{v}_k\|_{\Phi} \leq \|v - v_k\|_{\Phi} + \|\Pi_{\Psi}^\Phi - \Pi^{\Phi}_{\widehat\Psi}\|_{\Phi}\|v\|_{\Phi}.
    \]
Now by \cref{lemma:approx_error} and \cref{lemma:gdl2}, the thesis.
\end{proof}

The main reason why a null eigengap is problematic for GDO-based objectives is rooted into the formulation of GDO itself. In particular GDO generally enforces conditions such as:
$$
\sum_{i = 1}^k \langle \hat{u}_i, L\hat{u}_i \rangle_{\mathcal{H}} - \sum_{i=1}^{k}\lambda_i < \epsilon
$$

When the truncation at $k$ happens inside an eigenspace with dimension $b >2$, we have that $\lambda_k = \ldots = \lambda_b$
and the objective function cannot distinguish directions inside the degenerate block, namely:
$$
\sum_{i=1}^{k-1}\lambda_i + \lambda_k = \sum_{i=1}^{k-1}\lambda_i + \lambda_{k+1} = \ldots = \sum_{i=1}^{k-1}\lambda_i + \lambda_{b}
$$

Therefore, any choice of a unit vector $v \in \operatorname{span}(u_k,\dots,u_b)$ yields the same objective value. In particular, for any such $v$,

$$
\sum_{i=1}^{k-1} \lambda_i + \langle v, Lv \rangle_{\mathcal{H}} = \sum_{i=1}^{k-1} \lambda_i + \lambda_k,
$$
so that replacing $u_k$ with $v$ produces another minimizer of the objective.

As a consequence, the optimizer is not unique: there exists a continuum of optimal $k$-dimensional subspaces of the form
$$
\tilde \Psi = \operatorname{span}(u_1,\dots,u_{k-1}, v), \qquad v \in \operatorname{span}(u_k,\dots,u_b),  \|v\|=1.
$$

But these subspaces can be arbitrarily far apart in operator norm for different choices of $v,w \in \operatorname{span}(u_k, \ldots, u_b)$. Indeed when $v \perp w$:
$$
\tilde \Psi_v = \operatorname{span}(u_1,\dots,u_{k-1}, v),
\quad
\tilde \Psi_w = \operatorname{span}(u_1,\dots,u_{k-1}, w),
$$
both achieve the same objective value, yet
$$
\|\Pi_{\tilde \Psi_v} - \Pi_{\tilde \Psi_w}\|_2 = 1.
$$
To get theoretical guarantees in the null eigengap case, one has to ignore the extra representational power that $u_{k+1},...,u_b$ may provide. This is formalized in \cref{corollary:main_thm}.
\begin{corollary}\label{corollary:main_thm}
Let $u_1,\ldots,u_{|\mathcal S|}$ be the eigenvectors of $L$ associated with
eigenvalues
\[
0=\lambda_1 < \lambda_2\le \cdots \le \lambda_h
<
\lambda_{h+1}=\cdots=\lambda_k=\lambda_{k+1}=\cdots=\lambda_b
<
\lambda_{b+1}\le \cdots \le \lambda_{|\mathcal S|}.
\]
In particular, $\lambda_{h+1}$ has multiplicity $b-h$, with eigenspace
$\operatorname{span}(u_{h+1},\ldots,u_b)$. Let
$\widehat u_1,\ldots,\widehat u_k$ be the features produced by
$\epsilon$-optimal GDO, and let $\widehat v_k$ be the approximation of $v$
produced by the $\Phi$-weighted linear least-squares oracle using these
features. Then
\[
\|v-\widehat v_k\|_{\Phi}
\le
\|\bar r\|_{\Phi}
\sqrt{\frac{1}{\lambda_2\lambda_{h+1}}}
+
\|v\|_{\Phi}
\sqrt{\frac{2\epsilon}{\lambda_{h+1}-\lambda_h}} .
\]
\end{corollary}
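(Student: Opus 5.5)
The plan is to compare the learned $k$-dimensional feature span not with the ill-defined $k$-dimensional eigenspace, but with the well-separated $h$-dimensional eigenspace $\operatorname{span}(u_1,\ldots,u_h)$. This subspace is uniquely determined because $\lambda_h<\lambda_{h+1}$. The price is that the truncation term involves $\lambda_{h+1}$ instead of $\lambda_{k+1}$; since $\lambda_{h+1}=\lambda_{k+1}$ in this configuration, that term is actually unchanged.

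Write $\Psi_h=(u_1,\ldots,u_h)$, $v_h=\Pi^\Phi_{\Psi_h}v$, and $\widehat S=\operatorname{span}(\widehat u_1,\ldots,\widehat u_k)$, with $\Phi$-orthogonal projector $\Pi^\Phi_{\widehat\Psi}$. Because $\widehat v_k=\Pi^\Phi_{\widehat\Psi}v$ is the $\Phi$-best approximation of $v$ in $\widehat S$, and $\Pi^\Phi_{\widehat\Psi}v_h\in\widehat S$, we get
\[
\|v-\widehat v_k\|_\Phi\le \|v-\Pi^\Phi_{\widehat\Psi}v_h\|_\Phi\le \|v-v_h\|_\Phi+\|(I-\Pi^\Phi_{\widehat\Psi})\Pi^\Phi_{\Psi_h}\|_\Phi\,\|v\|_\Phi .
\]
The first term is bounded by \cref{lemma:approx_error} applied with $h$ in place of $k$. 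Its proof only uses that $v_h$ is the projection onto the $h$ lowest eigenvectors, so it gives $\|\bar r\|_\Phi/\sqrt{\lambda_2\lambda_{h+1}}$. It remains to show $\|(I-\Pi^\Phi_{\widehat\Psi})\Pi^\Phi_{\Psi_h}\|_\Phi\le\sqrt{2\epsilon/(\lambda_{h+1}-\lambda_h)}$.

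For this one-sided $\sin\Theta$ bound I would pass to Euclidean coordinates as in the proof of \cref{lemma:gdl2}. Set $A=\Phi^{1/2}L\Phi^{-1/2}$, which is symmetric and p.s.d., with orthonormal eigenvectors $w_i=\Phi^{1/2}u_i$. Set $q_i=\Phi^{1/2}\widehat u_i$, which are orthonormal. Let $M=\sum_i q_iq_i^\top$ and $m_i=w_i^\top M w_i\in[0,1]$, so that $\sum_i m_i=k$. The GDO condition then reads $\sum_i\lambda_i m_i-\sum_{i\le k}\lambda_i<\epsilon$. Split this difference as
\[
\sum_{i>k}\lambda_i m_i-\sum_{i\le k}\lambda_i(1-m_i).
\]
For the first sum, $\sum_{i>k}\lambda_i m_i\ge\lambda_{h+1}\sum_{i>k}m_i=\lambda_{h+1}\sum_{i\le k}(1-m_i)$. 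For the second, $\sum_{i\le k}\lambda_i(1-m_i)\le\lambda_h\sum_{i\le h}(1-m_i)+\lambda_{h+1}\sum_{h<i\le k}(1-m_i)$, using $\lambda_{h+1}=\cdots=\lambda_k$. Combining the two gives
\[
\epsilon>(\lambda_{h+1}-\lambda_h)\sum_{i\le h}(1-m_i)=(\lambda_{h+1}-\lambda_h)\,\|(I-M)W_h\|_F^2,
\]
where $W_h=(w_1,\ldots,w_h)$. Since the spectral norm is at most the Frobenius norm, $\|(I-M)W_hW_h^\top\|_2^2\le\epsilon/(\lambda_{h+1}-\lambda_h)\le 2\epsilon/(\lambda_{h+1}-\lambda_h)$. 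Translating back via the isometry $x\mapsto\Phi^{1/2}x$ between $\|\cdot\|_\Phi$ and $\|\cdot\|_2$ yields the required operator-norm bound, and combining with the first term completes the proof.

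The main obstacle is this eigenvalue-weight bookkeeping. The cross terms from the degenerate block $h<i\le k$ must cancel exactly against the mass that escapes to $i>k$. This works only because $\lambda_{h+1}=\cdots=\lambda_b$ lets the weight $\lambda_{h+1}$ serve both as an upper bound inside the block and as a lower bound for every index beyond $k$. The reduction through the best-approximation property of $\widehat v_k$ is routine by comparison.
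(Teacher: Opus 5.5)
Your argument is correct. Its core estimate, $(\lambda_{h+1}-\lambda_h)\sum_{i\le h}(1-m_i)<\epsilon$, is the same one the paper derives. The paper obtains it from the exact identity $\sum_{i\le h}(\mu-\lambda_i)(1-\alpha_i)+\sum_{i>b}(\lambda_i-\mu)\alpha_i$ for the GDO gap, with $\mu=\lambda_{h+1}$, while you use two one-sided inequalities; the content is identical. The real difference is how you turn this one-sided $\sin\Theta$ bound into a value-function bound.

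The paper insists on a two-sided projector difference between subspaces of equal dimension. It builds $\widehat E_h:=\Pi_{\widehat E}E\subseteq\widehat E$ and proves that $\Pi_{\widehat E}|_E$ is injective when $\epsilon/(\lambda_{h+1}-\lambda_h)<1$, treating the other case separately. It then invokes \cref{lemma:projector_distance} to get $\|\Pi_E-\Pi_{\widehat E_h}\|_2=\|(I-\Pi_{\widehat E_h})\Pi_E\|_2\le\|(I-\Pi_{\widehat E})\Pi_E\|_2$, and finishes with the triangle inequality through $\Pi^\Phi_{\widehat H}v$.

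You instead compare $\widehat v_k$ directly with $\Pi^\Phi_{\widehat\Psi}v_h$, so only $\|(I-\Pi^\Phi_{\widehat\Psi})\Pi^\Phi_{\Psi_h}\|_\Phi$ ever appears. This removes the dimension matching, the case split and the principal-angle lemma. It also makes clear that, for least squares, all that matters is that the true $h$-dimensional eigenspace lies almost inside the learned span.

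The paper's detour does buy something extra: an explicit $h$-dimensional subspace of $\operatorname{span}(\widehat u_1,\ldots,\widehat u_k)$ that is close to $\operatorname{span}(u_1,\ldots,u_h)$ in the symmetric projector distance. It also keeps the proof parallel to that of \cref{thm:main}. Neither is needed for the stated inequality. Both routes in fact give the constant without the factor $2$.
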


\begin{proof}
Let
\[
H := \operatorname{span}(u_1,\ldots,u_h),
\qquad
\widehat S := \operatorname{span}(\widehat u_1,\ldots,\widehat u_k).
\]
The main point is that, when
$\lambda_k=\lambda_{k+1}$, the GDO objective does not identify the ordered
prefix $\operatorname{span}(\widehat u_1,\ldots,\widehat u_h)$. Therefore we
will not use that subspace. Instead, we first show that $\widehat S$ contains
an $h$-dimensional subspace close to $H$.

It is convenient to pass to Euclidean coordinates. Define
\[
A := \Phi^{1/2}L\Phi^{-1/2},\qquad
e_i := \Phi^{1/2}u_i,\qquad
\widehat e_i := \Phi^{1/2}\widehat u_i .
\]
$A$ is symmetric as proven in \cref{lemma:approx_error} (denoted as $\mathcal{L}$), the vectors $e_i$ form an orthonormal eigenbasis of
$A$ with eigenvalues $\lambda_i$, and
$\widehat e_1,\ldots,\widehat e_k$ are orthonormal. Let
\[
\widehat E := \operatorname{span}(\widehat e_1,\ldots,\widehat e_k),
\qquad
E := \operatorname{span}(e_1,\ldots,e_h),
\]
and let $\Pi_{\widehat E}$ be the Euclidean orthogonal projector onto
$\widehat E$. By the $\epsilon$-optimality of GDO,
\[
\operatorname{tr}(A \Pi_{\widehat E})
-
\sum_{i=1}^k \lambda_i
=
\operatorname{tr}\left(
\Phi^{\frac{1}{2}}L\Phi^{-\frac{1}{2}}
\sum_{i=1}^k
\Phi^{\frac{1}{2}}\widehat u_i\widehat u_i^{\top}\Phi^{\frac{1}{2}}
\right)
-
\sum_{i=1}^k \lambda_i
=
\sum_{i=1}^k \widehat u_i^{\top}\Phi L \widehat u_i
-
\sum_{i=1}^k \lambda_i
< \epsilon .
\]
Let $\mu := \lambda_{h+1}$, so that
$\lambda_{h+1}=\cdots=\lambda_b=\mu$. Since
$\dim(\widehat E)=k$ and $(e_i)_{i=1}^{|\state|}$ is an orthonormal basis,
\[
\sum_{i=1}^{|\state|}
\langle e_i,\Pi_{\widehat E}e_i\rangle
=
\operatorname{tr}(\Pi_{\widehat E})
=
k .
\]
For compactness, write
\[
\alpha_i := \langle e_i,\Pi_{\widehat E}e_i\rangle .
\]
Then $\sum_{i=1}^{|\state|}\alpha_i=k$. Moreover, since
$(e_i)_{i=1}^{|\state|}$ is an eigenbasis of $A$, we have
\[
\operatorname{tr}(A \Pi_{\widehat E})
=
\sum_{i=1}^{|\state|}
\langle e_i, A \Pi_{\widehat E}e_i\rangle
=
\sum_{i=1}^{|\state|}
\lambda_i\alpha_i .
\]
Also, by the definition of $\mu$ and by the fact that
$\lambda_{h+1}=\cdots=\lambda_k=\mu$,
\[
\sum_{i=1}^k\lambda_i
=
\sum_{i=1}^h\lambda_i+(k-h)\mu .
\]
Therefore
\[
\begin{aligned}
\operatorname{tr}(A \Pi_{\widehat E})
-
\sum_{i=1}^k \lambda_i
&=
\sum_{i=1}^{|\state|}
\lambda_i\alpha_i
-
\left(
\sum_{i=1}^h \lambda_i
+
(k-h)\mu
\right)                                                        \\
&=
\sum_{i=1}^h \lambda_i(\alpha_i-1)
+
\sum_{i=h+1}^b \mu\alpha_i
+
\sum_{i=b+1}^{|\state|}\lambda_i\alpha_i
-
(k-h)\mu .
\end{aligned}
\]
Using $\sum_{i=1}^{|\state|}\alpha_i=k$, we can rewrite
\[
k-h
=
\sum_{i=1}^{|\state|}\alpha_i-h
=
\sum_{i=1}^h(\alpha_i-1)
+
\sum_{i=h+1}^b\alpha_i
+
\sum_{i=b+1}^{|\state|}\alpha_i .
\]
Substituting this identity:
\[
\begin{aligned}
\operatorname{tr}(A \Pi_{\widehat E})
-
\sum_{i=1}^k \lambda_i
&=
\sum_{i=1}^h \lambda_i(\alpha_i-1)
+
\sum_{i=h+1}^b \mu\alpha_i
+
\sum_{i=b+1}^{|\state|}\lambda_i\alpha_i        \\
&\quad
-
\mu\left[
\sum_{i=1}^h(\alpha_i-1)
+
\sum_{i=h+1}^b\alpha_i
+
\sum_{i=b+1}^{|\state|}\alpha_i
\right]                                           \\
&=
\sum_{i=1}^h
(\mu-\lambda_i)(1-\alpha_i)
+
\sum_{i=b+1}^{|\state|}
(\lambda_i-\mu)\alpha_i .
\end{aligned}
\]
Returning to the definition of $\alpha_i$, this is
\[
\operatorname{tr}(A \Pi_{\widehat E})
-
\sum_{i=1}^k \lambda_i
=
\sum_{i=1}^h
(\mu-\lambda_i)
\bigl(1-\langle e_i,\Pi_{\widehat E}e_i\rangle\bigr)
+
\sum_{i=b+1}^{|\state|}
(\lambda_i-\mu)
\langle e_i,\Pi_{\widehat E}e_i\rangle .
\]
The second term is nonnegative, and $\mu-\lambda_i\ge \mu-\lambda_h$ for
$i\le h$. Hence
\[
(\mu-\lambda_h)
\sum_{i=1}^h
\bigl(1-\langle e_i,\Pi_{\widehat E}e_i\rangle\bigr)
< \epsilon .
\]
Equivalently, since $(e_i)_{i=1}^{|\state|}$ is an orthonormal basis and
$\Pi_E$ is the orthogonal projector onto
$E=\operatorname{span}(e_1,\ldots,e_h)$,
\[
\begin{aligned}
\|(I-\Pi_{\widehat E})\Pi_E\|_{\mathrm F}^2
&=
\sum_{i=1}^{|\state|}
\|(I-\Pi_{\widehat E})\Pi_E e_i\|_2^2 \\
&=
\sum_{i=1}^h
\|(I-\Pi_{\widehat E})e_i\|_2^2
+
\sum_{i=h+1}^{|\state|}
\|(I-\Pi_{\widehat E})0\|_2^2 \\
&=
\sum_{i=1}^h
\|(I-\Pi_{\widehat E})e_i\|_2^2 .
\end{aligned}
\]
Moreover, since $\Pi_{\widehat E}$ is an orthogonal projector,
$I-\Pi_{\widehat E}$ is also an orthogonal projector. Hence, for every
$i\le h$,
\[
\begin{aligned}
\|(I-\Pi_{\widehat E})e_i\|_2^2
&=
\langle (I-\Pi_{\widehat E})e_i,(I-\Pi_{\widehat E})e_i\rangle \\
&=
\langle e_i,(I-\Pi_{\widehat E})e_i\rangle \\
&=
1-\langle e_i,\Pi_{\widehat E}e_i\rangle .
\end{aligned}
\]
Therefore
\[
\begin{aligned}
\|(I-\Pi_{\widehat E})\Pi_E\|_{\mathrm F}^2
&=
\sum_{i=1}^h
\bigl(1-\langle e_i,\Pi_{\widehat E}e_i\rangle\bigr) \\
&<
\frac{\epsilon}{\mu-\lambda_h}
=
\frac{\epsilon}{\lambda_{h+1}-\lambda_h}.
\end{aligned}
\]

We now construct an \(h\)-dimensional comparison subspace inside
\(\widehat E\). Recall that \(E=\operatorname{span}(e_1,\ldots,e_h)\) and
\(\widehat E=\operatorname{span}(\widehat e_1,\ldots,\widehat e_k)\), with
\(h\le k\). From the previous estimate we have
\[
\|(I-\Pi_{\widehat E})\Pi_E\|_{\mathrm F}^2
<
\frac{\epsilon}{\lambda_{h+1}-\lambda_h}.
\]

We first consider the case
\[
\frac{\epsilon}{\lambda_{h+1}-\lambda_h}<1 .
\]
Then
\[
\|(I-\Pi_{\widehat E})\Pi_E\|_2
\le
\|(I-\Pi_{\widehat E})\Pi_E\|_{\mathrm F}
<1 .
\]
We claim that the restriction \(\Pi_{\widehat E}|_E:E\to \widehat E\) is
injective. Indeed, let \(x\in E\) and suppose that \(\Pi_{\widehat E}x=0\).
Since \(x\in E\), we have \(\Pi_E x=x\). Therefore
\[
x=(I-\Pi_{\widehat E})x=(I-\Pi_{\widehat E})\Pi_E x .
\]
Taking norms gives
\[
\|x\|_2
=
\|(I-\Pi_{\widehat E})\Pi_E x\|_2
\le
\|(I-\Pi_{\widehat E})\Pi_E\|_2\|x\|_2
<
\|x\|_2 .
\]
This is impossible unless \(x=0\). Hence the kernel of
\(\Pi_{\widehat E}|_E\) is trivial, so \(\Pi_{\widehat E}|_E\) is injective.
Since \(E\) is finite-dimensional and \(\dim(E)=h\), it follows that
\[
\dim(\Pi_{\widehat E}E)=\dim(E)=h .
\]
In this case we define
\[
\widehat E_h:=\Pi_{\widehat E}E .
\]
Then \(\widehat E_h\subseteq \widehat E\) and \(\dim(\widehat E_h)=h\).

We now bound the distance between \(E\) and \(\widehat E_h\). For every
\(x\in E\), the vector \(\Pi_{\widehat E}x\) belongs to \(\widehat E_h\).
Hence the best approximation of \(x\) from \(\widehat E_h\) cannot be worse
than the particular approximation \(\Pi_{\widehat E}x\). Therefore,
\[
\|(I-\Pi_{\widehat E_h})x\|_2
=
\inf_{y\in \widehat E_h}\|x-y\|_2
\le
\|x-\Pi_{\widehat E}x\|_2
=
\|(I-\Pi_{\widehat E})x\|_2 .
\]
Since this holds for all \(x\in E\), we obtain
\[
\|(I-\Pi_{\widehat E_h})\Pi_E\|_2
\le
\|(I-\Pi_{\widehat E})\Pi_E\|_2
\le
\|(I-\Pi_{\widehat E})\Pi_E\|_{\mathrm F}.
\]
Lemma
\ref{lemma:projector_distance} gives
\[
\|\Pi_E-\Pi_{\widehat E_h}\|_2
=
\|(I-\Pi_{\widehat E_h})\Pi_E\|_2 .
\]
Combining this identity with the previous estimate yields
\[
\|\Pi_E-\Pi_{\widehat E_h}\|_2
\le
\|(I-\Pi_{\widehat E})\Pi_E\|_2
\le
\|(I-\Pi_{\widehat E})\Pi_E\|_{\mathrm F}.
\]
Consequently,
\[
\|\Pi_E-\Pi_{\widehat E_h}\|_2
\le
\|(I-\Pi_{\widehat E})\Pi_E\|_{\mathrm F}
\le
\sqrt{\frac{2\epsilon}{\lambda_{h+1}-\lambda_h}} .
\]

It remains to consider the case
\[
\frac{\epsilon}{\lambda_{h+1}-\lambda_h}\ge 1 .
\]
Since \(\dim(\widehat E)=k\ge h\), we may choose any \(h\)-dimensional subspace
\(\widehat E_h\subseteq \widehat E\). The matrices \(\Pi_E\) and
\(\Pi_{\widehat E_h}\) are Euclidean orthogonal projectors onto subspaces of
the same dimension. Hence
\[
\|\Pi_E-\Pi_{\widehat E_h}\|_2\le 1 .
\]
Since \(\epsilon/(\lambda_{h+1}-\lambda_h)\ge 1\), we have
\[
1
<
\sqrt{\frac{2\epsilon}{\lambda_{h+1}-\lambda_h}},
\]
and therefore
\[
\|\Pi_E-\Pi_{\widehat E_h}\|_2
<
\sqrt{\frac{2\epsilon}{\lambda_{h+1}-\lambda_h}} .
\]

Thus, in all cases, there exists an \(h\)-dimensional subspace
\(\widehat E_h\subseteq \widehat E\) such that
\[
\|\Pi_E-\Pi_{\widehat E_h}\|_2
<
\sqrt{\frac{2\epsilon}{\lambda_{h+1}-\lambda_h}} .
\]

Returning to the original $\Phi$-weighted coordinates, define
\[
\widehat H := \Phi^{-1/2}\widehat E_h .
\]
Then $\widehat H$ is an $h$-dimensional subspace of
$\widehat S=\operatorname{span}(\widehat u_1,\ldots,\widehat u_k)$, and its
$\Phi$-orthogonal projector satisfies
\[
\|\Pi_H^\Phi-\Pi_{\widehat H}^\Phi\|_\Phi
<
\sqrt{\frac{2\epsilon}{\lambda_{h+1}-\lambda_h}} .
\]

Since $\widehat H\subseteq \widehat S$, the least-squares error over
$\widehat S$ is no larger than the least-squares error over $\widehat H$:
\[
\|v-\widehat v_k\|_\Phi
=
\|v-\Pi_{\widehat S}^\Phi v\|_\Phi
\le
\|v-\Pi_{\widehat H}^\Phi v\|_\Phi .
\]
Hence
\[
\begin{aligned}
\|v-\widehat v_k\|_\Phi
&\le
\|v-\Pi_{\widehat H}^\Phi v\|_\Phi                                      \\
&\le
\|v-\Pi_H^\Phi v\|_\Phi
+
\|(\Pi_H^\Phi-\Pi_{\widehat H}^\Phi)v\|_\Phi                              \\
&\le
\|v-\Pi_H^\Phi v\|_\Phi
+
\|\Pi_H^\Phi-\Pi_{\widehat H}^\Phi\|_\Phi \|v\|_\Phi .
\end{aligned}
\]
The first term is the truncation error obtained by projecting onto
$\operatorname{span}(u_1,\ldots,u_h)$. By \cref{lemma:approx_error},
\[
\|v-\Pi_H^\Phi v\|_\Phi
\le
\|\bar r\|_\Phi
\sqrt{\frac{1}{\lambda_2\lambda_{h+1}}}.
\]
Combining the two estimates gives
\[
\|v-\widehat v_k\|_{\Phi}
\le
\|\bar r\|_{\Phi}
\sqrt{\frac{1}{\lambda_2\lambda_{h+1}}}
+
\|v\|_{\Phi}
\sqrt{\frac{2\epsilon}{\lambda_{h+1}-\lambda_h}},
\]
as claimed.
\end{proof}

\ApproximationError*
\begin{proof}
First we write $L$ as:
\[
L = I - \frac{P + \Phi^{-1}P^{\top}\Phi}{2} \coloneqq I- \frac{P + P^{*}}{2} = I-R.
\]

$R$ is $\Phi$-self-adjoint so $\Phi R = R^{\top}\Phi$, since:
\[
R^{*} \coloneqq \bigg(\frac{P+P^{*}}{2}\bigg)^{*} = \bigg(\frac{P^{*}+(P^{*})^{*}}{2}\bigg) = R.
\]

Let $\mathcal{L} = \Phi^{\frac{1}{2}}L \Phi^{-\frac{1}{2}} = I - \Phi^{\frac{1}{2}}R\Phi^{-\frac{1}{2}} = I- S$.
$\mathcal{L}$ is semi-definite positive for similarity and also symmetric:
\[
S^{\top} = (\Phi^{\frac{1}{2}}R\Phi^{-\frac{1}{2}})^{\top}
= \Phi^{-\frac{1}{2}}R^{\top}\Phi^{\frac{1}{2}}
= \Phi^{-\frac{1}{2}}(\Phi R \Phi^{-1})\Phi^{\frac{1}{2}}
= \Phi^{\frac{1}{2}}R\Phi^{-\frac{1}{2}}.
\]

Define $x := \Phi^{\frac{1}{2}}v$.
Let $\Psi=(u_1,\dots,u_k)$ where $u_i$ are eigenvectors of $L$ associated with
the $k$ lowest eigenvalues, and 
$G:=\Psi^\top \Phi \Psi$.
Define the $\Phi$-orthogonal projection onto $\mathrm{span}(\Psi)$ by
\[
v_k := \Psi(\Psi^\top \Phi \Psi)^{-1}\Psi^\top \Phi\, v
= \Psi G^{-1}\Psi^\top \Phi v,
\qquad x_k := \Phi^{\frac{1}{2}}v_k.
\]

Set $Q:=\Phi^{1/2}\Psi$. Then $Q$'s columns are
eigenvectors of $\mathcal{L}$ (since $Lu_i=\lambda_i u_i \Rightarrow
\mathcal{L}(\Phi^{1/2}u_i)=\lambda_i(\Phi^{1/2}u_i)$). Moreover
\[
x_k
=\Phi^{1/2}\Psi G^{-1}\Psi^\top \Phi v
=(\Phi^{1/2}\Psi)\big(\Psi^\top\Phi\Psi\big)^{-1}(\Psi^\top\Phi^{1/2})(\Phi^{1/2}v)
=Q(Q^\top Q)^{-1}Q^\top x.
\]
Hence $x_k$ is the (Euclidean) orthogonal projection of $x$ onto
$\mathrm{span}(Q)=\mathrm{span}(\Phi^{1/2}u_1,\dots,\Phi^{1/2}u_k)$, i.e. the subspace of $\mathcal{L}$ associated with the $k$ lowest eigenvalues.
Therefore, by Lemma \ref{quadratic_bound},
\begin{equation*}
    \|x - x_k\|_2^2 \leq \frac{x^T \mathcal{L} x}{\lambda_{k+1}(\mathcal{L})}
    = \frac{x^T \mathcal{L} x}{\lambda_{k+1}(L)}.
\end{equation*}

Now, consider the vector space $\mathcal{V} = \{ v \in \mathbb{R}^{|\state|}: \phi^{\top} v = 0 \}$
equipped with the $\Phi$-scalar product: $\langle u,v\rangle_{\Phi} = u^T\Phi v$, $\forall u,v \in \mathcal{V}$:
\[
\|v - v_k\|_{\Phi}^2
= \|\Phi^{\frac{1}{2}}v - \Phi^{\frac{1}{2}}v_k\|_{2}^2
= \|x - x_k\|_2^2.
\]
So we have that:
\[
\|v - v_k\|_{\Phi}^2 \leq \frac{x^T \mathcal{L} x}{\lambda_{k+1}(L)}
= \frac{v^{\top}\Phi(I-R)v}{\lambda_{k+1}(L)}
= \frac{v^{\top}\Phi Lv}{\lambda_{k+1}(L)}.
\]

In this space the inverse of $(I-P)$ and $(I-R)$ are well defined since they are bijective
when restricted to this space (Lemma \ref{bijectivity_in_V}). We omit the notation
$|_{\mathcal{V}}$ to indicate the restriction to $\mathcal{V}$ but it's implicit throughout the proof on all the operators. \\

\[
\begin{aligned}
 v^T\Phi(I-R)v &= \langle v, (I-R)v \rangle_{\Phi}\\
&= \frac{1}{2} \langle v , (2I -P -P^*)v \rangle_{\Phi}\\
&= \frac{1}{2} \big[ \langle v, (I-P)v \rangle_{\Phi}  + \langle v, (I-P^*)v \rangle_{\Phi} \big].
\end{aligned}
\]
But actually $\langle v, (I-P)v \rangle_{\Phi}$ and $\langle v, (I-P^*)v \rangle_{\Phi}$
are the same quantity. More in general:
\begin{equation}
\label{herm_inv}
\langle v, A^*v \rangle_{\Phi} = v^T\Phi A^*v
= v^T \Phi \Phi^{-1}A^T\Phi v
= v^TA^T\Phi v
= \langle Av, v \rangle_{\Phi}
= \langle v, Av\rangle_{\Phi}.
\end{equation}
So we have that:
\[
v^T\Phi(I-R)v = \langle v, (I-P)v\rangle_{\Phi} = \langle v, \bar{r} \rangle_{\Phi}.
\]
Now in the space $\mathcal{V}$ we can write:
\[
v^T\Phi(I-R)v
= \langle v, \bar{r}\rangle_{\Phi}
= \langle (I-P)^{-1}\bar{r}, \bar{r} \rangle_{\Phi}.
\]
Let $\text{sym}_{\Phi}(A) = \frac{A+A^*}{2}$, using \eqref{herm_inv} we have that:
\[
\langle (I-P)^{-1}\bar{r}, \bar{r} \rangle_{\Phi}
= \langle \bar{r}, (I-P)^{-1}\bar{r} \rangle_{\Phi}
= \langle \bar{r}, \text{sym}_{\Phi} \big[(I-P)^{-1}\big ]\bar{r} \rangle_{\Phi}.
\]
In this way:
\[
\langle \bar{r}, \text{sym}_{\Phi} \big[(I-P)^{-1}\big ]\bar{r} \rangle_{\Phi}
\leq \langle \bar{r}, \big[\text{sym}_{\Phi}(I-P)\big]^{-1}\bar{r} \rangle_{\Phi}
= \langle \bar{r},(I-R)^{-1}\bar{r}\rangle_{\Phi}
= \bar{r}^{\top}\Phi L^{-1}\bar{r}.
\]

First observe that
\[
\phi^{\top}\bar{r}=\phi^{\top}\bigl(r-\mathbf{1}\,r^{\top}\phi\bigr)
=\phi^{\top}r-(\phi^{\top}\mathbf{1})\,r^{\top}\phi
=\phi^{\top}r-1\cdot \phi^{\top}r=0,
\]
hence $\bar{r} \in\mathcal{V}$. By Lemma~\ref{bijectivity_in_V}, \(L\) is bijective on \(\mathcal{V}\),
so \(L^{-1}\bar{r}\) is well-defined.

Moreover,
\[
(\Phi^{1/2}\mathbf{1})^{\top}(\Phi^{1/2}\bar{r})
=\mathbf{1}^{\top}\Phi\bar{r}
=\phi^{\top}\bar{r}=0,
\]
so \(\Phi^{1/2}\bar{r}\in \ker(\mathcal{L})^{\perp}\) and \(\mathcal{L}^{-1}\) is well-defined on \(\Phi^{1/2}\bar{r}\).
Using \(L^{-1}=\Phi^{-1/2}\mathcal{L}^{-1}\Phi^{1/2}\) on \(\mathcal{V}\),
\[
\bar{r}^{\top}\Phi L^{-1}\bar{r}
=\bar{r}^{\top}\Phi\Phi^{-1/2}\mathcal{L}^{-1}\Phi^{1/2}\bar{r}
=(\Phi^{1/2}\bar{r})^{\top}\mathcal{L}^{-1}(\Phi^{1/2}\bar{r})
\le \frac{\|\Phi^{1/2}\bar{r}\|_2^2}{\lambda_2(\mathcal{L})}
=\frac{\|\bar{r}\|_{\Phi}^2}{\lambda_2(L)}.
\]
Therefore,
\[
v^\top\Phi L v = v^\top\Phi(I-R)v \le \frac{\|\bar{r}\|_{\Phi}^2}{\lambda_2(L)}.
\]
Combining with the previous estimate yields
\[
\|v - v_k\|_{\Phi}^2 \leq \frac{v^{\top}\Phi Lv}{\lambda_{k+1}(L)}
\le \frac{\|\bar{r}\|_{\Phi}^2}{\lambda_2(L)\lambda_{k+1}(L)}.
\]
\end{proof}

\begin{lemma}
    $\langle v, \text{sym}_{\Phi}[(I-P)|_{\mathcal{V}}^{-1}]v \rangle \leq \langle v, \big[\text{sym}_{\Phi}(I-P)\big]\big|_{\mathcal{V}}^{-1}v \rangle, \quad \forall v \in \mathcal{V} $
\end{lemma}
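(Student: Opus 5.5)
The plan is to prove the standard fact that, for an operator with positive definite symmetric part, the symmetric part of its inverse is dominated (in Löwner order) by the inverse of its symmetric part. Throughout I read $\langle\cdot,\cdot\rangle$ as the $\Phi$-inner product $\langle u,w\rangle_\Phi=u^\top\Phi w$ on $\mathcal{V}=\{v:\phi^\top v=0\}$, and adjoints $A^*=\Phi^{-1}A^\top\Phi$ as in the proof of \cref{lemma:approx_error}.

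\textbf{Setup.} Write $M:=(I-P)|_{\mathcal{V}}$ and $H:=\mathrm{sym}_\Phi(M)=(I-R)|_{\mathcal{V}}=L|_{\mathcal{V}}$. Put $K:=M-H=\tfrac12(P^*-P)|_{\mathcal{V}}$, which is $\Phi$-skew-adjoint.

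First I check that $\mathcal{V}$ is invariant under $P$ and $P^*$, so that these restrictions make sense.
\begin{itemize}
\item For $P$: $\phi^\top Pv=\phi^\top v=0$, by stationarity.
\item For $P^*$: $\phi^\top P^*v=\mathbf{1}^\top P^\top\Phi v=(P\mathbf{1})^\top\Phi v=\phi^\top v=0$.
\end{itemize}

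Next I check that $H$ is positive definite on $\mathcal{V}$. Since $\mathcal{V}$ is exactly the $\Phi$-orthogonal complement of $\mathbf{1}$, we have $\langle w,Hw\rangle_\Phi=(\Phi^{1/2}w)^\top\mathcal{L}(\Phi^{1/2}w)\ge\lambda_2\|w\|_\Phi^2>0$ for $w\neq0$. This uses $\lambda_2>0$ from irreducibility. Consequently $\langle w,Mw\rangle_\Phi=\langle w,Hw\rangle_\Phi>0$, since $\langle w,Kw\rangle_\Phi=0$ by skew-adjointness. Hence $M$ is injective, and therefore invertible, on $\mathcal{V}$; this is consistent with Lemma~\ref{bijectivity_in_V}.

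\textbf{Key step.} Fix $v\in\mathcal{V}$ and set $w:=M^{-1}v\in\mathcal{V}$. By \eqref{herm_inv}, the left-hand side of the claim is
\[
\langle v,\mathrm{sym}_\Phi(M^{-1})v\rangle_\Phi=\langle v,M^{-1}v\rangle_\Phi=\langle Mw,w\rangle_\Phi=\langle Hw,w\rangle_\Phi .
\]
For the right-hand side, substitute $v=Hw+Kw$ and expand:
\[
\langle v,H^{-1}v\rangle_\Phi=\langle Hw,w\rangle_\Phi+2\langle Kw,w\rangle_\Phi+\langle Kw,H^{-1}Kw\rangle_\Phi .
\]
This expansion uses the $\Phi$-self-adjointness of $H^{-1}$. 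The middle term vanishes because $K$ is skew. The last term is nonnegative because $H^{-1}$ is positive definite on $\mathcal{V}$. Comparing the two displays gives the inequality.

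\textbf{Main obstacle.} The calculation itself is short. The delicate part is the bookkeeping: every operator must be restricted to the invariant subspace $\mathcal{V}$, and positive definiteness must be established there, since $L$ is singular on the full space. Once $\mathcal{V}$ is identified as $\mathbf{1}^{\perp_\Phi}$ and its invariance is verified, the argument goes through directly.
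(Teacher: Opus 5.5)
Your proof is correct, but it takes a different route from the paper's.

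The paper transports the problem to Euclidean space through the isometry $T=\Phi^{1/2}|_{\mathcal{V}}$ and sets $B=T(I-P)|_{\mathcal{V}}T^{-1}$. It shows that $\mathrm{sym}(B)$ is similar to $[\mathrm{sym}_{\Phi}(I-P)]|_{\mathcal{V}}$, hence positive definite by Lemma~\ref{sym_spd}. It then invokes the cited matrix inequality $\mathrm{sym}(A^{-1})\preceq[\mathrm{sym}(A)]^{-1}$ (Lemma~\ref{syn_inv_thm}, from Johnson or Yang) and pulls back through $T$.

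You instead stay in $(\mathcal{V},\langle\cdot,\cdot\rangle_{\Phi})$ and prove that inequality directly. Splitting $M=H+K$ into $\Phi$-self-adjoint and $\Phi$-skew parts and substituting $w=M^{-1}v$ gives the exact identity $\langle v,H^{-1}v\rangle_{\Phi}-\langle v,M^{-1}v\rangle_{\Phi}=\langle Kw,H^{-1}Kw\rangle_{\Phi}\ge 0$.

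Your route buys three things:
\begin{itemize}
\item It is self-contained, with no external citation and no change of coordinates.
\item It gets invertibility of $M$ on $\mathcal{V}$ directly from the coercivity $\langle w,Mw\rangle_{\Phi}\ge\lambda_2\|w\|_{\Phi}^2$, rather than from the separate argument of Lemma~\ref{bijectivity_in_V}.
\item It makes the slack explicit: the slack vanishes exactly when $Kw=0$, for example for $\phi$-reversible $P$, where $P^*=P$.
\end{itemize}
The paper's route is more modular, reducing everything to a standard Euclidean fact at the cost of the $T$/$T^{-1}$ bookkeeping.

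Your invariance checks for $P$ and $P^*$ are exactly what is needed for $H$, $K$ and $H^{-1}$ to be well-defined operators on $\mathcal{V}$ with the adjoint relations you use.
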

\begin{proof}
The idea is to use Lemma \ref{syn_inv_thm}, which works in a Euclidean space, in our setting where we use $\Phi$-induced norm $\langle \cdot, \cdot \rangle_{\Phi}$.\\
To do this we define:
\[
T := \Phi^{\frac{1}{2}}|_{\mathcal{V}}: \mathcal{V} \rightarrow\mathcal{W}
\]
T is an isometry between $\langle  \mathcal{V}, \langle \cdot, \cdot \rangle_{\Phi}\rangle$ and $\langle  \mathcal{W}, \langle \cdot, \cdot \rangle_{2}\rangle$:
\[
\langle x,y \rangle_{\Phi} = x^{\top} \Phi^{\frac{1}{2}}\Phi^{\frac{1}{2}} y = \langle Tx, Ty\rangle_{2}
\]
Define $B$:
\[
B := T(I-P)|_{\mathcal{V}}T^{-1}: \mathcal{W} \rightarrow \mathcal{W}
\]
To use Lemma \ref{syn_inv_thm} on $B$, we need $B$ to be invertible (Lemma \ref{bijectivity_in_V}), and $\text{sym}(B) \succ 0$ on $\mathcal{W}$. To see the latter, we show that $\text{sym}(B)$ is similar to $\text{sym}(I-P)_{\Phi}|_{\mathcal{V}}$:
\begin{equation}
\begin{aligned}
\label{inv_rel}
T^{-1}\text{sym}(B)T &= T^{-1}\bigg(\frac{B+B^{\top}}{2}\bigg) T\\
&= T^{-1}\bigg(\frac{T(I-P)|_{\mathcal{V}}T^{-1} + T^{-1}(I-P)|_{\mathcal{V}}^{\top}T}{2}\bigg)T \\
&= \text{sym}_{\Phi}\big[(I-P)|_{\mathcal{V}}\big] = \big[\text{sym}_{\Phi}(I-P)\big]\big|_{\mathcal{V}}
\end{aligned}
\end{equation}
and for Lemma \ref{sym_spd}, $\text{sym}(B) \succ 0$ on $\mathcal{W}$.
So for $\text{sym}(B)$, Lemma \ref{syn_inv_thm} holds:
\[
\langle y , \text{sym}(B^{-1})y\rangle \leq \langle y , \text{sym}(B)^{-1}y\rangle, \quad \forall y \in \mathcal{W}
\]
Analogously to \eqref{inv_rel} we also have:
\[
\begin{aligned}
    T^{-1}\text{sym}(B^{-1})T = \text{sym}_{\Phi}\big[(I-P)^{-1}\big|_{\mathcal{V}}\big]
\end{aligned}
\]
Now we want to prove that:
\[
    \langle v , \text{sym}_{\Phi}\big[(I-P)^{-1}|_{\mathcal{V}}\big]v\rangle_{\Phi} = \langle y, \text{sym}(B^{-1})y \rangle
\]
and:
\[
\langle v , \big[\text{sym}_{\Phi}(I-P)^{-1}\big]\big|_{\mathcal{V}}v\rangle_{\Phi} = \langle y, \text{sym}(B)^{-1}y \rangle
\]
which implies the thesis. Indeed:
\[
\begin{aligned}
\langle v , \text{sym}_{\Phi}\big[(I-P)^{-1}|_{\mathcal{V}}\big]v\rangle_{\Phi} &= v^{\top}T^2T^{-1}\text{sym}(B^{-1})Tv \\
&= \langle Tv, \text{sym}(B^{-1})Tv\rangle \\
&= \langle y, \text{sym}(B^{-1})y \rangle
\end{aligned}
\]
and from \eqref{inv_rel}, taking the inverse:
\[
\begin{aligned}
    \langle v , \big[\text{sym}_{\Phi}(I-P)\big]^{-1}\big|_{\mathcal{V}}v\rangle_{\Phi} &= v^{\top} T^2 \big[\text{sym}_{\Phi}(I-P)\big]^{-1}\big|_{\mathcal{V}}v\\
    &=v^{\top} T^2T^{-1} \text{sym}(B)^{-1}Tv\\
&= \langle Tv, \text{sym}(B)^{-1}Tv\rangle \\
&= \langle y, \text{sym}(B)^{-1}y \rangle
\end{aligned}
\]
\end{proof}

\begin{lemma}
\label{syn_inv_thm}
    Let $A \in \mathbb{R}^{S\times S}$ be an invertible matrix such that $\text{sym}(A) = \frac{A+A^{\top}}{2} \succ 0$, then:
    \[
    \text{sym}(A^{-1}) \preceq \big[\text{sym}(A)\big]^{-1}
    \]
\end{lemma}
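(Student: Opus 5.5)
The plan is to reduce the matrix inequality to a scalar statement about quadratic forms, via a congruence that turns $A$ into a matrix with a particularly simple symmetric part. Write $H := \text{sym}(A) \succ 0$ and $K := \frac{A - A^{\top}}{2}$, so that $A = H + K$ with $K^{\top} = -K$. Set $M := H^{-1/2} A H^{-1/2} = I + N$, where $N := H^{-1/2} K H^{-1/2}$ is again skew-symmetric.

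Then $A^{-1} = H^{-1/2} M^{-1} H^{-1/2}$. Because congruence by the symmetric matrix $H^{-1/2}$ commutes with symmetrization, we get
\[
\text{sym}(A^{-1}) = H^{-1/2}\, \text{sym}(M^{-1})\, H^{-1/2},
\qquad
[\text{sym}(A)]^{-1} = H^{-1} = H^{-1/2} I H^{-1/2}.
\]
Congruence preserves the Loewner order, so it suffices to prove $\text{sym}\big((I+N)^{-1}\big) \preceq I$ for skew-symmetric $N$. Note that $I+N$ is invertible, since $x^{\top}(I+N)x = \|x\|^2 > 0$ for $x \neq 0$.

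For the reduced claim, take any $y$ and let $x := (I+N)^{-1} y$, so that $y = x + Nx$. Then
\[
y^{\top}\,\text{sym}\big((I+N)^{-1}\big)\, y = y^{\top} x = x^{\top} x + x^{\top} N^{\top} x = \|x\|^2 ,
\]
because $x^{\top} N x = 0$ for skew-symmetric $N$. On the other hand,
\[
\|y\|^2 = \|x\|^2 + 2\, x^{\top} N x + \|Nx\|^2 = \|x\|^2 + \|Nx\|^2 \ge \|x\|^2 .
\]
Combining the two displays gives $y^{\top}\,\text{sym}\big((I+N)^{-1}\big)\, y \le \|y\|^2$ for every $y$, which is the reduced claim. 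Undoing the congruence then yields $\text{sym}(A^{-1}) \preceq [\text{sym}(A)]^{-1}$.

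The main obstacle is choosing the right normalization. Once we use $H^{-1/2}$ to whiten the symmetric part, everything else is the single identity that a skew-symmetric quadratic form vanishes. The one point to verify carefully is that symmetrization commutes with congruence, i.e. $\text{sym}(C^{\top} X C) = C^{\top}\,\text{sym}(X)\, C$, which is immediate.

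As an alternative route, if the congruence step causes trouble, one can argue directly. For any $y$, set $x := A^{-1} y$. Then $y^{\top} A^{-1} y = x^{\top} A^{\top} x = x^{\top} H x$, and Cauchy–Schwarz in the $H^{-1}$ inner product gives $(x^{\top} H x)^2 = (y^{\top} x)^2 \le (y^{\top} H^{-1} y)(x^{\top} H x)$, which again yields the bound.
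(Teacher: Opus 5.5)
Your proof is correct. It differs from the paper only in that the paper gives no argument of its own: it defers to Yang (2018, Lemma~2) or Johnson (1973, Theorem~2 with $c=0$).

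You instead whiten the symmetric part, writing $A = H^{1/2}(I+N)H^{1/2}$ with $H=\text{sym}(A)$ and $N$ skew-symmetric. You reduce by congruence to $\text{sym}\bigl((I+N)^{-1}\bigr)\preceq I$ and close with $x^\top N x = 0$. Every step checks: symmetrization commutes with congruence by the symmetric $H^{-1/2}$, congruence preserves the Loewner order, and $y^\top \text{sym}(B)\, y = y^\top B y$ for any $B$.

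Your computation yields more than the inequality. It shows $\text{sym}\bigl((I+N)^{-1}\bigr) = (I+N)^{-\top}(I+N)^{-1} = (I+N^\top N)^{-1}$, so the exact gap is $[\text{sym}(A)]^{-1}-\text{sym}(A^{-1}) = H^{-1/2}\bigl[I-(I+N^\top N)^{-1}\bigr]H^{-1/2}$, which vanishes iff $A$ is symmetric. In the paper's application to $B = T(I-P)|_{\mathcal{V}}T^{-1}$, this means the step is an equality for every vector exactly when $P$ is $\Phi$-reversible. The bare citation hides that fact.

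Your argument also uses only an inner product and skew-symmetry. It therefore applies verbatim to that operator on the subspace $\mathcal{W}$, whereas the cited matrix statement first needs an orthonormal basis of $\mathcal{W}$. What the citation buys is brevity and, in Johnson's case, a more general statement with a parameter $c$ of which this is the $c=0$ instance.

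Your Cauchy--Schwarz alternative is also valid; just treat $y=0$ separately before dividing by $x^\top H x$.
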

\begin{proof}
See \citet{yang2018some}, Lemma 2 or \citet{johnson1973inequality}, Theorem 2 by setting c = 0.
\end{proof}

\begin{lemma}
\label{quadratic_bound}
Let $v \in \mathbb{R}^{|\state|}$, $\mathcal{L}$ symmetric and positive definite matrix, $v_k$ the projection of $v$ onto the $k$ eigenvectors associated with the $k$-lowest eigenvalues of $\mathcal{L}$. Then:
\[
\|v-v_k\|_2^2  \leq \frac{v^\top \mathcal{L}v - v_k^\top \mathcal{L}v_k}{\lambda_{k+1}(\mathcal{L})}
\]
\end{lemma}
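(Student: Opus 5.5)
The plan is to expand everything in an orthonormal eigenbasis of $\mathcal{L}$ and compare the two quadratic forms coordinate by coordinate. Since $\mathcal{L}$ is symmetric, take an orthonormal eigenbasis $e_1,\dots,e_{|\state|}$ with eigenvalues $\lambda_1\le\dots\le\lambda_{|\state|}$. Here $v_k$ is the Euclidean orthogonal projection of $v$ onto $\mathrm{span}(e_1,\dots,e_k)$. Write $c_i:=\langle e_i,v\rangle$, so that
\[
v=\sum_i c_i e_i,\qquad v_k=\sum_{i\le k}c_i e_i,\qquad v-v_k=\sum_{i>k}c_i e_i .
\]

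\textbf{Step 1 (left-hand side).} By Parseval, $\|v-v_k\|_2^2=\sum_{i>k}c_i^2$.

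\textbf{Step 2 (right-hand side numerator).} Because $v_k$ and $v-v_k$ lie in complementary invariant subspaces of $\mathcal{L}$, the cross terms vanish:
\[
v^\top\mathcal{L}v=v_k^\top\mathcal{L}v_k+(v-v_k)^\top\mathcal{L}(v-v_k),
\qquad
(v-v_k)^\top\mathcal{L}(v-v_k)=\sum_{i>k}\lambda_i c_i^2 .
\]

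\textbf{Step 3 (comparison).} For every $i>k$ we have $\lambda_i\ge\lambda_{k+1}$, hence
\[
\sum_{i>k}\lambda_i c_i^2\ge\lambda_{k+1}\sum_{i>k}c_i^2 .
\]
Dividing by $\lambda_{k+1}>0$ gives the claim.

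The only delicate point is the division in Step 3, which requires $\lambda_{k+1}>0$. Under the stated positive-definiteness hypothesis this is automatic. In the intended application $\mathcal{L}$ is only positive semidefinite with $\lambda_1=0$, but there $k\ge1$ and $\lambda_2>0$ by irreducibility, so $\lambda_{k+1}\ge\lambda_2>0$ still holds and the argument goes through unchanged. One also uses $v_k^\top\mathcal{L}v_k\ge0$ when the lemma is applied in its weaker form, as in \cref{lemma:approx_error}.

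If eigenvalues are repeated, the eigenbasis should be chosen so that $e_1,\dots,e_k$ span the subspace onto which $v_k$ projects. This choice does not affect the argument.
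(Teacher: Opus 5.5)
Your proof is correct and takes essentially the same route as the paper: expand $v$ in an orthonormal eigenbasis of $\mathcal{L}$, use Parseval to get $\|v-v_k\|_2^2=\sum_{i>k}c_i^2$, and bound $\sum_{i>k}\lambda_i c_i^2\ge\lambda_{k+1}\sum_{i>k}c_i^2$. Your version is slightly sharper in two ways: it shows the numerator equals $\sum_{i>k}\lambda_i c_i^2$ exactly, where the paper only writes an inequality, and it notes that the argument really needs only $\lambda_{k+1}>0$, which holds in the semidefinite application because $\lambda_{k+1}\ge\lambda_2>0$.
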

\begin{proof}
    The eigenvectors of $\mathcal{L}$, $u_1, \ldots, u_{|\state|}$ are such that $\mathrm{span}(u_1, \ldots, u_{|\state|}) = \mathcal{V}$ and since $(v \in \mathcal{V}$, we have that $v$ can be expressed as linear combination of $u_1, \ldots, u_{|\state|}$:
\[
v = \sum_{i=1}^{|\state|}a_iu_i, \qquad v_k=\sum_{i=1}^ka_iu_i
\]
This leads to:
\[
\|v-v_k\|_2^2=\sum_{i=k+1}^{|\state|}a_i^2
\]
The for what concerns the square of the $L$-induced norm of $v$, $v^\top Lv$:
\[
v^\top L v= v^\top \sum_{i=1}^{|\state|}\lambda_iu_iu_i^\top v= \sum_{i=1}^{|\state|}\lambda_ia_i^2
\]
To conclude we have that:
\[
\|v-v_k\|_2^2 = \sum_{i=k+1}^{|\state|}a_i^2 \leq \frac{\sum_{i=k+1}^{|\state|}\lambda_i a_i^2}{\lambda_{k+1}(\mathcal{L})} \leq \frac{v^\top \mathcal{L}v - v_k^\top \mathcal{L}v_k}{\lambda_{k+1}(\mathcal{L})}
\]
\end{proof}
\begin{lemma}
\label{bijectivity_in_V}
    Given $P \in \mathbb{R}^{\state \times \state}$ stochastic ergodic with stationary distribution $\phi$, let $R =\frac{( P +P^*)}{2}$ where $P^*= \Phi^{-1}P^{\top}\Phi$ and $\Phi = \text{diag}(\phi)$. Then $I-P$ and $I-R$ are bijective linear maps when restricted to $\mathcal{V} = \{ v \in \mathbb{R}^{|\state|}| \phi^{\top}v= 0\}$.
\end{lemma}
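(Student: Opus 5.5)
Since $\mathcal{V}$ and $\mathbb{R}^{|\state|}$ are finite-dimensional, it suffices to show that $I-P$ and $I-R$ map $\mathcal{V}$ into $\mathcal{V}$ and have trivial kernel there; injectivity of an endomorphism of a finite-dimensional space gives bijectivity.

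\emph{Invariance.} The key identity is $\phi^\top P = \phi^\top$. It gives, for $v\in\mathcal{V}$, $\phi^\top(I-P)v = \phi^\top v - \phi^\top v = 0$. For the adjoint we use $P\mathbf{1}=\mathbf{1}$:
\[
\phi^\top P^* v = \mathbf{1}^\top\Phi\Phi^{-1}P^\top\Phi v = (P\mathbf{1})^\top\Phi v = \mathbf{1}^\top\Phi v = \phi^\top v = 0 .
\]
Hence $R=(P+P^*)/2$, and therefore $I-R$, also preserve $\mathcal{V}$.

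\emph{Kernel of $I-P$.} Suppose $(I-P)v=0$ with $v\in\mathcal{V}$. Then $v$ is a right eigenvector of $P$ for eigenvalue $1$. By irreducibility (Perron--Frobenius), this eigenvalue is simple, so $v=c\mathbf{1}$. The constraint $\phi^\top v = c = 0$ then forces $v=0$.

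\emph{Kernel of $I-R$.} This is the main step. I would use the quadratic form from the proof of \cref{lemma:approx_error}:
\[
v^\top\Phi(I-R)v = v^\top\Phi(I-P)v = \tfrac12\sum_{s,s'}\phi(s)P(s'|s)\bigl(v(s)-v(s')\bigr)^2 .
\]
This expansion uses $\sum_{s'}P(s'|s)=1$ and stationarity $\sum_s\phi(s)P(s'|s)=\phi(s')$.

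If $(I-R)v=0$, the form vanishes. Then $v(s)=v(s')$ whenever $P(s'|s)>0$, and irreducibility of the chain makes $v$ constant. Again $\phi^\top v=0$ gives $v=0$.

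Alternatively, $I-R$ is $\Phi$-self-adjoint and positive semidefinite, since it is similar to the symmetric $\mathcal{L}$. Its kernel is then exactly the set where the form vanishes, which yields the same conclusion.

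The only point needing care is the positivity of $\phi$. Irreducibility ensures $\phi(s)>0$ for all $s$, so $\Phi^{-1}$ exists and a vanishing weighted sum forces each squared difference along an edge to be zero.
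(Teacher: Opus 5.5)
Your proof is correct. The invariance step and the kernel of $I-P$ match the paper's argument: stationarity $\phi^\top P=\phi^\top$ for $I-P$, row-stochasticity $P\mathbf{1}=\mathbf{1}$ for $P^*$, and Perron--Frobenius to get $\ker(I-P)=\mathrm{span}(\mathbf{1})$.

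The difference is in the kernel of $I-R$. The paper only says this case is ``analogous since $R$ is ergodic as well,'' meaning it reapplies Perron--Frobenius to $R$. That implicitly requires checking three things the paper never writes out:
\begin{itemize}
\item $R$ is entrywise nonnegative;
\item $R$ is row-stochastic, since $R\mathbf{1}=\tfrac12(\mathbf{1}+\Phi^{-1}P^\top\phi)=\mathbf{1}$;
\item $R$ is irreducible, since its support contains that of $P$.
\end{itemize}

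You instead use the identity $v^\top\Phi(I-R)v=v^\top\Phi(I-P)v=\tfrac12\sum_{s,s'}\phi(s)P(s'|s)(v(s)-v(s'))^2$ together with connectivity of the support graph. Your route is more self-contained and needs no spectral theory for $R$. It also makes explicit that $I-R$ is $\Phi$-positive semidefinite and annihilates only constants, which is the fact Lemma~\ref{sym_spd} later asserts without proof. Because this quadratic form is shared by $I-P$ and $I-R$, the same argument also handles $\ker(I-P)$, so you could drop Perron--Frobenius entirely. The paper's version is shorter on the page but rests on the unverified ergodicity of $R$.
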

\begin{proof}
    We first need to prove that $\mathcal{V}$ is invariant under $I-P$ and $I-R$, namely $\forall v \in \mathcal{V}$, $(I-P)v \in \mathcal{V}$ and $(I-R)v \in \mathcal{V}$.
    \[
    \phi^{\top}(I-P)v = \phi^{\top}v-\phi^{\top}Pv = \phi^{\top}v-\phi^{\top}v = 0, \quad \forall v \in \mathcal{V}
    \]
    \[
    \begin{aligned}
    \phi^{\top}(I-R)v &= \frac{1}{2}\phi^{\top}(I-P)v + \frac{1}{2}\phi^{\top}(I-P^*)v \\
    &= 0+ \frac{1}{2}\phi^{\top}(I-\Phi^{-1}P^{\top}\Phi)v = \frac{1}{2}(\phi^{\top}v - \phi^Tv) = 0, \quad \forall v \in \mathcal{V}
    \end{aligned}
    \]
    The last inequality is due to:
    \[
    \phi^{\top}(\Phi^{-1}P^{\top}\Phi) = x^{\top} = \phi^T
    \]
    since:
    \[
    x^{\top}_j = \sum_{i=1}^{|\state|}\phi_i\frac{1}{\phi_i}(P^{\top})_{ij}\phi_j = \sum_{i=1}^{|\state|} P_{ji}\phi_j = \phi^{\top}_j
    \]
    Now to prove the invertibility of $I-P$ and $I-R$ in $\mathcal{V}$ it suffices to prove they are injective. \\
Let's start with $I-P$, which we denote as $(I-P)|_{\mathcal{V}}: \mathcal{V} \rightarrow \mathcal{V}$ when restricted to $\mathcal{V}$. The injectivity of $(I-P)|_{\mathcal{V}}$ is equivalent to $\text{Ker}((I-P)|_{\mathcal{V}}) = \{\mathbf{0}\}$.\\
Let:
\[
(I-P)v = 0, \quad v \in \mathcal{V}
\]
So $v$ is an eigenvector of $P$ associated with the Perron eigenvalue $\lambda_{\state} = 1$ and, given that $P$ is ergodic, $\lambda_{\state}$ is simple and its eigenspace is $\mathrm{span}(\mathbf{1})$. This means that $ v = c\mathbf{1}$ for some $c \in \mathbb{R}$. \\
Now, since $v \in \mathcal{V}$:
\[
0 = \phi^{\top}v = \phi^{\top}\mathbf{1}c = c
\]
so $v = \mathbf{0}$, thus $\text{Ker}((I-P)|_{\mathcal{V}}) = \{\mathbf{0}\}$.
Now for $(I-R)|_{\mathcal{V}}$, it's analogous since $R$ is ergodic as well.
\end{proof}

\begin{lemma}
\label{sym_spd}
$\big[\text{sym}_{\Phi}(I-P)\big]\big|_{\mathcal{V}} \succ 0$ in $\langle \mathcal{V}, \langle\cdot, \cdot\rangle_{\Phi} \rangle$
\end{lemma}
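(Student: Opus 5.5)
We need to prove: $\mathrm{sym}_\Phi(I-P)|_{\mathcal V} = (I-R)|_{\mathcal V} = L|_{\mathcal V}$ is positive definite with respect to $\langle\cdot,\cdot\rangle_\Phi$, i.e.\ $\langle v, Lv\rangle_\Phi > 0$ for all nonzero $v\in\mathcal V$. Self-adjointness of $R$ in the $\Phi$-inner product was already shown in the proof of \cref{lemma:approx_error}, and invariance of $\mathcal V$ under $I-R$ in \cref{bijectivity_in_V}, so only positivity needs work.

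The plan has three steps.

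\textbf{Step 1: reduce to the quadratic form of $I-P$.} By \eqref{herm_inv}, $\langle v,(I-P^*)v\rangle_\Phi = \langle v,(I-P)v\rangle_\Phi$. Hence $\langle v, Lv\rangle_\Phi = \langle v,(I-P)v\rangle_\Phi = v^\top\Phi v - v^\top \Phi P v$.

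\textbf{Step 2: write this as a Dirichlet form.} Using $\sum_j P_{ij}=1$ and stationarity $\sum_i \phi_i P_{ij}=\phi_j$, the standard expansion gives
\[
v^\top\Phi(I-P)v=\frac12\sum_{i,j}\phi_i P_{ij}\bigl(v(i)-v(j)\bigr)^2 .
\]
To check it, expand the square. The $v(i)^2$ terms sum to $\sum_i\phi_i v(i)^2$, the $v(j)^2$ terms sum to $\sum_j\phi_j v(j)^2$ by stationarity, and the cross term is $-2v^\top\Phi P v$. This shows $\langle v,Lv\rangle_\Phi\ge 0$.

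\textbf{Step 3: strictness.} This is the only real obstacle, and I would handle it by irreducibility. Suppose the form vanishes. Since every $\phi_i>0$, we get $v(i)=v(j)$ whenever $P_{ij}>0$. Irreducibility means every pair of states is joined by a directed path of positive-probability transitions, so $v$ is constant, $v=c\mathbf 1$. Then $v\in\mathcal V$ forces $0=\phi^\top v=c$, so $v=0$.

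An alternative for strictness is to invoke the injectivity of $(I-R)|_{\mathcal V}$ from \cref{bijectivity_in_V}. A positive semidefinite self-adjoint operator that is injective is positive definite, since its quadratic form vanishes only on its kernel: if $\langle v,Lv\rangle_\Phi=0$, then $\|L^{1/2}v\|_\Phi=0$, so $Lv=0$. I would mention this as a cross-check. The direct path argument is cleaner, though, because the ergodicity of $R$ was asserted rather than shown in \cref{bijectivity_in_V}.
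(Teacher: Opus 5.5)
Your argument is correct, but your main route differs from the paper's; the paper's proof is essentially your \emph{alternative} cross-check.

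The paper argues abstractly. It obtains $\langle v,\text{sym}_{\Phi}(I-P)v\rangle_{\Phi}\ge 0$ from $L\succeq 0$, which it inherits from the spectral facts about $\mathcal{L}=\Phi^{1/2}L\Phi^{-1/2}$. It then simply asserts $\ker(\text{sym}_{\Phi}(I-P))=\mathrm{span}(\mathbf{1})$ without proof. Finally, it argues by contradiction: a vanishing quadratic form gives $\|\text{sym}_{\Phi}(I-P)^{1/2}v\|_{\Phi}=0$, hence $v\in\mathrm{span}(\mathbf{1})$, hence $v=0$ on $\mathcal{V}$.

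You instead derive the explicit Dirichlet-form identity $v^\top\Phi Lv=\tfrac12\sum_{i,j}\phi_iP_{ij}(v(i)-v(j))^2$ from row-stochasticity and stationarity. You then read off both nonnegativity and the null set of the form from it via a directed-path argument. What this buys you:
\begin{itemize}
\item It is self-contained and actually proves the kernel characterization that the paper leaves unjustified.
\item It does not depend on the unproved ergodicity of $R$ invoked in \cref{bijectivity_in_V}.
\item It shows that only irreducibility and $\phi>0$ matter; aperiodicity plays no role.
\item It ties the lemma directly to the GDO loss, since your right-hand side is, up to the factor $\tfrac12$, $\mathbb{E}_{s\sim\phi,\,s'\sim P(\cdot|s)}[(v(s)-v(s'))^2]$.
\end{itemize}
The paper's route is shorter given the spectral machinery already in place, but it rests on facts that are stated rather than proved.
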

\begin{proof}
    We need to prove that:
    \[
    \langle v, \text{sym}_{\Phi}(I-P)v \rangle_{\Phi} > 0, \quad \forall v \in \mathcal{V}
    \]
    We know that:
    \begin{equation}
    \label{laplacian_spd}
            \langle v, \text{sym}_{\Phi}(I-P)v \rangle_{\Phi} \geq 0, \quad \forall v \in \mathbb{R}^{|\state|}
    \end{equation}
    since:
    \[
    \langle v, \text{sym}_{\Phi}(I-P)v \rangle_{\Phi} = \langle v, \Phi(I-R)v \rangle = \langle v, Lv \rangle
    \]
    and $L \succeq 0$.
    Also $\text{ker}(\text{sym}_{\Phi}(I-P)) = \mathrm{span}(\mathbf{1})$.\\
    Suppose by contradiction that $v \in \mathcal{V}$ with $v \neq \mathbf{0}$ is such that:
    \[
    \langle v, \text{sym}_{\Phi}(I-P)v \rangle_{\Phi} = 0
    \]
    This means that:
    \[
    \|\text{sym}_{\Phi}(I-P)^{\frac{1}{2}}v\|_{\Phi}  = 0 \iff \text{sym}_{\Phi}(I-P)v = \mathbf{0}
    \]
    thus:
    \[
    v \in \mathrm{span}(\mathbf{1})
    \]
    but $v \in \mathcal{V}$, so $\phi^{\top}v = 0$, which means that:
    \[
    \phi^{\top}c \mathbf{1} = 0
    \]
    for some $c \in \mathbb{R}$.
    Now given that $\phi^{\top} \mathbf{1} = 1$, we conclude that $c = 0$, and so $v = \mathbf{0}$ which contradicts the hypothesis, thus the thesis.

\end{proof}
\begin{proposition}
\label{actual_laplacian}
Let $\phi$ be the the stationary distribution of the Markov chain induced by $\mathcal{P}^{\pi}$ and define $D(u,v)$:
\[
D(u,v) = \frac{1}{2}\frac{d\phi(u)}{d\mu(u)}\frac{dP(v\mid u)}{d\mu(v)}+ \frac{1}{2}\frac{d\phi(v)}{d\mu(v)}\frac{dP(u\mid v)}{d\mu(u)}
\]
Let $L$ be the linear, self-adjoint (Hilbert-Schmidt integral operator thus compact) operator:
\[
    L f(u) = f(u)\int_S D(u,v)d \mu(v) - \int_S f(v) D(u,v) \, d\mu(v).
\]
Then:
\[
    \sum_{i=1}^k\langle f_i,Lf_i \rangle_{\mathcal{H}} =\frac{1}{2} \int_S \int_S \sum_{i=1}^k (f_i(u) - f_i(v))^2 D(u,v) \, d\mu(u) \, d\mu(v) = \mathbb{E}_{u \sim \phi, v \sim \mathcal{P}^{\pi}(\cdot \mid u)}\bigg[\sum_{i=1}^k\big(f_i(u) - f_i(v) \big)^2\bigg]
\]
\end{proposition}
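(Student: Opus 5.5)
By linearity of the sum over $i$ it suffices to treat a single function $f=f_i$ and then add over $i=1,\dots,k$. I will interpret $\langle\cdot,\cdot\rangle_{\mathcal H}$ as the $L^2(\mu)$ inner product $\langle f,g\rangle_{\mathcal H}=\int_S f g\,d\mu$. The measure $\phi$ is already absorbed in the kernel $D$, and this choice matches Proposition~\ref{prop:equivalence} in the finite case when $\mu$ is counting measure. Throughout, write $d(u):=\int_S D(u,v)\,d\mu(v)$ for the degree function, so that $Lf(u)=d(u)f(u)-\int_S D(u,v)f(v)\,d\mu(v)$.

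\textbf{Step 1 (degree is the stationary density).} I first compute $d(u)$. The first half of $D$ integrates in $v$ to $\frac12\frac{d\phi}{d\mu}(u)$, since $P(\cdot\mid u)$ is a probability measure. The second half integrates to $\frac12\frac{d\phi}{d\mu}(u)$ by stationarity, $\int_S \phi(dv)P(du\mid v)=\phi(du)$. Hence $d(u)=\frac{d\phi}{d\mu}(u)$. This is not strictly needed for the identity, but it explains why the kernel is the correct symmetrization. It also makes $L$ well defined on $\mathcal H$, together with the Hilbert--Schmidt assumption and Fubini.

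\textbf{Step 2 (Dirichlet form).} Expand
\[
\langle f,Lf\rangle_{\mathcal H}=\int\!\!\int f(u)^2 D(u,v)\,d\mu(v)\,d\mu(u)-\int\!\!\int f(u)f(v)D(u,v)\,d\mu(u)\,d\mu(v).
\]
Since $D(u,v)=D(v,u)$ by construction, the first double integral equals $\frac12\int\!\!\int\big(f(u)^2+f(v)^2\big)D\,d\mu\,d\mu$. Therefore $\langle f,Lf\rangle_{\mathcal H}=\frac12\int\!\!\int (f(u)-f(v))^2D(u,v)\,d\mu(u)\,d\mu(v)$, which is the first claimed equality. This is the continuous analogue of Equation~\eqref{ep:quadratic_form}.

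\textbf{Step 3 (expectation form).} Substitute the two halves of $D$. The first half gives $\frac12\int\!\!\int (f(u)-f(v))^2\,\phi(du)P(dv\mid u)$. For the second half I swap the names $u\leftrightarrow v$ and use the symmetry of $(f(u)-f(v))^2$, so it gives the same quantity. Altogether
\[
\int\!\!\int (f(u)-f(v))^2 D\,d\mu\,d\mu=\mathbb{E}_{u\sim\phi,\,v\sim P(\cdot\mid u)}\big[(f(u)-f(v))^2\big].
\]

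\textbf{Main obstacle: the factor $\frac12$.} Combining Steps 2 and 3 yields $\langle f,Lf\rangle_{\mathcal H}=\frac12\,\mathbb{E}\big[(f(u)-f(v))^2\big]$, not the expectation itself. To reconcile this with the statement I must check the normalization. One option is to drop the $\frac12$ in $D$, as in $\WuD$ of Section~\ref{sec:unifying}, which differs by exactly this factor. The other is to read the final equality as holding up to the positive constant $\frac12$. Either reading is harmless for GDO, because it rescales the objective and leaves the minimizers unchanged.

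I would state this normalization explicitly in the proof. The remaining points are routine: justifying Fubini from $f_i\in L^2$ together with integrability of $D$, and then summing over $i=1,\dots,k$.
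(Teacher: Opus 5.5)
Your argument is correct and follows the paper's proof step for step. You expand $\langle f,Lf\rangle_{\mathcal H}$ in $L^2(\mu)$, symmetrize using $D(u,v)=D(v,u)$ to reach $\frac12\int\!\!\int (f(u)-f(v))^2D\,d\mu\,d\mu$, then split $D$ into its two halves and relabel $u\leftrightarrow v$.

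The factor you flag is a genuine error in the statement, not in your reasoning. The paper's proof arrives at $\frac12\int\!\!\int\sum_i (f_i(u)-f_i(v))^2\,dP(v\mid u)\,d\phi(u)$ and then equates it to the expectation, silently dropping the $\frac12$. A two-state check confirms this: take the swap chain with uniform $\phi$ and $f=(1,-1)$. Then $\langle f,Lf\rangle_{\mathcal H}=2$ while the expectation equals $4$. So the last equality should read $\frac12\,\mathbb{E}[\cdot]$, which, as you note, leaves the GDO minimizers unchanged.

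One correction to your aside: $\widetilde D$ from Section~\ref{sec:unifying} carries the same halves, because $\widetilde D(u,v)\,\frac{d\phi}{d\mu}(u)\frac{d\phi}{d\mu}(v)=D(u,v)$, so it does not account for the discrepancy. Dropping the halves in $D$ would double the degree to $2\,\frac{d\phi}{d\mu}$. The operator would become $2(\Phi-K)$ rather than the $\Phi L$ of Appendix~\ref{app:unifying} that matches $\widetilde L$. The right fix is therefore the $\frac12$ in front of the expectation, not a change to $D$.
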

\begin{proof}
For the first equality:
\[
\begin{aligned}
\langle f,Lf \rangle_{\mathcal{H}} &= \int_Sf(u)\bigg(f(u)\int_S D(u,v)d\mu(v) - \int_Sf(v)D(u,v)d\mu(v) \bigg)d\mu(u) = \\
&= \int_Sf(u) \int_S\bigg(f(u) - f(v)\bigg) D(u,v)d\mu(v) d\mu(u) = \\
&= \int_S \int_S f(u)^2D(u,v)d\mu(u)d\mu(v)- \int_S \int_Sf(u) f(v)D(u,v)d\mu(u)d\mu(v) = \\
&= \frac{1}{2} \int_S \int_Sf(u)^2 D(u,v)d\mu(v)d\mu(u)+ \frac{1}{2} \int_S \int_Sf(v)^2 D(v,u)d\mu(u)d\mu(v) - \\
&-\int_S\int_Sf(u)f(v)D(u,v)d\mu(u)d\mu(v) = \\
&=\frac{1}{2} \int_S \int_S \bigg(f(u) - f(v)\bigg)^2 D(u,v) \, d\mu(u) \, d\mu(v)
\end{aligned}
\]
So for $\sum_{i=1}^k\langle f_i,Lf_i \rangle_{\mathcal{H}} $:
\[
\sum_{i=1}^k\langle f_i,Lf_i \rangle_{\mathcal{H}} =\frac{1}{2} \int_S \int_S \sum_{i=1}^k (f_i(u) - f_i(v))^2 D(u,v) \, d\mu(u) \, d\mu(v)
\]
Then by defining:
\[
D(u,v) = \frac{1}{2}\frac{d\phi(u)}{d\mu(u)}\frac{dP(v\mid u)}{d\mu(v)}+ \frac{1}{2}\frac{d\phi(v)}{d\mu(v)}\frac{dP(u\mid v)}{d\mu(u)},
\]
we have that:
\[
\begin{aligned}
\sum_{i=1}^k\langle f_i,Lf_i \rangle_{\mathcal{H}} &=\frac{1}{2} \int_S \int_S \sum_{i=1}^k (f_i(u) - f_i(v))^2 D(u,v) \, d\mu(u) \, d\mu(v) =\\
&= \frac{1}{4}\int_S \int_S\sum_{i=1}^k\bigg(f_i(u)- f_i(v) \bigg)^2\frac{d\phi(u)}{d\mu(u)}\frac{dP(v\mid u)}{d\mu(v)}d\mu(u) \, d\mu(v) + \\
&+\frac{1}{4}\int_S \int_S\sum_{i=1}^k\bigg(f_i(u)- f_i(v) \bigg)^2\frac{d\phi(v)}{d\mu(v)}\frac{dP(u\mid v)}{d\mu(u)}d\mu(u) \, d\mu(v) = \\
&= \frac{1}{2}\int_S \int_S \sum_{i=1}^k\bigg(f_i(u)- f_i(v) \bigg)^2dP(v \mid u) d\phi(u) = \mathbb{E}_{u \sim \phi, v \sim P(\cdot \mid u)}\bigg[\sum_{i=1}^k\big(f_i(u) - f_i(v) \big)^2\bigg]
\end{aligned}
\]
\end{proof}
\ProjectionError*
\begin{proof}
    Let $\mathcal{U} = \mathrm{span}(\Psi)$ and $\mathcal{\Tilde{U}} = \mathrm{span}(\Tilde{\Psi})$.
Since $A$ is symmetric, we have that its eigenvectors span $\mathbb{R}^{|\mathcal{S}|}$, thus:

$$
\mathbb{R}^{|\mathcal{S}|} = \mathcal{U} \oplus \mathcal{U}^{\bot}.
$$
Each $\tilde{u}_i$ can be decomposed into the sum of a component in $\mathcal{U}$ and another in $\mathcal{U}^{\bot}$:
$$
\tilde{u}_i = \tilde{u}_i^{\parallel} + \tilde{u}_i^{\bot}
$$
where $\tilde{u}_i^{\parallel} \in \mathcal{U}$ and $\tilde{u}_i^{\bot} \in \mathcal{U}^{\bot}$ for all $i = 1, \ldots, k$.
In this way, the hypothesis can be rewritten as follows:
$$
\sum_{i = 1}^k \tilde{u}_i^{\parallel \top}A\tilde{u}_i^{\parallel} + \sum_{i = 1}^k\tilde{u}_i^{\bot \top}A\tilde{u}_i^{\bot} - \sum_{i=1}^{k}\lambda_i < \epsilon.
$$
We will first show that:
\begin{equation}
\label{first_part_proof}
\sum_{i = 1}^k \tilde{u}_i^{\parallel \top}A\tilde{u}_i^{\parallel} + \sum_{i = 1}^k\tilde{u}_i^{\bot \top}A\tilde{u}_i^{\bot} - \sum_{i=1}^{k}\lambda_i \geq \sum_{i = 1}^k(\lambda_{k+1}-\lambda_{k})\|\tilde{u}_i^{\bot}\|^2,
\end{equation}
so that:
\begin{equation}\label{step1}
\sum_{i = 1}^k\|\tilde{u}_i^{\bot}\|^2 < \frac{\epsilon }{\lambda_{k+1}-\lambda_{k}}.
\end{equation}

Then we will show that:
\begin{equation}
\label{second_part_proof}
    \sum_{i = 1}^k\|\tilde{u}_i^{\bot}\|^2 =
\|\mathbf{Sin\Theta}(\mathcal{U}, \mathcal{\tilde{U}})\|_F^2 \ge \frac{1}{2}\|\Pi_{\Psi} - \Pi_{\Tilde{\Psi}}\|^2_2,
\end{equation}
where the last inequality is by Lemma~\eqref{lemma:projector_distance} together with the Frobenius-spectral norm inequality. Together, Equations~\eqref{step1} and~\eqref{second_part_proof} imply the thesis. So, we divide the rest of the proof in two steps.

\paragraph{Step 1.} Let's prove Equation~\eqref{first_part_proof} first. Notice that:
\begin{equation}
    \label{u_perp_ineq}
    \sum_{i = 1}^k\tilde{u}_i^{\bot \top}A\tilde{u}_i^{\bot}
    \geq \sum_{i = 1}^k \lambda_{k+1}\|\tilde{u}_i^{\bot}\|^2,
\end{equation}
This holds since, for every $i$, every $\tilde{u}_i^{\bot}$ can be written as a linear combination of $u_{k+1}, \ldots, u_{|\mathcal{S}|}$:
\[
\tilde{u}_i^{\bot} = \sum_{m = k+1}^{|\mathcal{S}|}b_mu_{m},
\]
so that:
\[
\begin{aligned}
\tilde{u}_i^{\bot \top}A\tilde{u}_i^{\bot} &= \left(\sum_{m = k+1}^{|\mathcal{S}|}b_mu_{m}\right)^{\top}A\left(\sum_{h = k+1}^{|\mathcal{S}|}b_hu_{h}\right)\\
&= \left(\sum_{m = k+1}^{|\mathcal{S}|}b_mu_{m}\right)^{\top}\left(\sum_{h = k+1}^{|\mathcal{S}|}\lambda_hb_hu_{h}\right)\\
&= \sum_{m = k+1}^{|\mathcal{S}|}\lambda_mb_m^2 \geq \lambda_{k+1}\sum_{m = k+1}^{|\mathcal{S}|}b_m^2 =\lambda_{k+1}\|\tilde{u}_i^{\bot}\|^2.
\end{aligned}
\]

Given Equation~\eqref{u_perp_ineq}, we just need to show that:
\begin{equation}
    \label{u_parallel_ineq}
    \sum_{i = 1}^k \tilde{u}_i^{\parallel \top}A\tilde{u}_i^{\parallel} -  \sum_{i=1}^{k}\lambda_i \geq  \sum_{i=1}^k-\lambda_{k}\|\tilde{u}_i^{\bot}\|^2.
\end{equation}
To prove the latter, we proceed by contradiction. Namely, suppose that:
$$
\sum_{i = 1}^k \tilde{u}_i^{\parallel \top}A\tilde{u}_i^{\parallel} < \sum_{i=1}^k(\lambda_i -\lambda_k\|\tilde{u}_i^{\bot}\|^2).
$$
Since $\|\tilde{u}_i\|_2 = 1$ for all $i = 1,\ldots, k$, we have that:
$$
\|\tilde{u}_i^{\bot}\|^2 = 1 -\|\tilde{u}_i^{\parallel}\|^2,
$$
thus:
\begin{equation}
\label{to_contradict}
    \sum_{i = 1}^k \tilde{u}_i^{\parallel \top}A\tilde{u}_i^{\parallel} < \sum_{i=1}^k(\lambda_i - \lambda_k) + \sum_{i=1}^k\lambda_k\|\tilde{u}_i^{\parallel}\|^2.
\end{equation}

Since $\tilde{u}_i^{\parallel} \in \mathcal{U}$, it can be written as a linear combination of $u_1, \ldots, u_k$:
$$
\tilde{u}_i^{\parallel} = \alpha_{1i} u_1 + \alpha_{2i}u_2 +\ldots +\alpha_{ki}u_k,
$$
which means:
$$
\tilde{u}_i^{\parallel \top}A\tilde{u}_i^{\parallel} = (\alpha_{1i} u_1^{\top}+ \ldots +\alpha_{ki}u_k^{\top})(\lambda_1 \alpha_{1i} u_1+ \ldots +\lambda_k \alpha_{ki}u_k) = \sum_{j=1}^k\lambda_j \alpha_{ji}^2.
$$
Moreover:
$$
\|\tilde{u}_i^{\parallel}\|^2_2 = \sum_{j=1}^k \alpha_{ji}^2.
$$
By substituting the last two identities into Equation~\eqref{to_contradict}:
$$
\sum_{i=1}^k\sum_{j=1}^k\lambda_j \alpha_{ji}^2 < \sum_{i=1}^k(\lambda_i - \lambda_k) + \sum_{i=1}^k\sum_{j=1}^k\lambda_k \alpha_{ji}^2.
$$
Rearranging:
$$
\sum_{i=1}^k\sum_{j=1}^k(\lambda_j - \lambda_k) \alpha_{ji}^2 < \sum_{i=1}^k(\lambda_i - \lambda_k),
$$
$$
\sum_{j=1}^k(\lambda_j - \lambda_k)\sum_{i=1}^k \alpha_{ji}^2 < \sum_{i=1}^k(\lambda_i - \lambda_k),
$$
\begin{equation}\label{contradict}
\sum_{j=1}^k(\lambda_k - \lambda_j)\sum_{i=1}^k \alpha_{ji}^2 > \sum_{j=1}^k(\lambda_k - \lambda_j).
\end{equation}
Since $\lambda_k - \lambda_j \geq 0$ for all $j = 1, \ldots, k$, to show the contradiction we just need to prove that $\sum_{i=1}^k \alpha_{ji}^2 \leq 1$ for all $j = 1, \ldots, k$.\\
By defining $M \in \mathbb{R}^{k \times k}$ as the matrix containing the projections of $\tilde{u}_i$ on $\mathcal{U}$:
$$
M =
\begin{pmatrix}
m_{11}  & \cdots & m_{1k} \\
\vdots  & \ddots & \vdots \\
m_{k1}  & \cdots & m_{kk}
\end{pmatrix} = \Psi^{\top} \begin{pmatrix}
    \tilde{u}_1 & \cdots & \tilde{u}_k
\end{pmatrix} = \Psi^{\top}\Tilde{\Psi}.
$$
Denote by $M_j^{\top}$ the $j$-th row of $M$, so that $\|M_j^\top\|_2^2 = \sum_{i=1}^k m_{ji}^2$.\\
Note that $M_j^{\top} = e_j^{\top}M = e_j^{\top}\Psi^{\top}\Tilde{\Psi}$, where $e_j$ is the $j$-th element of the canonical basis of $\mathbb{R}^k$. Hence:
$$
\begin{aligned}
\|M_j^\top\|_2^2 = e_j^{\top}\Psi^{\top}\Tilde{\Psi}\Tilde{\Psi}^{\top}\Psi e_j  &= \langle \Psi e_j, \Pi_{\Tilde{\Psi}}\Psi e_j\rangle\\
&\leq\|\Psi e_j\|_2 \|\Pi_{\Tilde{\Psi}}\Psi e_j\|_2 \\
&\leq \|\Psi\|_2\|e_j\|_2\|\|\Pi_{\Tilde{\Psi}}\|_2\|\Psi\|_2\|e_j\|_2 \leq 1
\end{aligned}
$$
since $\Psi$ is orthogonal and $\Pi_{\Tilde{\Psi}}$ is a projection matrix.
This is in contradiction with Equation~\ref{contradict}, proving that~\eqref{u_parallel_ineq} holds, and consequently also~\eqref{first_part_proof} holds.

\paragraph{Step 2.} To conclude the proof of the lemma, we just need to prove Equation~\eqref{second_part_proof}.
Notice that:
\begin{equation*}
\begin{aligned}
    \sum_{i = 1}^k\|\tilde{u}_i^{\bot}\|^2 &= \sum_{i=1}^k \|(I-\Pi_{\Psi})\tilde{u}_i\|_2^2 \\
    &= \sum_{i = 1}^k\langle (I-\Pi_{\Psi})\tilde{u}_i, (I-\Pi_{\Psi})\tilde{u}_i \rangle \\
    &= \sum_{i = 1}^k\langle \tilde{u}_i, (I-\Pi_{\Psi})\tilde{u}_i \rangle - \langle \Pi_{\Psi}\tilde{u}_i, (I-\Pi_{\Psi})\tilde{u}_i \rangle \\
    &= \sum_{i = 1}^k\langle \tilde{u}_i, (I-\Pi_{\Psi})\tilde{u}_i \rangle - \langle \tilde{u}_i^{\parallel}, \tilde{u}_i^{\bot} \rangle \\
    &= \sum_{i = 1}^k\langle \tilde{u}_i, (I-\Pi_{\Psi})\tilde{u}_i \rangle.
\end{aligned}
\end{equation*}
So we have that:
\begin{equation*}
\begin{aligned}
    \sum_{i = 1}^k\|\tilde{u}_i^{\bot}\|^2 &= \sum_{i = 1}^k \tilde{u}_i^{\top}(I-\Pi_{\Psi})\tilde{u}_i = \text{Tr}\left(\sum_{i = 1}^k \tilde{u}_i^{\top}(I-\Pi_{\Psi})\tilde{u}_i\right) \\
    &= \sum_{i = 1}^k \text{Tr} \left(\tilde{u}_i^{\top}(I-\Pi_{\Psi})\tilde{u}_i\right) = \sum_{i = 1}^k \text{Tr} \left(\tilde{u}_i\tilde{u}_i^{\top}(I-\Pi_{\Psi})\right) \\
    &= \text{Tr}\left(\sum_{i = 1}^k \tilde{u}_i\tilde{u}_i^{\top}(I-\Pi_{\Psi})\right) = \text{Tr}\left(\Pi_{\tilde{\Psi}}(I-\Pi_{\Psi})\right) \\
    &= \text{Tr}\left(\Tilde{\Psi}\Tilde{\Psi}^{\top}(I - \Psi\Psi^{\top})\right) = \text{Tr}(\Tilde{\Psi}\Tilde{\Psi}^{\top}) - \text{Tr}(\Tilde{\Psi}\Tilde{\Psi}^{\top}\Psi\Psi^{\top}) \\
    &= k - \text{Tr}(\Psi^{\top}\Tilde{\Psi}\Tilde{\Psi}^{\top}\Psi) = k - \|\Tilde{\Psi}^{\top}\Psi\|_F^2 \\
    &= k - \sum_{m = 1}^k  \cos(\theta_m)^2 = \sum_{m = 1}^k  \sin(\theta_m)^2 = \|\mathbf{Sin\Theta}(\mathcal{U}, \mathcal{\tilde{U}})\|_F^2,
\end{aligned}
\end{equation*}
which concludes the proof.
\end{proof}

\gdlphi*
\begin{proof}
    Let $y_i=\Phi^{1/2}\widehat{u}_i$ for $i=1,\dots,k$. Then, by assumption, $y_i^\top y_j = \delta_{ij}$ for all $i,j$ and
    \begin{equation}
        \sum_{i=1}^k y_i^\top\Phi^{1/2}L\Phi^{-1/2}y_i - \sum_{i=1}^k \lambda_i < \epsilon.
    \end{equation}
    Hence, we can apply the Graph Drawing Lemma (Lemma~\ref{prop:gdo_vs_projection_error}) with $A=\Phi^{1/2}L\Phi^{-1/2}$. Since $A$ is similar to $L$, the eigenvalues of $A$ are the eigenvalues of $L$. Moreover, the eigenvectors of $A$ are $\Phi^{1/2}u_1,\dots,\Phi^{1/2}u_{|\mathcal{S}|}$. Note that the matrix having as columns the first $k$ eigenvectors of $A$ is $X\coloneqq\Phi^{1/2}\Psi$.
    Let $Y$ be the matrix having $y_1,\dots,y_k$ as columns and note that $Y = \Phi^{1/2}\widehat\Psi$. 
    Hence, our application of the Graph Drawing Lemma establishes that 
    \begin{equation}
        \|\Pi_{X} - \Pi_{Y}\|_2 < \sqrt{\frac{2\epsilon}{\lambda_{k+1}(L)-\lambda_k(L)}}.
    \end{equation}
    Since $Y$ has orthonormal columns by construction, $\Pi_{Y}=YY^\top= \Phi^{1/2}\widehat\Psi\widehat\Psi^\top\Phi^{1/2}=\Phi^{1/2}\widehat\Psi\widehat\Psi^\top\Phi\Phi^{-1/2}$. 
    Since $A$ is symmetric (cf. $\mathcal{L}$ in the proof of Lemma~\ref{lemma:approx_error}), the columns of $X$ are orthonormal and its projector is:
    \begin{equation}
        \Pi_X = XX^T = \Phi^{1/2}\Psi\Psi^\top\Phi^{1/2} = \Phi^{1/2}\Psi\Psi^\top\Phi\Phi^{-1/2}.
    \end{equation}
    So
    \begin{equation}
        \|\Phi^{1/2}(\Psi\Psi^\top\Phi - \widehat\Psi\widehat\Psi^\top\Phi)\Phi^{-1/2}\|_2 < \sqrt{\frac{2\epsilon}{\lambda_{k+1}(L)-\lambda_k(L)}}.
    \end{equation}
    However, for any matrix $\Delta$,
    \[
        \|\Phi^{1/2}\Delta\Phi^{-1/2}\|_2 = \|\Delta\|_{\Phi}.
    \]
    Finally, $\Psi\Psi^\top\Phi$ and $\widehat\Psi\widehat\Psi^\top\Phi$ are the $\Phi$-weighted least-squares operators, onto $\mathrm{span}(u_1,\dots,u_k)$ and $\mathrm{span}(\widehat u_1,\dots,\widehat u_k)$, respectively. Since $\Psi^\top\Phi\Psi=I$ and $\widehat\Psi^\top\Phi\widehat\Psi=I$, these projectors are:
    \[
        \Pi_\Psi^\Phi=\Psi\Psi^\top\Phi,
        \qquad
        \Pi_{\widehat\Psi}^\Phi=\widehat\Psi\widehat\Psi^\top\Phi.
    \]
\end{proof}


\section{Proofs and Further Remarks of Section \ref{sec:unifying}}\label{app:unifying}

\subsection{Examples of Misreported Laplacian}
\citet{gomezProperLaplacianRepresentation2024}, citing \cite{wuLaplacianRLLearning2018}, report the Laplacian for MDPs with asymmetric state graphs as (in our notation):
\begin{equation}
    L_1 = I - \frac{P+P^\top}{2}.
\end{equation}
This is not a Laplacian since the rows of $\frac{P+P^\top}{2}$ do not sum to one unless $P$ is doubly stochastic. This corresponds to a uniform $\phi$. Luckily, \citet{gomezProperLaplacianRepresentation2024} mostly consider the doubly stochastic setting, as is the one induced by uniform policies on gridworlds. Moreover, they report a correct general formulation in their Appendix.

\citet{TouatiRO23}, also citing \cite{wuLaplacianRLLearning2018}, report the Laplacian as:
\begin{equation}
    L_2 = I - \frac{P\Phi^{-1}+\Phi^{-1} P^\top}{2}.
\end{equation}
This also is not a Laplacian because the rows of the weight matrix do not sum to one. This expression seems to be copied from \cite{wuLaplacianRLLearning2018} overlooking the fact that it is meant to be used in a weighted Hilbert space (where $L_2$ \emph{is} the expression of a Laplacian).

In both cases, the mistake is likely without consequences. As discussed in Section \ref{sec:unifying}, the GDO and its variants as commonly implemented correctly recover the representation originally defined by \citet{wuLaplacianRLLearning2018}. Moreover, all of these expressions are equivalent when $\phi$ is the uniform distribution, a restrictive but common assumption. However, these cases serve as examples of the care needed to handle the original formulation and how our reformulation can help avoid further confusion.

\subsection{Proofs of the Two Propositions}

Let $(\mathcal{S}, \mathcal{F})$ be a complete measurable space, where $\mathcal{F}$ is a $\sigma$-algebra (omitted in the following). Let $\phi$ be a probability measure on $\mathcal{S}$ and $P$ a probability kernel such that $P(\cdot|s)$ is absolutely continuous w.r.t. $\phi$ and $\phi(s)=\int P(s|s')\de\phi(s')$ for all $s\in\mathcal{S}$ (all integrals are over $\mathcal{S}$). This is the familiar setting of an MDP with state space $\mathcal{S}$ and a policy $\pi$, where $P$ and $\phi$ are, respectively, the transition kernel and the stationary distribution induced by $\pi$.

Let $\mathcal{L}^2(\mathcal{S}, \phi)$ denote the set of $\phi$-square-integrable functions, that is, functions $f:\mathcal{S}\to \mathbb{R}$ such that
\begin{equation}
    \int |f(s)|^2\de \phi(s) < \infty.
\end{equation}
Let $\WuH$ denote the Hilbert space obtained by equipping $\mathcal{L}^2(\mathcal{S}, \phi)$ with the following inner product:
\begin{equation}
    \langle f, g\rangle_{\WuH} = \int f(s)g(s)\de \phi(s).
\end{equation}
Given any linear operator $\widetilde{A}:\WuH\to\WuH$, we denote the function obtained by applying $\tilde{A}$ to function $f\in\WuH$ as $\widetilde{A}[f]$.
\citet{wuLaplacianRLLearning2018} define the Laplacian operator on $\WuH$ as:
\begin{equation}
    \WuL[f] = f - \WuD[f],
\end{equation}
where
\begin{equation}
    \WuD[f](s) = \int \left(\frac{1}{2}\frac{\de P (s'|s)}{\de \phi(s')} + \frac{1}{2}\frac{\de P(s|s')}{\de \phi(s)}\right)f(s') \de \phi(s').
\end{equation}

Now let $\mu$ be a $\sigma$-finite measure on $\mathcal{S}$ such that $\phi$ (hence also $P(\cdot|s)$) is absolutely continuous w.r.t $\mu$, denoted $\phi\ll\mu$ in the following. Think of $\mu$ as a reference measure such as the Lebesgue measure for continuous state spaces or the counting measure for discrete ones. Let $\mathcal{H}$ denote $\mathcal{L}^2(\mathcal{S}, \mu)$ equipped with the trivial inner product $\langle f, g\rangle_{\mathcal{H}} = \int f(s)g(s)\de \mu(s)$. We will denote this simply as $\langle f,g\rangle$. Also we denote the function obtained by applying a linear operator $A:\mathcal{H}\to\mathcal{H}$ to function $f\in\mathcal{H}$ simply as $Af$. This will allow us to clearly distinguish operations on $\WuH$ from operations on $\mathcal{H}$ while keeping a compact operator notation.
We define our Laplacian for this general setting as the unique Hilbert-Schmidt operator on $\mathcal{H}$ satisfying:\footnote{$L\Phi$ denotes the composition of $L$ and $\Phi$.}
\begin{equation}\label{eq:def1}
    \Phi L = \Phi - K,
\end{equation}
where
\begin{equation}
    K f (s) = \int\left(\frac{1}{2}\frac{\de P(s'|s)}{\de\mu(s')}\frac{\de\phi(s)}{\de\mu(s)} + \frac{1}{2}\frac{\de P(s|s')}{\de\mu(s)}\frac{\de\phi(s')}{\de\mu(s')}\right)f(s')\de\mu(s'),
\end{equation}
and
\begin{equation}
    \Phi f (s) = \frac{\de \phi(s)}{\de \mu(s)}f(s).
\end{equation}
To see that this is uniquely defines $L$, first note that $\mathrm{ker}(\Phi)=\{0\}$. Indeed, $\Phi f = \int \frac{\de\phi(s)}{\de\mu(s)}f(s)\de\mu(s)=0$ only if $f=0$ $\mu$-almost everywhere, since $\phi\ll\mu$. But then $\Phi L'=\Phi-K$ implies $\Phi(L-L')=0$, which implies $L=L'$ since $\Phi$ is injective. Also notice that $\Phi L$ is self-adjoint since both $\Phi$ and $K$ are.

We are now ready to prove a more general version of Proposition~\ref{prop:equivalence}. We remark that this equivalence is already implicit in previous work, for example~\citep{AhmedBG25}.

\begin{lemma}\label{lem:equivalence}
    Let $f,g\in \mathcal{H}\cap\WuH$. Then:
    \begin{align*}
        \langle f, g\rangle_{\WuH} = \langle f, \Phi g\rangle &&\text{and} &&\langle f, \WuL[g]\rangle_{\WuH} = \langle f, \Phi L g\rangle.
    \end{align*}
\end{lemma}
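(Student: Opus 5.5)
The plan is to verify both identities by direct computation, rewriting each $\WuH$-integral against $\phi$ as an integral against $\mu$ through the Radon--Nikodym derivative $\frac{\de\phi}{\de\mu}$. Write $\varphi := \frac{\de\phi}{\de\mu}$, which exists because $\phi\ll\mu$ and $\mu$ is $\sigma$-finite.

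For the first identity, the change of measure $\de\phi(s)=\varphi(s)\de\mu(s)$ gives
\[
\langle f,g\rangle_{\WuH}=\int f(s)g(s)\varphi(s)\de\mu(s)=\langle f,\Phi g\rangle .
\]
This uses only the definition of $\Phi$. The assumption $f,g\in\mathcal{H}\cap\WuH$ together with Cauchy--Schwarz in $\WuH$ ensures the integral is finite.

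For the second identity, I first note that $\langle f,\Phi Lg\rangle=\langle f,\Phi g\rangle-\langle f,Kg\rangle$ by the definition \eqref{eq:def1}. Since $\WuL[g]=g-\WuD[g]$, and the $g$-terms agree by the first identity, it suffices to show $\langle f,\WuD[g]\rangle_{\WuH}=\langle f,Kg\rangle$. I expand the left side as a double integral:
\[
\int\!\!\int f(s)\Big(\tfrac12\tfrac{\de P(s'|s)}{\de\phi(s')}+\tfrac12\tfrac{\de P(s|s')}{\de\phi(s)}\Big)g(s')\,\de\phi(s')\,\de\phi(s).
\]
In this expression I replace $\de\phi(s')\de\phi(s)$ by $\varphi(s')\varphi(s)\de\mu(s')\de\mu(s)$. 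I then apply the chain rule for Radon--Nikodym derivatives, $\frac{\de P(s'|s)}{\de\phi(s')}\varphi(s')=\frac{\de P(s'|s)}{\de\mu(s')}$, which is valid $\mu$-a.e. because $P(\cdot|s)\ll\phi\ll\mu$. After this substitution the first summand becomes $\frac12\frac{\de P(s'|s)}{\de\mu(s')}\varphi(s)$ and the second becomes $\frac12\frac{\de P(s|s')}{\de\mu(s)}\varphi(s')$. These are exactly the two terms of the kernel of $K$, so the double integral equals $\int f(s)\,Kg(s)\,\de\mu(s)$.

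The main obstacle is justifying the interchange of integration order (Fubini/Tonelli) and the well-definedness of the double integrals. For this I would bound the absolute integrand in each term separately. For the first term, Cauchy--Schwarz gives
\[
\int\!\!\int |f(s)||g(s')|\,\de P(s'|s)\,\de\phi(s)\le \|f\|_{\WuH}\Big(\int\!\!\int g(s')^2\,\de P(s'|s)\,\de\phi(s)\Big)^{1/2}.
\]
By stationarity of $\phi$ under $P$, the inner integral equals $\int g^2\,\de\phi=\|g\|_{\WuH}^2$. The second term is handled symmetrically with $f$ and $g$ swapped. Hence the integrals are absolutely convergent and Fubini applies. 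Measurability of the kernel densities jointly in $(s,s')$ I would take as implicit in the standing assumptions. Finally, specializing $\mu$ to the counting measure on a finite $\mathcal{S}$ recovers Proposition~\ref{prop:equivalence}, with $L$ as in \eqref{def:LaplacianWu}.
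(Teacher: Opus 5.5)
Your proposal is correct and follows essentially the same route as the paper. The first identity comes from a change of measure via $\frac{\de\phi}{\de\mu}$; for the second, you split $\WuD$ into its two summands and use the chain rule $\frac{\de P(s'|s)}{\de\phi(s')}\frac{\de\phi(s')}{\de\mu(s')}=\frac{\de P(s'|s)}{\de\mu(s')}$ to recover the kernel of $K$, and your Cauchy--Schwarz/stationarity bound justifying Fubini makes explicit a step the paper only invokes.
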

\begin{proof}
The first equivalence follows from Radon-Nikodym's theorem:
\begin{align}
    \langle f, g\rangle_{\WuH} &= {\int} f(s) g(s) {\de \phi(s)} \\
    &=  {\int} f(s) g(s) {\frac{\de\phi(s)}{\de\mu(s)}\de\mu(s)}\\
    &= \langle f, \Phi g\rangle.
\end{align}
As for the second:
    \begin{align}
         \langle f, \WuL[g]\rangle_{\WuH} &= {\int}   f(s) \left(g(s) - \WuD[g](s)\right)    {\de\phi(s)} \\
         &= {\int}   f(s) \left(g(s) - \WuD[g](s)\right)    {\frac{\de \phi(s)}{\de\mu(s)}\de\mu(s)} \\
         &= \int f(s)g(s)\frac{\de \phi(s)}{\de\mu(s)}\de\mu(s) - \int   f(s) \WuD[g](s)\frac{\de \phi(s)}{\de\mu(s)}\de\mu(s) \\
         &= \int f(s)g(s)\frac{\de \phi(s)}{\de\mu(s)}\de\mu(s) - \int   f(s) \WuD[g](s)\frac{\de \phi(s)}{\de\mu(s)}\de\mu(s) \\
         &= \langle f, \Phi g\rangle - \int   f(s) \WuD[g](s)\frac{\de \phi(s)}{\de\mu(s)}\de\mu(s),\label{eq:pp1}
    \end{align}
    where the second equality is by Radon-Nikodym's theorem. Let us focus on the second term. By multiple applications of Fubini's and Radon-Nikodym's theorems and the chain rule of the Radon-Nikodym derivative:
    \begin{align}
        \int   f(s) \WuD[g](s)\frac{\de \phi(s)}{\de\mu(s)}\de\mu(s) & = 
        \int   f(s) \left({\int} \left(\frac{1}{2}\frac{\de P (s'|s)}{\de \phi(s')} + \frac{1}{2}\frac{\de P(s|s')}{\de \phi(s)}\right)g(s') {\de \phi(s')}\right)\frac{\de \phi(s)}{\de\mu(s)}\de\mu(s) \\
        &= \int f(s) \left({\int} \left(\frac{1}{2}\frac{\de P (s'|s)}{\de \phi(s')} + \frac{1}{2}\frac{\de P(s|s')}{\de \phi(s)}\right)g(s') {\frac{\de \phi(s')}{\de\mu(s')}\de\mu(s')}\right)\frac{\de \phi(s)}{\de\mu(s)}\de\mu(s) \\
        &= a+b,\label{eq:pp2}
    \end{align}
    where
    \begin{align}
        a &= \frac{1}{2}\int f(s) \left(\int {\frac{\de P (s'|s)}{\de \phi(s')} \frac{\de \phi(s')}{\de\mu(s')}} g(s')\de\mu(s')\right)\frac{\de \phi(s)}{\de\mu(s)}\de\mu(s) \\
        &= \frac{1}{2}\int f(s) \left(\int {\frac{\de P (s'|s)}{\de\mu(s')}} g(s')\de\mu(s')\right)\frac{\de \phi(s)}{\de\mu(s)}\de\mu(s) \\
        &= \int f(s)\left(\int  \frac{1}{2}\frac{\de P (s'|s)}{\de\mu(s')} \frac{\de \phi(s)}{\de\mu(s)}g(s')\de\mu(s')\right)\de\mu(s), \\
    \end{align}
    while
    \begin{align}
        b &= \frac{1}{2}\int f(s) \left(\int \frac{\de P(s|s')}{\de \phi(s)}g(s')\frac{\de \phi(s')}{\de\mu(s')}\de\mu(s')\right)\frac{\de \phi(s)} {\de\mu(s)}\de\mu(s) \\
        &= \frac{1}{2}\int\int f(s) {\frac{\de P(s|s')}{\de \phi(s)}\frac{\de \phi(s)} {\de\mu(s)}}\frac{\de \phi(s')}{\de\mu(s')}g(s')\de\mu(s')\de\mu(s) \\
        &= \frac{1}{2}\int\int f(s) {\frac{\de P(s|s')}{\de\mu(s)}}\frac{\de \phi(s')}{\de\mu(s')}g(s')\de\mu(s')\de\mu(s) \\
        &= \int f(s)\left(\int  \frac{1}{2}\frac{\de P(s|s')}{\de\mu(s)}\frac{\de \phi(s')}{\de\mu(s')}g(s')\de\mu(s')\right)\de\mu(s).
    \end{align}
    Hence:
    \begin{equation}
        a+b = \int f(s)Kg(s) \de \mu(s) = \langle f, K g\rangle,
    \end{equation}
    which combined with Equations \eqref{eq:pp1} and \eqref{eq:pp2} gives\footnote{Since $\Phi$ is self-adjoint, this is also equal to $\langle \Phi f, L g\rangle$.}
    \begin{equation}
         \langle f, \WuL[g]\rangle_{\WuH} =  \langle f, (\Phi - K) g\rangle = \langle f, \Phi L g\rangle,
    \end{equation}
    by definition of $L$. 
\end{proof}

\paragraph{Proof of Proposition \ref{prop:equivalence}.} When $\mathcal{S}$ is finite ($|\mathcal{S}|=d$) and $\mu$ is the counting measure, $\mathcal{H}$ is just $\mathbb{R}^d$ equipped with the Euclidean dot product and the Laplacian from Equation
\eqref{eq:def1} reduces to the one defined in Equation \eqref{def:LaplacianWu} since
\begin{align}
    K = \frac{\Phi P + P^T\Phi}{2} &&\implies && \Phi L \coloneqq \Phi - K = \Phi -  \frac{\Phi P + P^T\Phi}{2},
\end{align}
hence
\begin{equation}
    L = I - \Phi^{-1}K = I - \frac{P + \Phi^{-1} P^T\Phi}{2},
\end{equation}
since $\Phi$ is invertible. So Proposition \ref{prop:equivalence} follows from Lemma \ref{lem:equivalence}. \hfill \qed

\equivalence*
\begin{proof}
    With a change of variable $y_i=\Phi^{1/2}x_i$, we obtain the equivalent program:
    \begin{equation}\label{prog:3}
    \begin{aligned}
        &\min\limits_{x_1,\dots,x_k} &&\sum_{i=1}^k y_i^\top \Phi^{1/2} L\Phi^{-1/2}y_i \\
        &\mathrm{s. t.} &&y_i^\top y_j= \delta_{ij}, &&i,j\in[d].
    \end{aligned}
\end{equation}
Its solutions $y_1^*,\dots,y_k^*$ are the first $k$ eigenvectors of $A=\Phi^{1/2} L\Phi^{-1/2}$ corresponding to the $k$ smallest eigenvalues. Since $A$ is similar to $L$, they have the same eigenvalues and, if $u_1,\dots,u_d$ are the eigenvectors of $L$, the eigenvectors of $A$ are $y^*_i=\Phi^{1/2}u_i$ for $i=1,\dots,d$. Reverting the change of variables, the solutions of the original program are $x_i^*=\Phi^{-1/2}y_i^*=u_i$ for $i=1,\dots,k$. The change of variable does not change the optimal value, which in both cases is $\sum_{i=1}^k\lambda_i$, where $\lambda_1,\dots,\lambda_k$ are the $k$ smallest eigenvalues of $L$ (and $A$).
\end{proof}


\section{Auxiliary Results}

\begin{lemma}[Projector distances and principal angles]
\label{lemma:projector_distance}
Let \(\mathcal X,\mathcal Y\subseteq \mathbb R^d\) be two subspaces with
\(\dim(\mathcal X)=\dim(\mathcal Y)=m\). Let
\(\Pi_{\mathcal X}\) and \(\Pi_{\mathcal Y}\) be the Euclidean orthogonal
projectors onto \(\mathcal X\) and \(\mathcal Y\), respectively, and let
\(\theta_1,\ldots,\theta_m\) be the principal angles between
\(\mathcal X\) and \(\mathcal Y\). Then
\[
\|\Pi_{\mathcal X}-\Pi_{\mathcal Y}\|_F^2
=
2\sum_{i=1}^m \sin^2(\theta_i),
\]
or equivalently,
\[
\|\mathbf{Sin\Theta}(\mathcal X,\mathcal Y)\|_F
=
\frac{\|\Pi_{\mathcal X}-\Pi_{\mathcal Y}\|_F}{\sqrt 2}.
\]
Moreover,
\[
\|\Pi_{\mathcal X}-\Pi_{\mathcal Y}\|_2
=
\|\mathbf{Sin\Theta}(\mathcal X,\mathcal Y)\|_2
=
\|(I-\Pi_{\mathcal Y})\Pi_{\mathcal X}\|_2
=
\|(I-\Pi_{\mathcal X})\Pi_{\mathcal Y}\|_2 .
\]
\end{lemma}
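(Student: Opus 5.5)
The plan is to reduce everything to the principal-angle description of the pair $(\mathcal X,\mathcal Y)$. I would choose orthonormal bases $X\in\mathbb R^{d\times m}$ of $\mathcal X$ and $Y\in\mathbb R^{d\times m}$ of $\mathcal Y$, so that $\Pi_{\mathcal X}=XX^\top$ and $\Pi_{\mathcal Y}=YY^\top$. Let $X^\top Y=U\,\mathrm{diag}(\cos\theta_1,\ldots,\cos\theta_m)V^\top$ be an SVD; this is the definition of the principal angles. Rotating the bases, $X\leftarrow XU$ and $Y\leftarrow YV$, I may assume $X^\top Y=C:=\mathrm{diag}(\cos\theta_i)$.

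\textbf{Frobenius identity.} Expanding the square and using idempotence and symmetry of the projectors,
\[
\|\Pi_{\mathcal X}-\Pi_{\mathcal Y}\|_F^2=\mathrm{tr}(\Pi_{\mathcal X})+\mathrm{tr}(\Pi_{\mathcal Y})-2\,\mathrm{tr}(\Pi_{\mathcal X}\Pi_{\mathcal Y}).
\]
Since $\mathrm{tr}(\Pi_{\mathcal X}\Pi_{\mathcal Y})=\|X^\top Y\|_F^2=\sum_i\cos^2\theta_i$, this equals $2m-2\sum_i\cos^2\theta_i=2\sum_i\sin^2\theta_i$. This is the same trace computation already used in Step 2 of the Graph Drawing Lemma, and it gives the first two displayed claims at once.

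\textbf{Spectral norm identities.} I would first show $\|(I-\Pi_{\mathcal Y})\Pi_{\mathcal X}\|_2=\max_i\sin\theta_i$. The Gram matrix is
\[
\bigl((I-\Pi_{\mathcal Y})X\bigr)^\top(I-\Pi_{\mathcal Y})X=I-X^\top Y Y^\top X=I-C^2,
\]
so the singular values of $(I-\Pi_{\mathcal Y})X$ are the $\sin\theta_i$. Because $X$ has orthonormal columns, $(I-\Pi_{\mathcal Y})\Pi_{\mathcal X}=\bigl((I-\Pi_{\mathcal Y})X\bigr)X^\top$ has the same nonzero singular values. Swapping the roles of $\mathcal X$ and $\mathcal Y$ (the matrix $C$ is symmetric under the swap) gives the same value for $\|(I-\Pi_{\mathcal X})\Pi_{\mathcal Y}\|_2$, which is also $\|\mathbf{Sin\Theta}\|_2$ by definition.

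\textbf{Main obstacle: $\|\Pi_{\mathcal X}-\Pi_{\mathcal Y}\|_2=\max_i\sin\theta_i$.} I would use the canonical decomposition obtained by pairing the columns: for each $i$, set $x_i=Xe_i$ and $y_i=Ye_i$. These satisfy $\langle x_i,y_j\rangle=\delta_{ij}\cos\theta_i$ together with $\langle x_i,x_j\rangle=\langle y_i,y_j\rangle=\delta_{ij}$, so the planes $\mathrm{span}(x_i,y_i)$ are mutually orthogonal. On each plane, $\Pi_{\mathcal X}-\Pi_{\mathcal Y}$ acts as $x_ix_i^\top-y_iy_i^\top$. This is the difference of two rank-one projectors onto lines at angle $\theta_i$, whose eigenvalues are $\pm\sin\theta_i$; this follows from a direct $2\times2$ computation in an orthonormal basis of the plane.

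On the orthogonal complement of all these planes, both projectors vanish. Hence the difference is block diagonal, and its spectral norm is $\max_i\sin\theta_i$. The delicate points are the cases $\theta_i=0$, where the plane degenerates to a line and the block is zero, and the verification that the planes are mutually orthogonal. Both follow from the diagonal form of $X^\top Y$ and the orthonormality of the columns.

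An alternative route avoids the plane decomposition. Write $\Pi_{\mathcal X}-\Pi_{\mathcal Y}=\Pi_{\mathcal X}(I-\Pi_{\mathcal Y})-(I-\Pi_{\mathcal X})\Pi_{\mathcal Y}$. The two summands have mutually orthogonal ranges and mutually orthogonal row spaces. The norm of the sum is therefore the maximum of the two norms, each of which equals $\max_i\sin\theta_i$ by the previous step (using $\|\Pi_{\mathcal X}(I-\Pi_{\mathcal Y})\|_2=\|(I-\Pi_{\mathcal Y})\Pi_{\mathcal X}\|_2$ by transposition). This closes the chain of equalities.
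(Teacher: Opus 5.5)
Your argument is correct. It is also more than the paper provides: the paper's proof of this lemma is only a pointer to Edelman, Arias and Smith (1998, Sec.~4.3), where these identities are read off from the principal-vector (CS-decomposition) normal form of a pair of subspaces.

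You build that normal form yourself from the SVD of $X^\top Y$ and then check each identity directly:
\begin{itemize}
\item the Frobenius identity follows from the trace expansion, the same computation as Step~2 of the Graph Drawing Lemma;
\item the two one-sided norms $\|(I-\Pi_{\mathcal Y})\Pi_{\mathcal X}\|_2$ and $\|(I-\Pi_{\mathcal X})\Pi_{\mathcal Y}\|_2$ follow from the Gram matrix $I-C^2$;
\item the two-sided norm $\|\Pi_{\mathcal X}-\Pi_{\mathcal Y}\|_2$ follows either from the orthogonal $2\times 2$ block decomposition with eigenvalues $\pm\sin\theta_i$, or from the splitting $\Pi_{\mathcal X}-\Pi_{\mathcal Y}=\Pi_{\mathcal X}(I-\Pi_{\mathcal Y})-(I-\Pi_{\mathcal X})\Pi_{\mathcal Y}$.
\end{itemize}
Both routes for the last step are sound. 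For the splitting, with $A$ and $B$ the two summands, $A^\top B=0$ gives $(A+B)^\top(A+B)=A^\top A+B^\top B$, and $AB^\top=0$ gives $A^\top A\,B^\top B=0$, so the norm is the maximum of the two. The degenerate case $\theta_i=0$, where the plane collapses to a line and the block vanishes, is handled automatically by the diagonal form of $X^\top Y$.

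Beyond removing the dependence on an external reference, your version makes $\|\Pi_{\mathcal X}-\Pi_{\mathcal Y}\|_2=\max_i\sin\theta_i\le\|\mathbf{Sin\Theta}\|_F$ explicit. Used in Step~2 of the Graph Drawing Lemma, this would remove the factor $2$ under the square root in that bound. The paper loses that factor by passing through $\tfrac12\|\Pi_\Psi-\Pi_{\tilde\Psi}\|_F^2\ge\tfrac12\|\Pi_\Psi-\Pi_{\tilde\Psi}\|_2^2$ instead of using the spectral identity in the very lemma it cites.
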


\begin{proof}
See \citet[Sec.~4.3]{edelman1998geometry}.
\end{proof}

\section{Visualizations}
To provide a visual intuition of the role of the MDP connectivity on $\lambda_2$ we set up a 30x30 gridworld environment with an increasing number of walls. Each environment is accompanied by the corresponding induced MDP transition graph, plotted using NetworkX \citep{hagberg2020networkx}.
The layout of the graph is generated using spectral graph drawing \citep{korenSpectralGraphDrawing2003} to highlight connectivity. For each plot the value of the algebraic connectivity of the graph, $\lambda_2$ is provided. As expected, in \cref{fig:vis_1}, \ref{fig:vis_3}, \ref{fig:vis_6} and \ref{fig:vis_9} we observe a decreasing value of $\lambda_2$ as the number of walls increases. 
\label{app:visualizations}
\begin{figure}[H]
    \centering
    \includegraphics[width=0.9\linewidth]{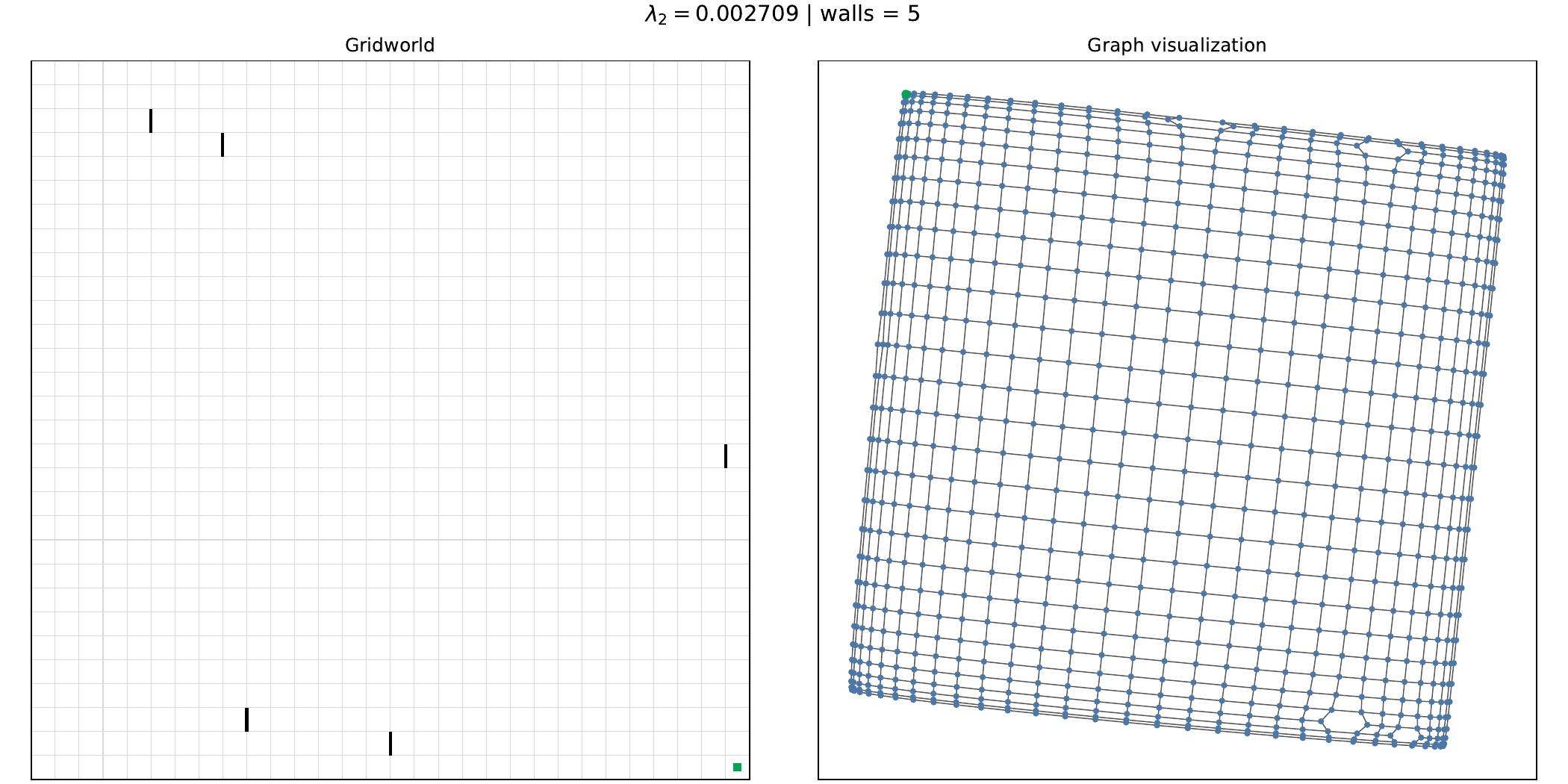}
    \caption{30x30 gridworld with 5 walls with its corresponding graph visualization}
    \label{fig:vis_1}
\end{figure}
\begin{figure}[H]
    \centering
    \includegraphics[width=0.9\linewidth]{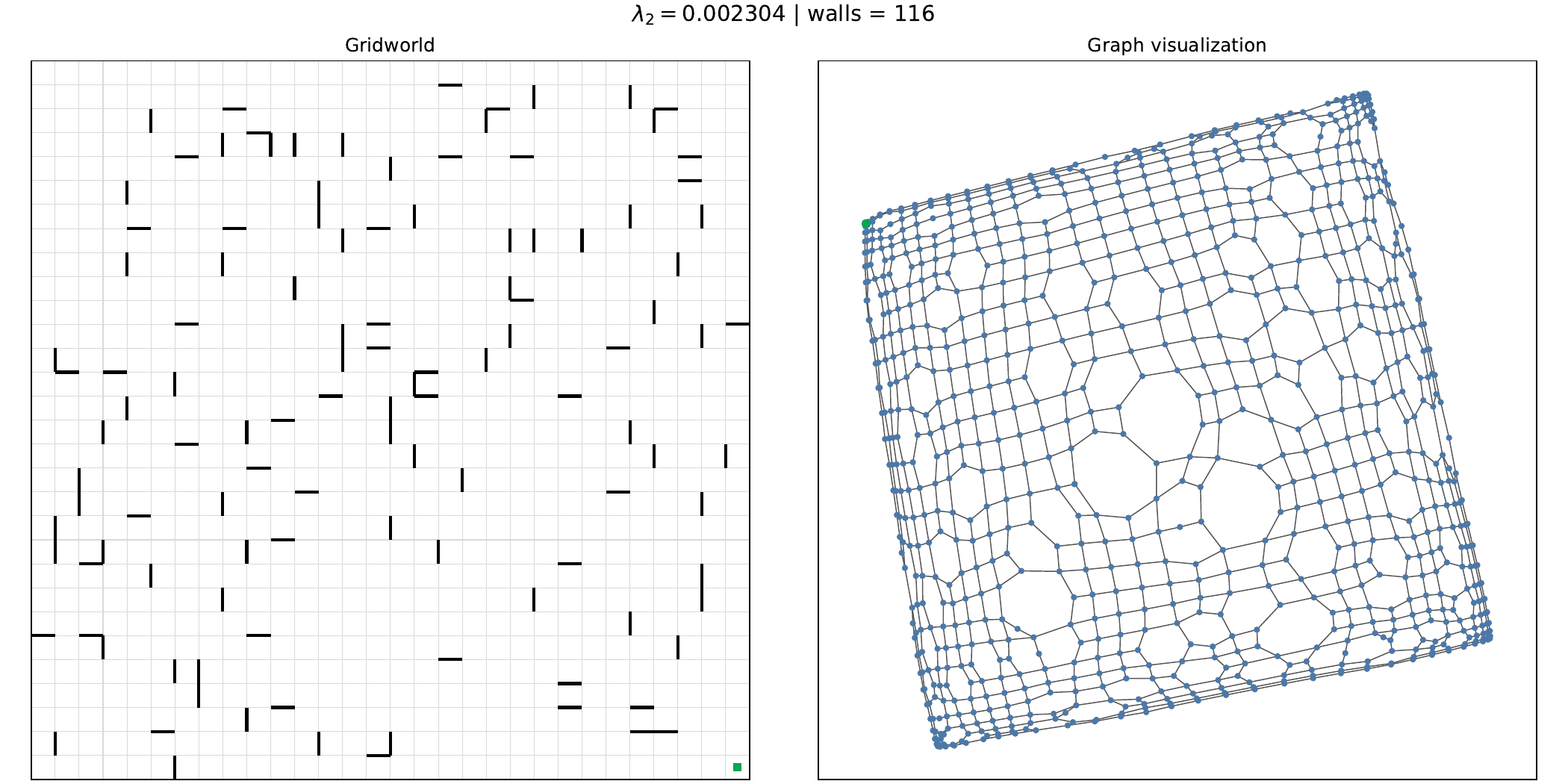}
    \caption{30x30 gridworld with 116 walls with its corresponding graph visualization}
    \label{fig:vis_3}
\end{figure}
\begin{figure}[H]
    \centering
    \includegraphics[width=0.9\linewidth]{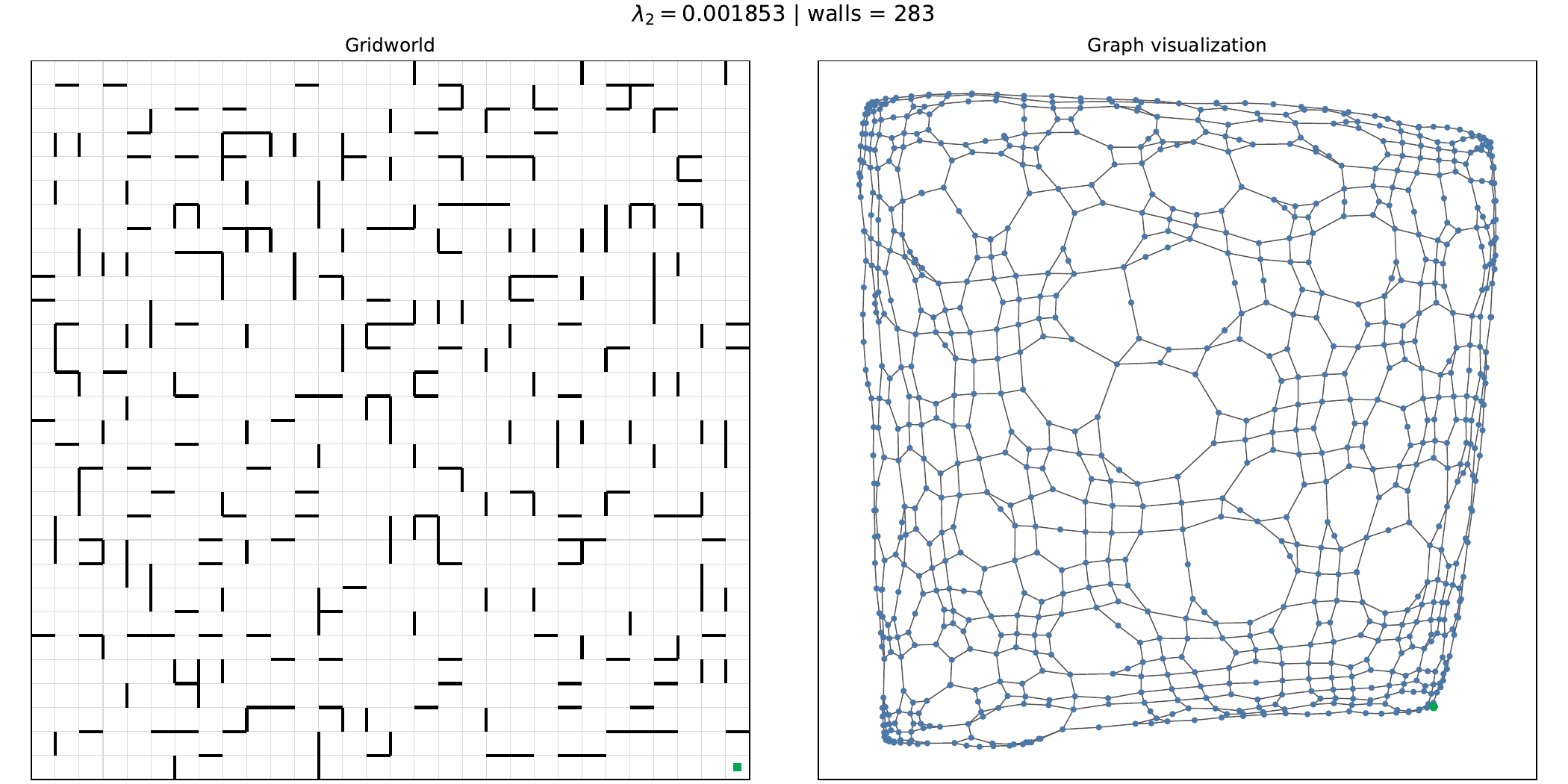}
    \caption{30x30 gridworld with 283 walls with its corresponding graph visualization}
    \label{fig:vis_6}
\end{figure}
\begin{figure}[H]
    \centering
    \includegraphics[width=0.9\linewidth]{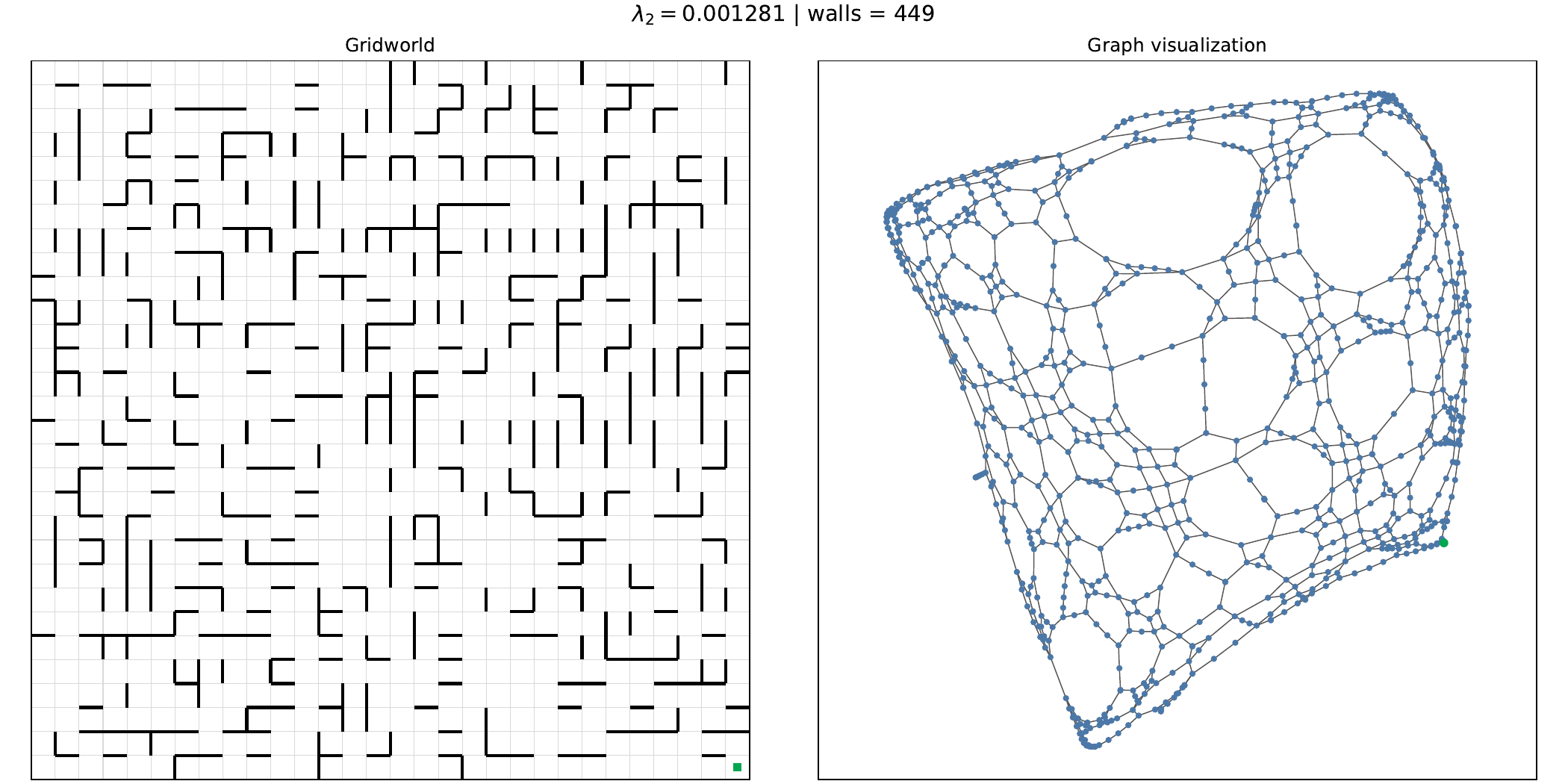}
    \caption{30x30 gridworld with 449 walls with its corresponding graph visualization}
    \label{fig:vis_9}
\end{figure}
\section{Additional Simulations}
\label{app:additional_simulations}

In this section, we report additional simulations complementing the experiments in the main text. 
The goal is to test whether the observed relation between graph connectivity and approximation error persists beyond the deterministic gridworld setting considered in \cref{sec:numerical_simulations}. 
Overall, the additional results confirm the same qualitative behavior: environments with larger algebraic connectivity yield smaller analytical and empirical errors.

\subsection{Simulations on Stochastic Environments}
\label{app:stochastic_environments}

We first evaluate the effect of stochastic transitions. 
Starting from the deterministic gridworld dynamics, we define a noisy transition kernel $P_\eta$ as follows: with probability $1-\eta$, the transition follows the original deterministic dynamics, while with probability $\eta$, the next state is sampled uniformly at random, independently of the selected action. 
Thus, $\eta=0$ recovers the deterministic environment, whereas larger values of $\eta$ induce increasingly connected transition graphs.

Figure~\ref{fig:stochastic_envs} reports the results for different values of $\eta$. 
As the noise level increases, the algebraic connectivity $\lambda_2$ increases, while both the analytical error $\varepsilon_{\mathrm{ana}}$ and the GDO-based error $\varepsilon_{\mathrm{GDO}}$ decrease. 
This is consistent with \cref{thm:main}: better connected transition graphs lead to lower approximation errors. The variability of the error across the five GDO seeds is at least one order of magnitude smaller than the error itself, which is why the 10–90 band is not visible in the figure.
For reference, Table~\ref{tab:error_values} reports the corresponding values of
$\lambda_2$, $\varepsilon_{\mathrm{ana}}$, and $\varepsilon_{\mathrm{GDO}}$.

\begin{table}[H]
\centering
\caption{Values of $\lambda_2$, analytical error $\varepsilon_{\mathrm{ana}}$,
and GDO error $\varepsilon_{\mathrm{GDO}}$ for different values of $\eta$.}
\label{tab:error_values}
\begin{tabular}{c c c c}
\hline
$\eta$ & $\lambda_2$ & $\varepsilon_{\mathrm{ana}}$ & $\varepsilon_{\mathrm{GDO}}$ \\
\hline
$0$   & $0.0109$ & $0.172$  & $1.81 \pm 0.0244$ \\
$0.1$ & $0.110$  & $0.124$  & $0.323 \pm 0.0155$ \\
$0.2$ & $0.209$  & $0.0928$ & $0.187 \pm 0.00523$ \\
$0.3$ & $0.308$  & $0.0706$ & $0.128 \pm 0.00277$ \\
$0.4$ & $0.407$  & $0.0538$ & $0.0924 \pm 0.000831$ \\
$0.5$ & $0.506$  & $0.0405$ & $0.0662 \pm 0.000249$ \\
$0.6$ & $0.604$  & $0.0296$ & $0.0470 \pm 0.000413$ \\
$0.7$ & $0.703$  & $0.0205$ & $0.0314 \pm 0.000408$ \\
$0.8$ & $0.802$  & $0.0127$ & $0.0191 \pm 2.29 \times 10^{-5}$ \\
$0.9$ & $0.901$  & $0.00594$ & $0.00876 \pm 2.93 \times 10^{-5}$ \\
\hline
\end{tabular}
\end{table}

\begin{figure}[H]
    \centering
    \includegraphics[width=0.7\linewidth]{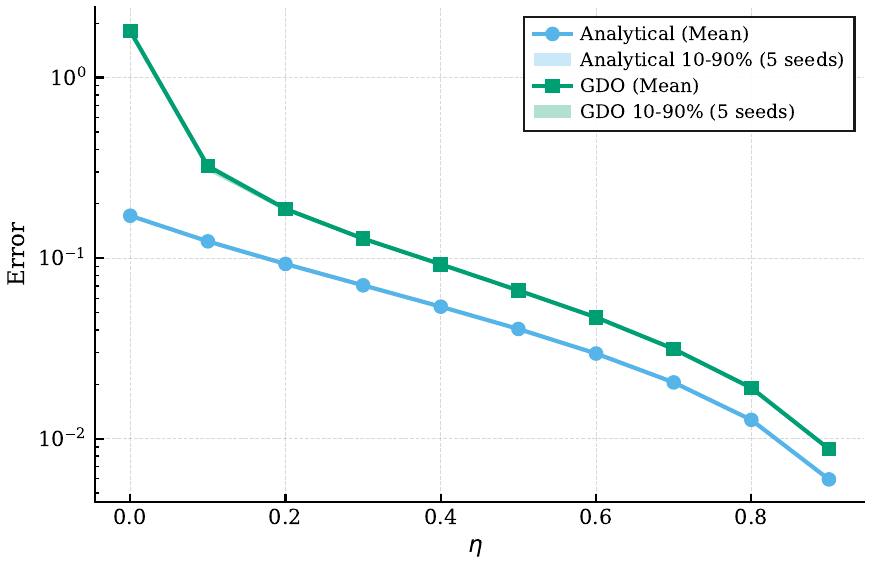}
    \caption{
    Effect of stochastic transitions on connectivity and approximation error. 
    Increasing the stochasticity parameter $\eta$ improves the algebraic connectivity $\lambda_2$ and reduces both the analytical error $\varepsilon_{\mathrm{ana}}$ and the GDO-based error $\varepsilon_{\mathrm{GDO}}$.
    }
    \label{fig:stochastic_envs}
\end{figure}

\subsection{Simulations on Different Tasks}
\label{app:different_tasks}

We also evaluate whether the same trend holds across different downstream policy-evaluation tasks. 
We consider a deterministic 10x10 gridworld with four actions and a uniform behavior policy. 
The reward is $+1$ at the goal state and $-1$ elsewhere. 
We vary the goal location, considering the four corners of the grid and the center, while keeping the representation dimension fixed. We perform policy-evaluation using both analytical and GDO-based representation truncating at $k=20$ eigenvectors and compute the error as the number of walls increases.
As shown in \cref{fig:different_tasks_gdo} and \ref{fig:different_tasks_analytical}, as the MDP connectivity reduces (higher number of walls), the approximation error calculated using $\Phi$-norm, increases for both representations.

\begin{figure}[H]
    \centering
    \includegraphics[width=0.7\linewidth]{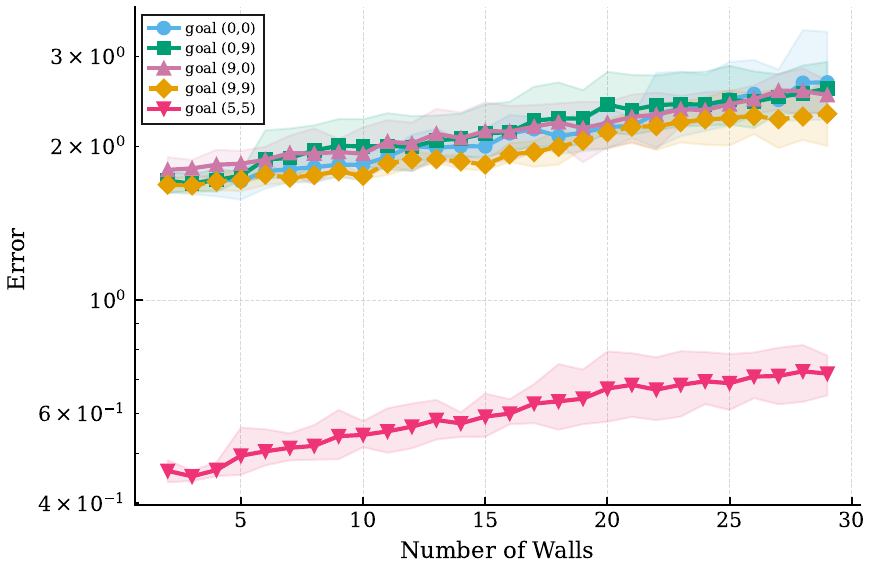}
    \caption{
    Policy-evaluation errors across different reward tasks when the representation is computed via GDO. Representation is truncated at $k=20$, on a $10\times10$ grid, results averaged over 5 seeds. 
    Each task corresponds to a different goal location. 
    Across tasks, better connected transition graphs lead to smaller GDO-based approximation errors. In most of the tasks GDO exhibit higher error compared to analytical, but it preserves the same increasing trend as the connectivity of the graph shrinks due to walls.
    }
    \label{fig:different_tasks_gdo}
\end{figure}

\begin{figure}[H]
    \centering
    \includegraphics[width=0.7\linewidth]{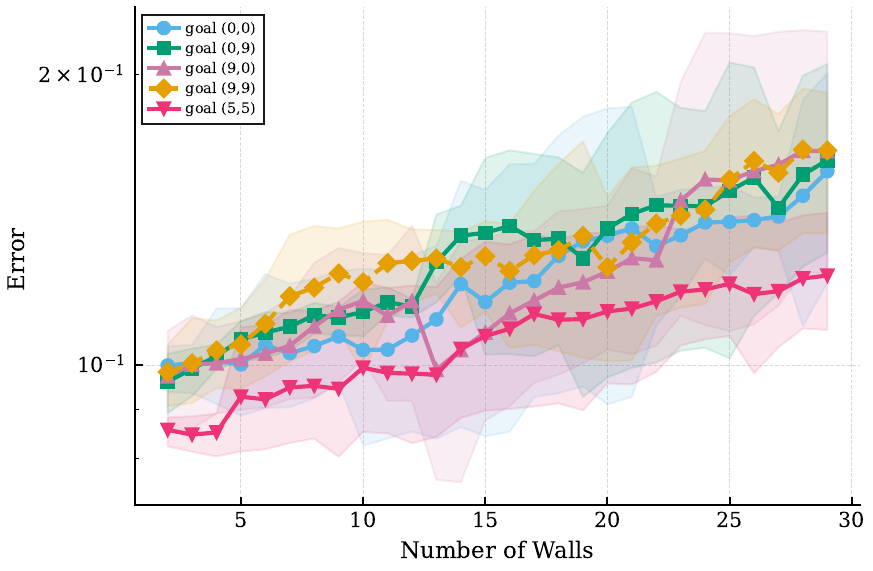}
    \caption{
    Policy-evaluation errors across different reward tasks, using the analytical representation. Representation is truncated at $k=20$, on a $10\times10$ grid, results averaged over 5 seeds. Each task corresponds to a different goal location. Across tasks, better connected transition graphs lead to smaller analytical approximation errors.
    }
    \label{fig:different_tasks_analytical}
\end{figure}

\end{document}